\def\set@curr@file#1{\def\@curr@file{#1}} 
\newtheorem{rules}{Do-Calculus Rule}
\begin{document}

\title[SCM to Estimate Effect of Oxygen Therapy]{Structural Causal Model with Expert Augmented Knowledge to Estimate the Effect of Oxygen Therapy on Mortality in the ICU}

\author{\Name {Md Osman Gani} \Email {mogani@umbc.edu} \\
      \addr Information Systems\\
      University of Maryland, Baltimore County\\
      Baltimore, Maryland, USA
      \AND
      \Name {Shravan Kethireddy} \Email {shravan.kethireddy@nghs.com}\\
      \addr Critical Care, Northeast Georgia Health System\\
      Georgia, USA
      \AND
      \Name {Marvi Bikak} \Email {marvi.bikak@gmail.com}\\
      \addr Critical Care, Palos Health\\
      Illinois, USA
      \AND
      \Name {Paul Griffin} \Email {pmg14@psu.edu} \\
      \addr Department of Industrial and Manufacturing Engineering\\
      Penn State University\\
      University Park, Pennsylvania, USA
      \AND
      \Name {Mohammad Adibuzzaman} \Email {madibuzz@purdue.edu} \\
      \addr Regenstrief Center for Healthcare Engineering\\
      Purdue University\\
      West Lafayette, Indiana, USA
      }
\maketitle


\newcommand\independent{\protect\mathpalette{\protect\independenT}{\perp}}
\def\independenT#1#2{\mathrel{\rlap{$#1#2$}\mkern2mu{#1#2}}}

\begin{abstract}
    Recent advances in causal inference techniques, more specifically, in the theory of structural causal models, provide the framework for identification of causal effects from observational data in the cases where the causal graph is identifiable, i.e., the data generating mechanism can be recovered from the joint distribution. However, no such studies have been done to demonstrate this concept with a clinical example. We present a complete framework to estimate the causal effect from observational data by augmenting expert knowledge in the model development phase and with a practical clinical application. Our clinical application entails a timely and important research question, i.e., the effect of oxygen therapy intervention in the intensive care unit (ICU); the result of this project is useful in a variety of disease conditions, including severe acute respiratory syndrome coronavirus-2 (SARS-CoV-2) patients in the ICU. We used data from the MIMIC III database, a standard database in the machine learning community that contains 58,976 admissions from an ICU in Boston, MA, for estimating the oxygen therapy effect on morality. We also identified the covariate-specific effect to oxygen therapy from the model for more personalized intervention. 
\end{abstract}

\begin{keywords}
  Structure Causal Model, Causal inference, Oxygen therapy, Expert Augmented Knowledge, Critical Care
\end{keywords}

\section{Introduction}
    Since the 1960s, randomized controlled trials (RCTs) are considered the gold standard to identify causation by regulatory bodies such as the US Food and Drug Administration (US FDA) and by the clinical communities \citep{greene2012reform}. The key idea behind RCT is that, by random assignment of treatment or interventions, the confounding bias, i.e., the bias due to the assignment of treatment or presence of other variables, including the unobserved confounders, can be accounted for in the estimand. Despite the strength of RCTs to identify causation, RCTs are increasingly considered time-consuming, costly, and often infeasible for safety and efficacy reasons \citep{frieden2017evidence}. Furthermore, RCTs can be plagued with selection bias due to the strict eligibility criteria, or transportability bias, due to the study population being different from the target population. At the other end of the spectrum, advances in the technology and the adoption of computerized systems in routine healthcare has enabled the collection and curation of large volumes of data, such as electronic health records (EHR), during routine care, albeit with the presence of confounding biases. Therefore, researchers are increasingly trying to address the challenge of identifying causal relationships from large but biased observational datasets. 

Recent advances in the theory of causal inference, more specifically, structural causal models (SCM) provide the framework for adjusting for different kinds of biases such as confounding, transportability, or selection bias under the general theory of data fusion with identifiability formulas, i.e., identification of causal effects,  such as backdoor and front-door adjustment, in many cases even if the confounders are unobserved \citep{bareinboim2016causal,pearl2016causal, pearl2009causality}. However, this approach requires developing a graphical representation of the causal relationship between variables of interest with meticulous scrutiny, using structure learning algorithms as well as with expert knowledge from clinicians or existing literature \citep{CausalStrucureLearning,lederer2019control}. While there are other clinical use cases with SCM \citep{schluter2019impact, adegunsoye2019computed, vitolo2018modeling, arendt2016placental, hernandez2008causal, schisterman2005lipid}, most of the literature developed the causal graph with domain expertise only. On the other hand, there are many algorithms that can learn the graphical structure from data with different assumptions. Consequently, the resultant causal graph can vary significantly based on the underlying assumptions ~\citep{CausalStrucureLearning}. There have been some recent works that tried to bridge between these two approaches: expert knowledge and automated data-driven graph development \citep{nordon2019building, rotmensch2017learning, goodwin2013automatic}. However, none of these existing approaches provides a complete framework for expert augmented causal graph generation with specific clinical application. We provide an approach for the development of the SCM with structure learning algorithms and expert augmentation of clinical knowledge in a principled way. We also present a clinical application to demonstrate the feasibility of conducting virtual trials -- from observational data -- under the proposed framework.

Our approach starts with processing observational data collected during routine health care such as EHR data from hospitals. These EHR systems or data repositories are queried based on the clinical question of interest, with explicit inclusion-exclusion criteria, to identify a cohort of eligible patients representative of the clinical setting. All the variables within the scope of the clinical question are extracted from the data repository. A graphical representation of cause and effect relationship between the variables can be learned from the extracted data, with structure learning algorithms augmented with domain knowledge through expert reviews. The resultant graph can be validated from the data through testable implications (e.g., conditional independence, Verma constraints). The key advantage of using SCM over statistical methods such as g-methods under the potential outcome framework \citep{hernan2008observational,hernan2020causal} is that all the assumptions such as the assumptions on observed and unobserved confounders are explicitly encoded in the SCM that can be analyzed in a principled way with \textit{do}-calculus \citep{pearl2009causality,pearl2018book}. 

For the clinical application, we study the effect of liberal versus conservative oxygen therapy in the intensive care unit with respect to ICU mortality as the primary outcome. Although oxygenation has been used to treat critically ill patients for hundreds of years \citep{Heffner18}, there has been a renewed interest \citep{Panwar2016, Girardis2016, NEJMoa1903297} in this topic as new evidence suggests not all patients may benefit from the same oxygenation strategy. To define the study question with a causal query that encodes eligibility or inclusion/exclusion criteria, we aimed at implementing the study protocol as described by Panwar et al. \citep{Panwar2016} in a multi-center pilot RCT. We will refer to this study as the oxygen therapy RCT (OT-RCT) in this paper. We used MIMIC-III database to emulate the OT-RCT protocol using observational data and SCM ~\cite{johnson2016mimic}. For the study, we used a conservative oxygenation strategy, target $SpO_{2}$ is between $88-95\%$, compared to a liberal oxygenation strategy, target $SpO_{2}$ is greater than or equal to $96\%$, for adult ICU patients receiving invasive mechanical ventilation (IMV). In the observational data,  we implemented a clear separation in the mean $SpO_{2}$ $94.25 (3.14)$ vs. $97.88(3.42)$, $SaO_{2}$ $91.39 (9.20)$ vs. $91.78 (11.84)$, $PaO_{2}$ $89.80 (69-98)$ vs. $121.32 (86-140)$, and $FiO_{2}$ $0.59 (0.18)$ vs. $0.47 (0.14)$ values between the conservative and the liberal groups respectively. Our analysis was performed in two different ways using the MIMIC database: i) correlation-based observational study (CB-OBS) and ii) structural causal model based  observational study as a virtual RCT (SCM-VRCT). 

The point estimate for 90-day mortality with SCM-VRCT was lower with the liberal oxygenation strategy compared to the conservative strategy. This is not consistent with the findings of the OT-RCT. The expectation of 90-day mortality for patients in the conservative oxygenation arm in the SCM-VRCT protocol (54\%) was equal to the CB-OBS approach (54\%). Also, given the severity of illness based on the physiological scores and age of the patients, the liberal oxygenation strategy in SCM VRCT has lower mortality $29.8\% (95\%$ CI $28.1 - 31.1\%)$ compared to the conservative approach in SCM-VRCT $54.1\%$. Furthermore, within the liberal arm, the mortality in SCM-VRCT is lower compared to that of in the CB-OBS $34.1\% (95\%$ CI $33 - 35.3\%)$. Another important observation is that the expectation of mortality is higher for the patients with a sequential organ failure assessment (SOFA) score greater than ten compared to those with scores less than or equal to ten. Conditioned on the severity of illness based on the SOFA score, there was no statistically significant difference in mortality for different oxygenation targets. Therefore, conditioned on the severity of illness, as measured by the SOFA score (SOFA score is greater than or equal to 10), SCM-VRCT supports the feasibility of a larger study to investigate the effect of conservative oxygenation to treat patients requiring invasive MV in ICUs.

\subsection{Technical Significance}
We provide a framework with a step by step approach from data curation, model development, model validation, and causal estimation and demonstrate the approach using a clinical application. Previous research has shown model development using SCMs with only a few variables that was mostly hypothesis-driven \citep{schluter2019impact, adegunsoye2019computed, vitolo2018modeling, arendt2016placental, hernandez2008causal, schisterman2005lipid}. Our approach provides a framework to encode  expert knowledge using human-in-the-loop model development with the SCM in a principled way. Furthermore, we provide the derivation for the identifiability equation using backdoor adjustment and do-calculus as well as the estimation process with inverse probability weighting (IPW). 
\subsection{Clinical Relevance}
There has been a number of trials in recent years to investigate the effect of conservative versus liberal (conventional) oxygenation strategies on mortality and number of ventilator-free days among patients in ICUs receiving invasive mechanical ventilation \citep{Panwar2016,Girardis2016,NEJMoa1903297}.  Results have been mixed and the most recent study \citep{NEJMoa1903297} showed no significant effect on the number of ventilator-free days between the treatment and the control group. Consequently, the study has a significant impact on clinical practice. The approach can be generalized for other clinical questions and can be adopted by the broader clinical community if relevant data and domain expertise exists.

\section{State of the Art}
Ronald Fischer in his seminal work, ``Design of Experiments'', first introduced randomized controlled trials (RCTs) as an experimental approach to identify causation by accounting for confounding biases \citep{fisher1960design}. The enactment of the 1962 Amendments to the Food, Drug and Cosmetics Act required that new treatments need to be proven efficacious in ``adequate and well-controlled investigations'' \citep{greene2012reform}. In the 1970s, the FDA translated this into the requirement that an RCT is  needed to validate the causal link between a new treatment (causal intervention) and a putative clinical outcome. The rationale for adopting RCTs as the means of gathering scientific evidence is that spurious associations due to factors extraneous to the relationship between the treatment and the outcome can be controlled for by randomizing the treatment assignment. Despite a host of  benefits, it is widely acknowledged that RCTs are far from perfect. In practice, RCTs are usually time-consuming, overly expensive, not always entirely ethical, contain biases even with randomization, and are often only applicable to a narrow stratum of the population.  

In the 1970s, the potential outcome framework for experimental studies was extended to non-experimental or observational studies through the introduction of the ignorability assumption and propensity score matching for adjustment of confounding biases \citep{rubin1974estimating,rosenbaum1983central}.  In the 1980s, sequential backdoor and g-methods were introduced with the potential outcome framework for time-varying exposures \citep{robins1986new, robins2009estimation}. In the 1990s the potential outcome framework was first conceptualized through graphical models, namely, structural causal models, on the foundations of Bayesian networks \citep{pearl1995causal}. Subsequently, \textit{do-calculus} was introduced with SCMs to mathematically transform observational and experimental studies. The strength of the SCM is that all the assumptions are explicitly specified in the model to provide a unified framework for an objective reproducible estimation approach. In recent years, a unified framework, called \textit{data fusion},  for adjusting other types of bias such as selection bias and transportability bias have been proposed \citep{pearl2014external,bareinboim2016causal,pearl2019seven}. 

Most of the applications of SCM based on clinical literature has been purely hypothesis-driven. In other words, the graphical model that describes the assumptions was generated with expert knowledge with a small set of variables \citep{schluter2019impact, adegunsoye2019computed, vitolo2018modeling, arendt2016placental, hernandez2008causal, schisterman2005lipid}. Several recent works attempted to generate the causal graph from medical literature automatically \citep{nordon2019building, rotmensch2017learning, goodwin2013automatic}. While both of these approaches are useful, to the best of our knowledge, there are no hybrid approaches that develop the causal graph iteratively from both the clinical data and expert knowledge. A recent study provided a framework for expert augmented machine learning with boosting trees to extract expert knowledge \citep{gennatas2020expert}, but not with SCMs. 

We propose an approach for generating evidence from clinical data captured during routine healthcare such as electronic health records with expert augmented causal graphs. We used the Medical Information Mart for the Intensive Care (MIMIC) III \citep{johnson2016mimic, adibuzzaman2016closing, adibuzzaman2017big} to answer the clinical question, whether liberal or conservative oxygen therapy help to improve survival rate in the intensive care unit. Numerous machine learning and clinical application studies have been published using this standard database \citep{ghassemi2015state, zhu2018bayesian} and the database is widely considered as a benchmark due to the high-quality granular data.

The clinical question of interest, i.e., the effect of liberal versus conservative oxygen therapy in the ICU has drawn widespread attention in recent years with a number of randomized trials. These trials investigated the effect of oxygenation strategies on mortality and the number of ventilator-free days among patients in ICUs receiving intermittent mandatory ventilation (IMV). In 2016, a pilot RCT \citep{Panwar2016} to investigate the effect of different oxygenation targets during mechanical ventilation on mortality supported the feasibility of a conservative oxygenation strategy in patients but concluded with a need for a larger RCT to be performed to evaluate its efficacy. In another study \citep{Girardis2016}, a conservative oxygenation strategy was associated with lower ICU mortality compared to the conventional strategy. One important note about this study is that the findings were reported after an unplanned early termination of the trial due to the potential adverse effect in one of the study arms. They also emphasized the need for a multi-center trial that evaluates this intervention. The most recent study \citep{NEJMoa1903297} with a 1000 patients' cohort found no significant effect on the number of ventilator-free days from the use of conservative oxygen therapy as compared to the conventional liberal oxygen therapy. \autoref{tab:study_demographics} summarizes the study demographics and \autoref{tab:study_outcomes} summarizes the results of these studies, among others. The lack of a consensus from these studies shows the importance of the continued study of clinical practice and resulting outcomes on mortality \citep{hirase2019impact}.

\section{Background}
    The causal analysis goes beyond association to infer probabilities under experimental conditions to estimate the effect of interventions. One example of these changes can be external interventions that can be used to estimate treatment effects. The development of mathematical machinery with SCM in the last couple of decades can be used to adjust for biases such as confounding bias, selection bias, and estimate treatment effect and counterfactual \citep{bareinboim2016causal}. In this section, we provide the fundamentals of the mathematical machinery with SCM. 

\subsection{Structural Causal Model (SCM)}
An SCM $M$, represent the causal relationship between variables. $M$ consists of two sets of variables $U$ and $V$, called exogenous and endogenous variables respectively. It also includes a set of functions $f$ that assign each endogenous variable in $V$ a value based on the values of other variables in the SCM. A variable $Y$ is directly caused by $X$ if $X$ is in the function $f$ of $Y$. A variable $X$ is a cause of another variable $Y$ if it is a direct or any cause of $Y$ ~\citep{pearl2009causality}.

\begin{definition}
	\label{def_scm}
	Structural Causal Model; A structural causal model is a 4-tuple  \\$M = \langle U, V, F, P(u) \rangle$, where
	\begin{enumerate}
	\item $U$ is a set of background variables (also called exogenous) that are determined by factors outside the model.
	\item $V$ is a set $\{V_1, V_2,\hdots,V_n\}$ of endogenous variables that are determined by variables in the model, viz.\ variables in $U \cup V$.
	\item $F$ is a set of functions $\{f_1, f_2,\hdots, f_n\}$ such that each $f_i$ is a mapping from the respective domains of $U_i \cup {PA}_i $ to $V_i$ and the entire set $F$ forms a mapping from $U$ to $V$. In other words, each $f_i$ assigns a value to the corresponding $V_i \in V$, $v_i \leftarrow f_i(pa_i, u_i),$ for $i = 1, 2, \hdots n$.
	\item $P(u)$ is a probability function defined over the domain of $U$.

	\end{enumerate}
\end{definition}

Each SCM $M$ is associated with a graph, known as a \textit{graphical causal model}, $G$. Each $G$ is a directed acyclic graph (DAG). This graphical causal model $G$ consists of a set of nodes or vertices that represent the variables $U$ and $V$, and a set of edges that represents the functions in $f$. $G$ contains a node for each variable in $M$ and a directed edge from $X$ to $Y$ if $X$ is in the function $f$ of $Y$ ($X$ is a direct cause of $Y$) in $M$. Causal models and graphs, therefore, represent causal relationships and encode causal assumptions ~\citep{pearl2009causality}.

\subsection{Causal Structure Learning}
\label{subsec:sla}
Causal structure learning algorithms are a family of algorithms that estimate the causal graph or certain aspects of it from a given data set ~\citep{HeinzeDeml2017CausalSL}. Given a distribution, these algorithms find possible causal diagrams implied by the graphical structure that could generate it.
We assume the underlying graph is a directed acyclic graph (DAG). The underlying causal DAG is generally not identifiable but the Markov equivalence class of DAGs, i.e. the set of DAGs that encode the same set of d-separation relationships, can be identified ~\citep{pearl2009causality}. The Markov equivalence class of DAGs can be described by completed partially DAGs (CPDAG) ~\citep{Andersson1997}. 

There are many methods that have been developed to estimate the underlying DAG. These are broadly classified as 1) constraint-based methods, 2) score-based methods and 3) hybrid methods ~\citep{JSSv035i03}. The constraint-based methods, PC ~\citep{Sprites2000}, rankPC ~\citep{harris2013a}, and FCI ~\citep{Sprites2000}, perform statistical tests of marginal and conditional independence to check dependency. On the other hand, the score-based methods, GES ~\citep{Chickering2002b}, rankGES ~\citep{Nandy2015HighdimensionalCI}, GIES ~\citep{hauser2012a}, and rankGIES, optimize the search according to a score function.  Hybrid methods such as Max-Min Hill-Climbing (MMHC) ~\citep{Tsamardinos2006} combine conditional tests with a score-based approach. Besides these three approaches, there are other methods such as structural equation models with additional restrictions, Linear Non-Gaussian Acyclic Models (LINGAM), ~\citep{Shimizu2006} that exploit invariance properties
~\citep{Rothenhusler2015BACKSHIFTLC}.

\subsection{\textit{do}-Calculus -- Graphical Identification Criterion}
We consider a query $Q$, the task of estimating the distribution of the outcome $Y$ after intervening on the treatment variable $X$, mathematically written as $P(Y = y | do (X = x))$. The question is if we can define the conditions under which we can infer the results of the query when we have data, $X, Y, Z$ from an observational study where $X, Y,$ and $Z$ are randomly sampled. This is a standard query of policy evaluation that needs to be estimated given $P(y, x, z)$ by controlling for confounding bias ~\citep{bareinboim2016causal}.

The question at the center of causal inference is to estimate the effects of interventions such as medical treatments or policy actions ~\citep{Hunermund2019CausalIA} ~\citep{bareinboim2016causal}. Interventions in an SCM $M$ are denoted by a mathematical operator, namely the $do(\cdot)$ operator.  These interventions in SCMs, in the form of $do(X = x)$, are carried out by removing individual functions, $f_i$, from the model i.e. in the corresponding graph, severing all incoming arrows that enter the manipulated or intervening variable $X$. The distribution of $Y$ after the intervention can be defined using the counterfactual notation as,
\begin{equation}
\label{eq2_3}
P(y|do(x)) \triangleq P(Y_x = y), 
\end{equation}
where $Y_x = y$ means ``$Y$ would be equal to $y$ if $X$ had been $x$" ~\citep{pearl2009causality}. The backdoor and front-door criteria can be used to identify sets of covariates that should be adjusted for using simple graphical identification rules. However, to uncover all causal effects that can be identified from a given DAG requires symbolic machinery ~\citep{pearl1995causal}.
The problem of identification that asks whether the interventional distribution, $P(Y = y | do (X = x))$, can be estimated from the graph (along with the assumptions) and the available observational data. For queries in the form of $Q$ or \textit{do}-expression, identifiability can be decided procedurally using an algebraic method known as the \textit{do}-calculus. It allows us to manipulate probabilistic distributions, both interventional and observational, through three inference rules whenever certain separation conditions hold in the causal graph $G$ in the SCM $M$. A sufficient version of do-calculus is the adjustment criterion. 

Do-calculus is a causal inference engine that takes three inputs ~\citep{Hunermund2019CausalIA}:
\begin{enumerate}
	 \item A causal quantity $Q$, which is the query we want to answer; 
	 \item A causal graphical model $G$ that encodes the qualitative understanding about the structural dependencies between the variables under study; 
	 \item  A collection of datasets $P(v|\cdot)$ that are available, such as  observational, experimental, from selection-biased samples, or from different populations.
\end{enumerate} 

Based on these three inputs, using \textit{do}-calculus and three inference rules, we can transform interventional probabilistic sentences into equivalent expressions. 

Let $X$, $Y$, $Z$, and $W$ be the arbitrary disjoint sets of nodes in a causal graph $G$. Also, let $G_{\overline{X}}$ be the graph obtained by removing all arrows pointing to nodes in $X$. Likewise, $G_{\underline{X}}$ is a graph obtained by deleting all arrows that are emitted by $X$ in $G$. The notation $G_{\overline{X}\underline{Z}}$ can be used to represent the previous two configurations together, the removal of both arrows incoming in $X$ and arrows outgoing from $Z$. Given this notation, \textit{do}-calculus has the following three rules that are valid for every interventional distribution compatible with the causal graph $G$ ~\citep{pearl2009causality, bareinboim2016causal}.

\begin{rules}: Insertion/deletion of observations                                       
    \begin{equation}           
	    P(y | do(x), z, w) = P(y | do(x), w) \qquad \text{\normalfont if} \enskip (Y \independent Z | X, W)_{G_{\overline{X}}}.        
	\end{equation}                                             
\end{rules} \vspace{-\baselineskip}

\begin{rules}: Action/observation exchange
	\begin{equation}
	    P(y | do(x), do(z), w) = P(y | do(x), z, w) \qquad \text{\normalfont if} \enskip (Y \independent Z | X, W)_{G_{\overline{X}\underline{Z}}}.
	\end{equation}
\end{rules} \vspace{-\baselineskip}

\begin{rules}: Insertion/deletion of actions
	\singlespacing 
	\begin{equation}
	    P(y | do(x), do(z), w) = P(y | do(x), w) \qquad \text{\normalfont if} \enskip (Y \independent Z | X, W)_{G_{\overline{XZ(W)}}},
	\end{equation}
	where $Z(W)$ is the set of Z-nodes that are not ancestors of any W-node in $G_{\overline{X}}$.
\end{rules}

\smallskip 
\noindent 

The above rules of \textit{do}-calculus make it complete for general queries of the form $Q = P(y|do(x),z)$. It is guaranteed to return a solution for the identification problem, whenever such a solution exists ~\citep{pearl1995causal,Bareinboim2012}. To establish identifiability of a causal query $Q$, we need to repeatedly apply the rules of \textit{do}-calculus to $Q$ until an expression is obtained that is do-expression free. This do-free expression makes $Q$ estimable from non-experimental data.

\section{Methods}
    \label{sec:method}
We present an expert-augmented causal estimation framework to estimate the effect of an intervention from observational data as depicted in ~\autoref{fig:causal_estimation_framework}. We discuss the framework, in the following subsections, with respect to a clinical question using EHR data curated during routine care in the ICU. 
\begin{figure}[!h]
    \centering
    \includegraphics[width=1\textwidth]{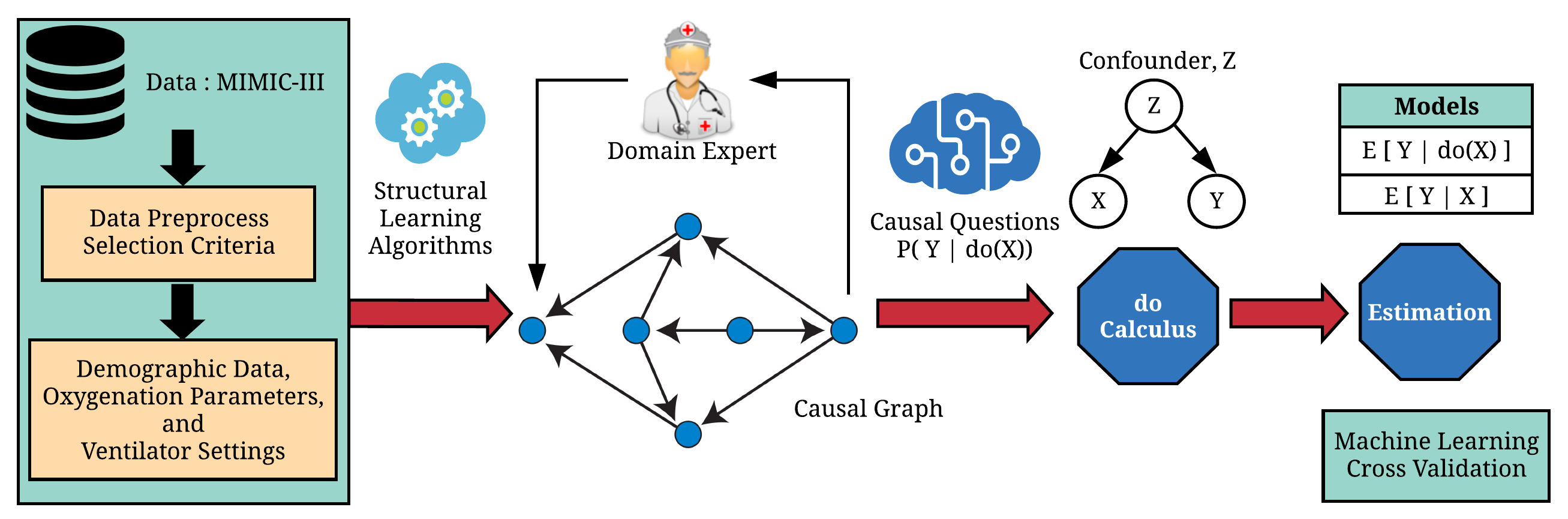}
    \caption{Schematic diagram for the causal estimation framework. The framework allows domain expertise to be encoded in the SCM.}
    \label{fig:causal_estimation_framework}
\end{figure} 

\subsection{Cohort selection}
We first consider the observational data collected during routine care such as from an EHR. EHRs contain data repositories from multiple sources or healthcare facilities. In order to identify a set of eligible patients for the experiment, inclusion-exclusion criteria are defined based on the clinical question of interest.
\subsection{Structure Learning Algorithms}
\label{subsec:sla2}
Given a data set, eligible patients for the study, the framework estimates a possible causal graph that could be generated given the distribution using structure learning algorithms (SLAs) ~\citep{CausalStrucureLearning}. These SLAs learn the underlying causal graph at different levels of granularity under different assumptions. Based on these assumptions, multiple causal graphs were estimated using SLAs. We have considered the following seven SLAs for this framework: 1) PC Algorithm, 2) RFCI, 3) GES, 4) GDS, 5) TABU Search ~\citep{Glover1998}, 6) MMHC, and 7) LINGHAM. 

\subsection{Majority Voting}
Seven different causal graphs were estimated using the SLAs discussed in \autoref{subsec:sla} and \autoref{subsec:sla2}. We then used a majority voting scheme on the causal relationships estimated using the SLAs. If a simple majority, $m/2 + 1$ of algorithms support a causal relationship between $X$ and $Y$, we consider the relationship for our model. Here $m$ is the total number of SLAs under consideration i.e. $7$. If it is supported by less than $m/2 + 1$ algorithms, we discard that functional relation from the model. 
If we consider a causal DAG $G$ has $n$ nodes, then these nodes are enumerated using integers from $1$ to $n$. Therefore, it can have $n$ x $n$ or $n^{2}$ number of directional edges each representing a causal relationship. Let's consider a matrix $V$ of $n$ x $n$ for a causal graph $G$ that stores the number of votes received for each of the causal relationships between nodes $i$ and $j$, two arbitrary nodes where $1 \leq i , j \leq n$. For each element $(i, j)$ in $V$ we consider the voting procedure in ~\autoref{eqn_voting}.
\begin{equation}
\label{eqn_voting}
  V(i, j) =
  \begin{cases}
              0 & \text{initially} \\   
    V(i, j) + 1 & \text{if a SLA supports a causal edge from $i$ to $j$} \\
    V(i, j) + 0 & \text{if a SLA does not support a causal edge from $i$ to $j$} \\
  \end{cases}
\end{equation}
Consider the pair $(u, v)$ representing a directed edge from node $u$ to node $v$ in a causal graph. After counting votes for all the causal relationships in $V$, we draw a causal graph $G$ using ~\autoref{eqn_maj_voting}. 
\begin{equation}
\label{eqn_maj_voting}
G = 
  \begin{cases}
    G \cup (u, v) & \text{if  } V(u, v) \geq \frac{m}{2} + 1  \\   
    G \cup \emptyset & \text{if  } V(u, v) \textless \frac{m}{2} + 1  \\   
  \end{cases}
\end{equation}
where, $1 \leq u, v \textless n$.

\subsection{Expert Augmented Knowledge}
To finalize the causal graph for estimation, we verify the model with expert consultation and review of the current clinical literature. This provides a systematic approach for encoding existing clinical knowledge into the graph. In this step, we review the initial estimation, through majority voting, of the causal graph $G$ by domain experts. It allows us to review causal relationships estimated by majority voting and makes the necessary adjustment based on existing evidence. We perform either of the following five operations based on the expert knowledge described in \autoref{alg:expert_knowledge}. The resultant causal graph is then reviewed again to make sure that it does not contain any cycles. Therefore, the final causal graph is always a DAG.

\begin{algorithm}[t]
\caption{ Encoding Expert Knowledge in to the Causal Graph, $G$}
\label{alg:expert_knowledge}
\textbf{Input}: $G$, $Expert Knowledge$\\
\textbf{Output}: $G$
  \begin{algorithmic}[1]
        \FOR{Each edge $(u, v)$ in $G$}
            \IF{An evidence of a causal relationship from $u$ to $v$}
                \STATE Keep the edge $(u, v)$ in $G$
            \ELSIF{An evidence of a causal relationship from $v$ to $u$}
                \STATE Change the orientation of the edge to $(v, u)$
            \ELSIF{An evidence of no causal relationship between $u$ and $v$}
                \STATE Remove the edge $(u, v)$ from $G$
            \ELSIF{No evidence of a causal relationship between $u$ and $v$}
                \STATE Keep the edge $(u, v)$ in $G$
            \ENDIF
        \ENDFOR
        \FOR{Each node $u$ in $G$}
            \FOR{Each node $v$ in $G - \{u\}$}
                \IF{the edge $(u, v)$ not in $G$}
                    \IF{An evidence of a causal relationship from $u$ to $v$}
                        \STATE Add the edge $(v, u)$ to $G$
                    \ENDIF
                \ENDIF
            \ENDFOR
        \ENDFOR
  \end{algorithmic}
\end{algorithm}

\subsection{Causal Questions}
With the graph encoded with the causal relationships, we can now ask causal questions to understand how variables influence each other. This allows us to run virtual experiments to estimate treatment effects.
The probability that $Y = y$ when we intervene to make $X = x$ is denoted by, $P( Y = y | do(X = x))$. That is to say if everyone in a population had their $X$ value fixed at $x$ then what is the population distribution of $Y$. Using do-expressions and a causal graph, we can ask causal questions and untangle the causal relationships. For our causal questions, $X$ is the treatment variable and $Y$ is the outcome.

\subsection{Causal Identification}
We used causal identification formula such as \textit{backdoor adjustment} derived using the rules of \textit{do-calculus} to find a set of confounding variables, $Z$ that must be adjusted depending on the causal query in the form of $P( Y | do(X))$ and the graphical model \citep{pearl2009causality}. The model automatically finds the confounding variables, can distinguish between confounders and mediators, and provides the causal identification formula. The causal identification formula provides a mathematical transformation between the observational reality and the corresponding experimental reality. It is possible that certain graphical model will not be able to provide the causal identification formula, and in those cases, the method will not be able to answer the query of interest. The resultant formula can be used to evaluate the causal effect. 
\subsection{Estimation}
Once we have these three sets of variables, treatment $X$, confounders $Z$, and outcome $Y$, we can compute the conditional probabilities in the adjustment formula using the probability distribution to compute the interventional effect. Computing each of the conditional probabilities in the adjustment formula is computationally expensive and may not lead to feasible results with small sample sizes. Instead, we use machine learning algorithms to estimate the probability distribution from the data, that is we use treatment $X$ and confounders $Z$ as input to an ML algorithm to estimate the outcome $Y$. For this step, we split our data into training and testing sets, each containing $X$, $Y$, and $Z$. We train two ML models, one based on observational data and another based on experimental data, using the training dataset. We use the testing set to predict $Y$ from $X$ and $Z$ using the trained ML models. After that, we use bootstrapping to randomly subsample $X$ and predicted $Y$ for $k$ iterations. This gives us an estimation of the effect of $X$ on $Y$ and used to estimate the mean and confidence interval.

\section{Results}
    There are approximately 2 to 3 million patients in the US that receive invasive MV in an ICU ~\citep{Wunsch2010, Adhikari2010} with an estimated annual cost of \$15-27 billion.  MV is highly associated with morbidity ~\citep{Kahn2010} and mortality ~\citep{Metnitz2009}. It is important to maintain safe levels of tissue oxygenation estimated through peripheral oxygen saturation (SpO2) ~\citep{Vincent2013} for MV patients in the ICU, most of whom also receive supplemental oxygen therapy (OT) ~\citep{Panwar2016}.  

\begin{figure}[!t]
    \centering
    \includegraphics[width=0.9\textwidth]{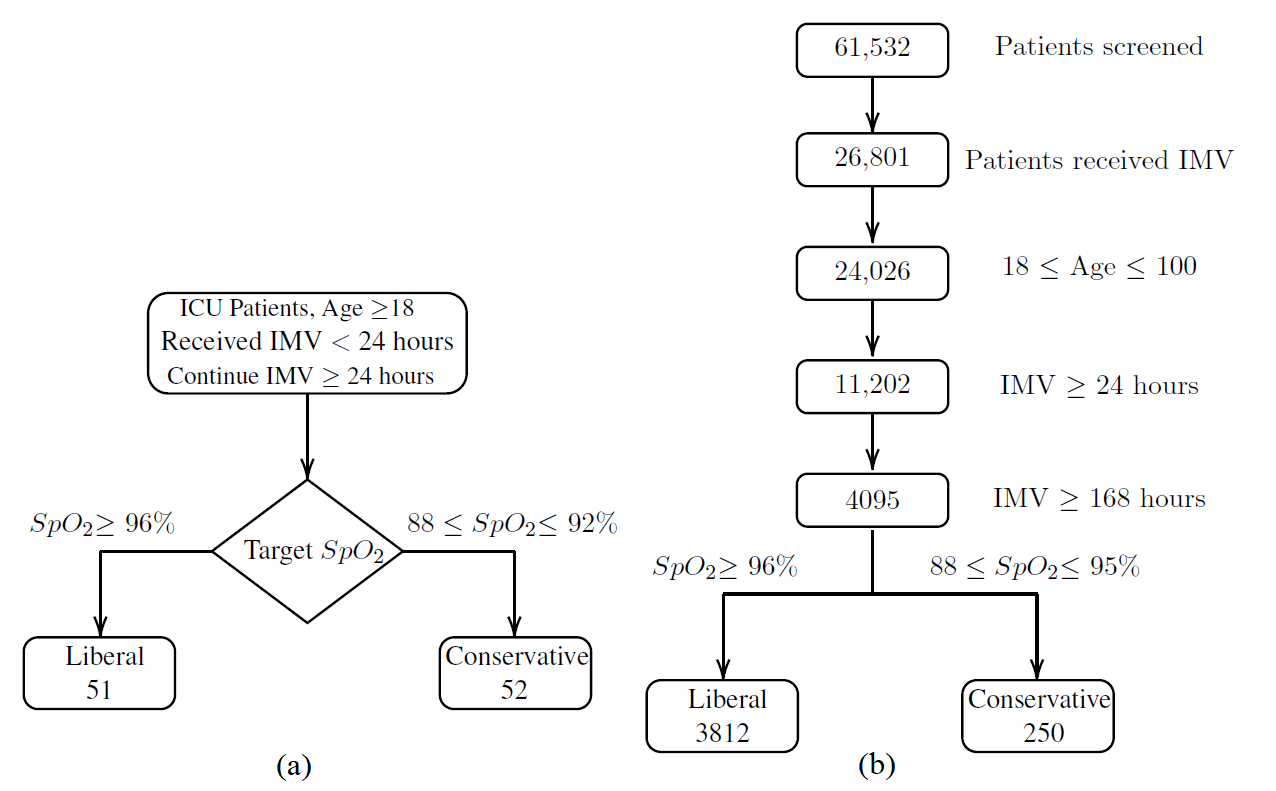}
    \caption{Inclusion-exclusion criteria for (a) pilot RCT and (b) for the observational study with the MIMIC database.}
    \label{fig:inclusion-exclusion-criteria}
\end{figure}

Current practice and recommendations related to oxygenation targets for ICU patients are not based on strong evidence. Rather, it is primarily based on normal physiologic values in healthy adults at sea level, $SpO_2$, $SaO_2$, and $PaO_2$, which are approximately $90\%$, $95-97\%$, and $88-100mm Hg$, respectively ~\citep{Crapo1999}. For acutely ill patients $SpO_2$ target recommendations vary from $90\%$ to $94-98\%$. Also, for patients with acute respiratory distress syndrome (ARDS), $SpO_2$ targets of $88-95\%$ are considered acceptable levels ~\citep{NEJM200005043421801, Meade2008}. This liberal approach to oxygen therapy is now being challenged due to the increasing recognition of the potential harm of excessive $FiO_2$ in the form of hyperoxemia and tissue hyperoxia ~\citep{Panwar2013,  SUZUKI2013647}.

Recent trials have been inconclusive in determining the effect difference between the liberal and conservative oxygen therapy on mortality \citep{Panwar2016, Girardis2016} with a need for large trials. 
Another recent study found no significant effect on the number of ventilator-free days for the use of conservative oxygen therapy compared to conventional oxygen therapy \citep{NEJMoa1903297}. In the clinical practice guideline for oxygen therapy for acutely ill patients, it is strongly recommended not to administer oxygen therapy at higher than 96\% saturation with a few exceptions ~\citep{Siemieniukk4169}. It is also strongly recommended not to provide oxygen therapy at or above 93\% saturation for patients with acute stroke or myocardial infarction. We have presented oxygenation strategies for multiple studies based on target $SpO_{2}$ in ~\autoref{tab:study_demographics}.

Our goal is to determine the feasibility of a conservative oxygenation strategy as an alternative to the conventional liberal oxygenation approach. We use the computational model, discussed in \autoref{sec:method}, developed using causal inference methods to estimate the effect of different oxygenation strategies for adult patients receiving MV in the ICU. We used SCMs to identify and represent causal relationships between variables. The overall framework is presented in ~\autoref{fig:causal_estimation_framework}. %
We describe the experimental details in the following subsections. 
 
\subsection{Selected Cohort}

\begin{table}[h]
\centering
\caption{Summary statistics of the patients. The first two columns have summary statistics for the cohort selected from MIMIC database and the last two columns have summary statistics for the OT-RCT.}
\label{tab:summary-statistics-table}
\resizebox{0.7\textwidth}{!}{%
\begin{tabular}{ccccc}
\hline
  & \multicolumn{2}{c}{\textbf{SCM - VRCT}} & \multicolumn{2}{c}{\textbf{OT-RCT}}                                                           \\ \cline{2-5}
\textbf{Variable} & \textbf{Conservative} & \textbf{Liberal} & \textbf{Conservative} & \textbf{Liberal} \\ \hline
Age               & 60.82(15.6)                    & 62.94(16.22)                          & 62.4(14.9)              & 62.4(17.4)         \\ 
Male Sex          & 150 (60\%)                     & 2197(57.09\%)                         & 32(62\%)                & 33(65\%)           \\ 
BMI               & 28.1 (10.5)                 & 27.8 (10.7)                        & 27.6(10.3)              & 27.6(10.1)         \\ \hline
\multicolumn{5}{l}{Diagnosis type}                                                                                                      \\ 
Trauma            & 25(10\%)                       & 466(12.22\%)                          & 3(6\%)                  & 2(4\%)             \\ 
Medical           & 15(6\%)                        & 152(4\%)                              & 39(75\%)                & 41(80\%)           \\ 
Surgery           & 8(3.2\%)                       & 93(2.43\%)                            & 10(19\%)                & 8(16\%)            \\ \hline
APSIII            & 50.44(36-61.75)                & 54.05(38-67)                          & 79.5(61-92.5)           & 70(50-84)          \\ 
SOFA              & 6.96(3.43)                     & 6.27(3.73)                            & 7.9(2.9)                & 7.4(3.1)           \\ 
Smoker            & 36(14.4\%)                     & 345(9.05\%)                           & 10(19\%)                & 14(27\%)           \\ 
COPD              & 21(8.4\%)                      & 154(4.04\%)                           & 11(21\%)                & 5(10\%)            \\ 
Ischemic HD       & 11(4.4\%)                      & 151(3.96\%)                           & 6(12\%)                 & 5(10\%)            \\ 
ARDS              & 13(5.2\%)                      & 130(3.4\%)                            & 17(33\%)                & 10(20\%)           \\ 
SpO2              & 94.25(3.14)                    & 97.88(3.42)                           & 95(3)                   & 96(3)              \\ 
FiO2              & 0.59(0.18)                     & 0.47(0.14)                            & 0.44(0.2)               & 0.44(0.18)         \\ 
SaO2              & 91.39(9.20)                    & 91.78(11.84)                          & 95.5(3)                 & 96(2.7)            \\ 
PaO2              & 89.80(69-98)                   & 121.32(86-140)                        & 81(68-109)              & 82(75-104)         \\ 
PaCO2             & 49.19(13.03)                   & 41.78(9.67)                           & 38(7)                   & 39(6)              \\ 
pH                & 7.37(0.09)                     & 7.39(0.08)                            & 7.36(0.07)              & 7.37(0.07)         \\ 
Lactate           & 2.23(1.2-2.5)                  & 2.46(1.2-2.7)                         & 1.99(1.4-2.9)           & 1.65(1.2-2.6)      \\ 
Hemoglobin        & 10.07(1.92)                    & 9.86(1.68)                            & 110(23)                 & 115(23)            \\ 
PEEP              & 10.8(4.82)                     & 7.02(3.31)                            & 8.2(3)                  & 7.3(3)             \\ 
VT                & 505.48(133.3)                  & 525.56(2241.31)                       & 8(1.8)                  & 8(1.9)             \\ 
Peak Air Prs.     & 29.83(9.39)                    & 24.98(8.76)                           & 22(6)                   & 21(5)              \\ \hline
No. of Patients    & 250                            & 3812                                  & 52                      & 51                 \\ 
\end{tabular}%
}
\end{table}

We closely followed the study guideline and selection criteria used in OT-RCT. This trial was conducted at four multidisciplinary ICUs in Australia, New Zealand, and France. The goal was to investigate whether a conservative oxygenation strategy is a feasible alternative to a liberal oxygenation strategy among ICU patients requiring invasive MV. A total of 103 adult patients, age greater than or equal to 18 years, were eligible for the study as they were given invasive MV for less than 24 hours and their attending clinician expected the invasive MV to continue for at least the next 24 hours. These patients were randomly treated with either a conservative oxygenation strategy with target $SpO_{2}$ of $88-92\%$ and a liberal oxygenation strategy with target $SpO_{2}$ of greater than or equal to $96\%$. There were 52 and 51 patients in the conservative and liberal arms respectively. The adjusted hazard ratio for 90-day mortality in the conservative arm was $0.77 (95\% CI,0.40-1.50, P = 0.44)$ overall. There were no significant between-group differences  and no harm was associated with the use of the  conservative oxygenation strategy.

\begin{figure}[!t]
    \centering
    \resizebox{.6\textwidth}{!}{
        \begin{tikzpicture}
        \begin{axis}[
            height=10cm,
        	width=13cm,
        	grid=minor,
        	xmin=84,   xmax=100,
        	ymin=0,   ymax=33,
        	legend style={at={(0.5,-0.15)},
		legend pos=north west,legend columns=2},
		xlabel={$SpO_{2}$ (\%)},
        ylabel={\% Time receiving mechanical ventilation}
        ]
        \addplot[color=purple, mark=*] table [x=spo2_levels, y=conservative, col sep=comma] {data.csv};
        \addlegendentry{Conservative ~\citep{Panwar2016}}
        \addplot[color=green, mark=*] table [x=spo2_levels, y=liberal_mimic, col sep=comma] {data.csv};
        \addlegendentry{Liberal: MIMIC}
        \addplot[color=blue, mark=*] table [x=spo2_levels, y=liberal, col sep=comma] {data.csv};
        \addlegendentry{Liberal: ~\citep{Panwar2016}}
        \addplot[color=red, mark=*] table [x=spo2_levels, y=conservative_mimic, col sep=comma] {data.csv};
        \addlegendentry{Conservative: MIMIC}

        \end{axis}
        \end{tikzpicture}
    }
    \caption{Pooled frequency histogram of the percentage of time spent at each $SpO_{2}$ level in both oxygenation groups for OT-RCT \citep{Panwar2016} and our study, SCM-VRCT. $SpO_{2}$ is oxygen saturation as measured by pulse oximetry.}
    \label{fig:spo2_time}
\end{figure}

For our study, we considered a large ICU database containing data routinely collected from patients in the US. Adult patients of age at least 18 years old were considered. We used the MIMIC-III dataset for model development and evaluation\citep{johnson2016mimic}. The database comprises a total of 61,532 ICU stays at Beth Israel Deaconess Medical Center in Boston, MA in which 53,432 stays are for adult patients and 8,100 stays are for neonatal patients. The database includes data logged using the CareVue and Metavision electronic health record (EHR) systems spanning from June 2001 to October 2012, a little over 11 years.

We included adult patients with eligibility criteria described in ~\autoref{fig:inclusion-exclusion-criteria} (b). We closely followed the inclusion-exclusion criteria in ~\autoref{fig:inclusion-exclusion-criteria} (a) employed in OT-RCT ~\citep{Panwar2016}. After the exclusion of ineligible cases, we included 4,062 ICU patients from the database. Out of 4,062 patients, 3,812 patients received liberal oxygenation and 250 patients received conservative oxygenation. Patient demographics and clinical characteristics are shown in ~\autoref{tab:summary-statistics-table}. We also present the demographics and clinical characteristics of the patients from the OT-RCT in ~\autoref{tab:summary-statistics-table} for comparison. Patients spent the majority of the time within the intended target range in both liberal and conservative groups. The pooled frequency histogram of the percentage of time spent at each $SpO_{2}$ level for both  groups is shown in ~\autoref{fig:spo2_time}. 

Our inclusion-exclusion criteria ensure that our study, SCM-VRCT has a similar  distribution of $SpO_{2}$ in the liberal and conservative arms for observational data compared to the OT-RCT. We extracted a set of 26 variables, including demographics, ventilator settings, and oxygenation parameters, from the dataset. Data on oxygenation parameters and ventilator settings were extracted every 4 hours and for each patient, we included at least 168 hours of measurements.

\begin{figure}[t]
	\centering
    \includegraphics[width=0.9\textwidth]{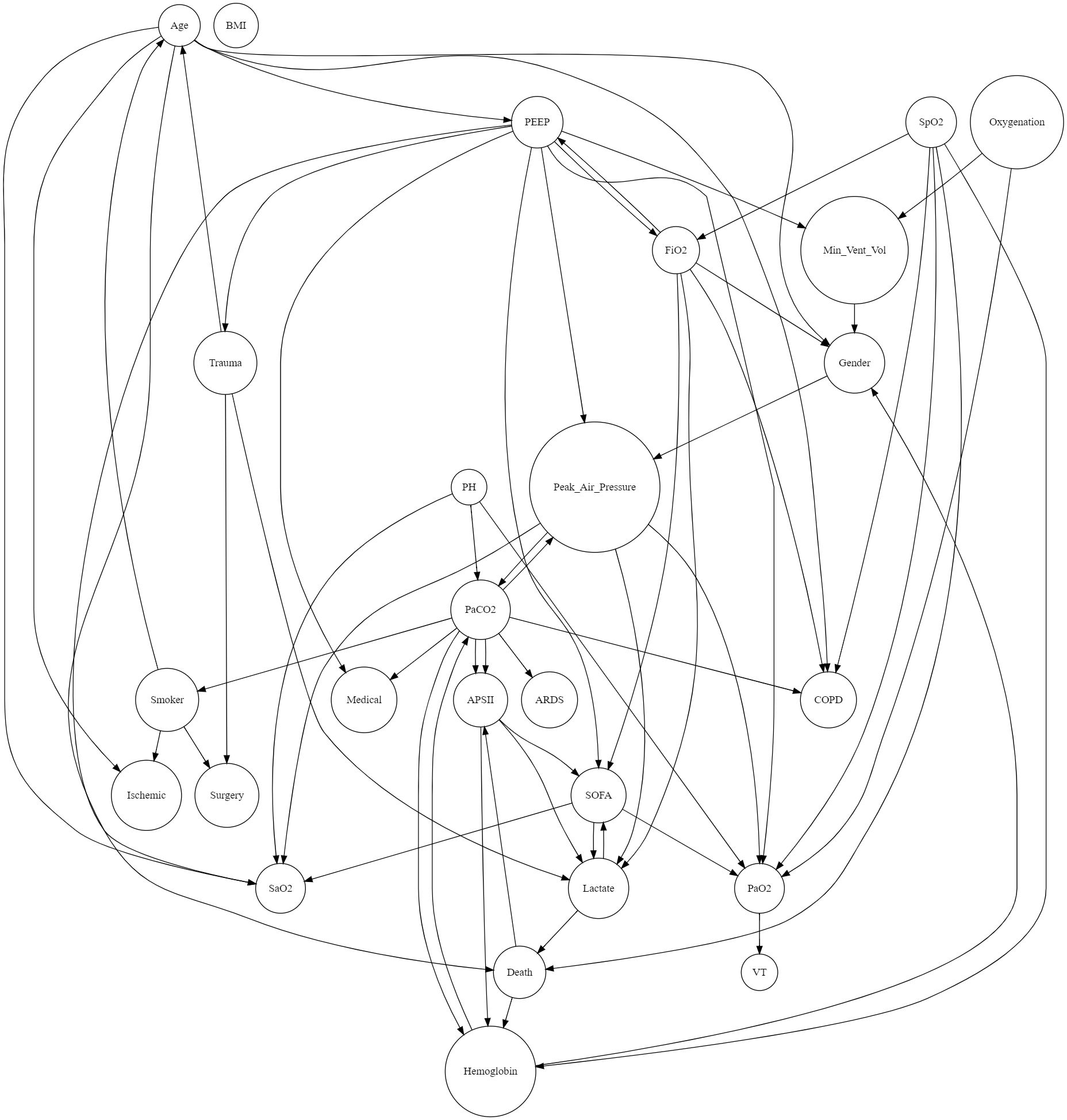}
	\caption{Causal Graph $G$ based on the majority voting of SLAs. $oxygenation$ is the treatment variable, and $death$ is the outcome variable.}
	\label{causal_graph_maj}
\end{figure}

\subsection{Structure Learning Algorithm}
Our study has focused on estimating the effect of different oxygenation strategies by establishing the causal relationship between measured oxygenation parameters, ventilator settings, disease state, and demographic information. Demographic data and computed mean were then used as features to estimate causal graphs using SLAs. For each algorithm, the data for the variables in ~\autoref{tab:summary-statistics-table} were used to learn the causal relationship among them and then estimated a causal graph.  We estimated 7 causal graphs using SLAs we discussed in \autoref{subsec:sla}.

\subsection{Majority Voting}
Based on the estimation by SLAs, we computed the matrix $V$ for voting, using ~\autoref{eqn_voting}. The resultant matrix is shown in ~\autoref{tab:maj_voting}, which has 26 rows and 26 columns. Each row has a variable for which we have the vote count in the columns for all 26 variables.

For example, the first row has the number of votes for variable \textit{age} and the second row has the number of votes for \textit{gender}. The value \textit{6} at row 5 (variable \textit{trauma}) and column 4 (variable \textit{surgery}) means that the causal relationship from \textit{trauma} to \textit{surgery} has received 4 out of the total number of votes, 7. As this vote count is greater than or equal to $7/2 + 1$ or $4$, the majority voting, we include this causal relationship in our final causal graph. The estimated graph using the majority voting (\autoref{eqn_voting}) based on the vote count in \autoref{tab:maj_voting} is shown in ~\autoref{causal_graph_maj}.

\subsection{Encoding Expert Knowledge}
We developed the causal graph from the result of the majority voting of estimated graphs by structural learning algorithms. We then incorporated clinical domain knowledge by consulting the domain expert, a critical care physician. We reviewed each causal relationship estimated by the algorithms. This review phase of each of the edges from variable $X$ to variable $Y$ involves three operations: 1) keep the edge if it is supported by the domain knowledge, 2) change the direction or orientation of the edge if that is supported by the domain knowledge, and 3) remove the edge if it is not supported by the domain knowledge. 
~\autoref{tab:domain_knowledge} shows the process of incorporating the domain expert’s input. $Node_{from}$ $\,\to\,$ $Node_{to}$ means $Node_{from}$ causes $Node_{to}$. For example, in row 8, $age$ causes $COPD$. The column, majority voting (MJV) is ``Yes" which means that it was estimated by the majority voting of the structural learning algorithms, whereas a ``No" indicates it was not. The \textit{Expert Knowledge} column represents the addition of a new causal relationship, removal of an edge, or change of the orientation of an existing causal edge followed by the reference of the domain knowledge in the next column. In total, we added 16 new edges, removed 10 edges, and changed directions of 14 edges as a result of the process. We then review the graph to ensure that the graph is acyclic with clinician consultation. This process allows us to make sure that the causal graph encodes the clinical knowledge along with the data generating process.

\begin{landscape}

\begin{table}[]
\centering
\caption{Vote Counts For All Causal Edges}
\label{tab:maj_voting}
\begin{adjustbox}{width=1.2\textwidth}
\small
\begin{tabular}{lllllllllllllllllllllllllll}
\\\hline
        & \rotatebox[origin=c]{90}{ Age} & \rotatebox[origin=c]{90}{ Gender} & \rotatebox[origin=c]{90}{ BMI } & \rotatebox[origin=c]{90}{ Surgery} & \rotatebox[origin=c]{90}{ Trauma} & \rotatebox[origin=c]{90}{ Medical} & \rotatebox[origin=c]{90}{ APSiii} & \rotatebox[origin=c]{90}{ SOFA} & \rotatebox[origin=c]{90}{ Smoker} & \rotatebox[origin=c]{90}{ COPD} & \rotatebox[origin=c]{90}{ Ischemic HD} & \rotatebox[origin=c]{90}{ ARDS} & \rotatebox[origin=c]{90}{ Death} & \rotatebox[origin=c]{90}{ Oxygenation} & \rotatebox[origin=c]{90}{ $SpO_{2}$} & \rotatebox[origin=c]{90}{ $FiO_{2}$} & \rotatebox[origin=c]{90}{ $SaO_{2}$} & \rotatebox[origin=c]{90}{ $PaO_{2}$} & \rotatebox[origin=c]{90}{ $PaCO_{2}$} & \rotatebox[origin=c]{90}{ pH} & \rotatebox[origin=c]{90}{ Lactate}	& \rotatebox[origin=c]{90}{ Hemoglobin} &	\rotatebox[origin=c]{90}{ Peep}	& \rotatebox[origin=c]{90}{ VT}	& \rotatebox[origin=c]{90}{ Peak Air Pressure }	& \rotatebox[origin=c]{90}{ Minute Ventilation }
\\\hline

Age & 0 & 4 & 0 & 0 & 3 & 0 & 0 & 0 & 1 & 6 & 5 & 0 & 6 & 0 & 1 & 0 & 5 & 2 & 0 & 1 & 2 & 0 & 4 & 0 & 2 & 0 \\
Gender & 3 & 0 & 0 & 0 & 0 & 0 & 0 & 2 & 0 & 0 & 0 & 0 & 0 & 0 & 0 & 2 & 0 & 0 & 0 & 0 & 0 & 3 & 0 & 0 & 4 & 2 \\
BMI & 0 & 0 & 0 & 0 & 0 & 0 & 1 & 0 & 0 & 0 & 0 & 0 & 0 & 0 & 0 & 0 & 0 & 0 & 0 & 0 & 0 & 0 & 0 & 0 & 0 & 0 \\
Surgery & 0 & 0 & 0 & 0 & 2 & 0 & 0 & 0 & 3 & 0 & 0 & 0 & 1 & 0 & 0 & 0 & 0 & 0 & 3 & 0 & 0 & 0 & 0 & 0 & 2 & 0 \\
Trauma & 4 & 1 & 0 & 6 & 0 & 0 & 0 & 1 & 0 & 0 & 0 & 0 & 0 & 0 & 0 & 0 & 0 & 0 & 1 & 0 & 5 & 0 & 3 & 0 & 0 & 0 \\
Medical & 0 & 0 & 0 & 0 & 0 & 0 & 0 & 0 & 0 & 0 & 0 & 0 & 0 & 0 & 0 & 0 & 0 & 0 & 3 & 0 & 0 & 3 & 2 & 0 & 0 & 0 \\
APSiii & 1 & 1 & 2 & 0 & 0 & 0 & 0 & 6 & 0 & 0 & 0 & 0 & 3 & 1 & 2 & 2 & 0 & 3 & 3 & 0 & 5 & 4 & 0 & 0 & 0 & 0 \\
SOFA & 0 & 2 & 0 & 2 & 2 & 0 & 3 & 0 & 0 & 2 & 0 & 0 & 1 & 1 & 1 & 3 & 5 & 4 & 0 & 3 & 4 & 2 & 3 & 0 & 1 & 0 \\
Smoker & 4 & 0 & 0 & 5 & 0 & 0 & 0 & 0 & 0 & 2 & 4 & 0 & 0 & 0 & 0 & 0 & 0 & 0 & 2 & 0 & 0 & 0 & 0 & 0 & 0 & 0 \\
COPD & 2 & 0 & 0 & 0 & 0 & 0 & 0 & 0 & 2 & 0 & 0 & 0 & 0 & 0 & 1 & 2 & 0 & 1 & 1 & 0 & 3 & 0 & 0 & 0 & 0 & 0 \\
Ischemic HD & 3 & 0 & 0 & 0 & 0 & 0 & 0 & 0 & 2 & 0 & 0 & 0 & 0 & 0 & 0 & 0 & 0 & 0 & 2 & 0 & 0 & 0 & 0 & 0 & 0 & 0 \\
ARDS & 0 & 0 & 0 & 0 & 0 & 0 & 0 & 0 & 0 & 0 & 0 & 0 & 0 & 0 & 0 & 0 & 0 & 0 & 1 & 0 & 0 & 0 & 0 & 0 & 0 & 0 \\
Death & 2 & 0 & 0 & 0 & 0 & 0 & 5 & 0 & 0 & 0 & 0 & 0 & 0 & 0 & 1 & 1 & 0 & 0 & 0 & 0 & 3 & 4 & 0 & 0 & 2 & 0 \\
Oxygenation & 1 & 0 & 0 & 0 & 2 & 0 & 1 & 1 & 0 & 2 & 0 & 0 & 2 & 0 & 3 & 1 & 0 & 5 & 0 & 2 & 2 & 2 & 1 & 0 & 0 & 4 \\
SpO\textsubscript{2} & 2 & 0 & 0 & 0 & 1 & 0 & 1 & 2 & 1 & 4 & 0 & 0 & 6 & 2 & 0 & 4 & 2 & 5 & 2 & 2 & 0 & 5 & 3 & 0 & 2 & 2 \\
FiO\textsubscript{2} & 0 & 5 & 0 & 0 & 0 & 0 & 1 & 6 & 2 & 5 & 0 & 0 & 2 & 1 & 3 & 0 & 0 & 1 & 2 & 2 & 5 & 1 & 4 & 2 & 3 & 0 \\
SaO\textsubscript{2} & 3 & 0 & 0 & 0 & 0 & 0 & 0 & 3 & 0 & 0 & 0 & 0 & 0 & 0 & 0 & 0 & 0 & 0 & 0 & 2 & 0 & 0 & 0 & 0 & 1 & 0 \\
PaO\textsubscript{2} & 1 & 0 & 0 & 0 & 0 & 0 & 1 & 3 & 0 & 1 & 0 & 0 & 0 & 2 & 3 & 1 & 0 & 0 & 0 & 2 & 0 & 3 & 1 & 5 & 3 & 0 \\
PaCO\textsubscript{2} & 0 & 0 & 0 & 3 & 0 & 5 & 4 & 0 & 5 & 5 & 5 & 6 & 0 & 0 & 0 & 1 & 2 & 0 & 0 & 3 & 0 & 4 & 0 & 0 & 4 & 0 \\
pH & 1 & 0 & 0 & 0 & 0 & 1 & 0 & 0 & 0 & 0 & 1 & 1 & 0 & 0 & 0 & 0 & 5 & 6 & 4 & 0 & 2 & 0 & 1 & 0 & 0 & 0 \\
Lactate & 0 & 0 & 0 & 0 & 3 & 0 & 3 & 4 & 0 & 3 & 0 & 0 & 5 & 0 & 0 & 2 & 0 & 0 & 0 & 3 & 0 & 0 & 0 & 0 & 1 & 1 \\
Hemoglobin & 0 & 5 & 0 & 0 & 0 & 2 & 3 & 1 & 0 & 1 & 0 & 2 & 3 & 0 & 2 & 1 & 0 & 2 & 4 & 2 & 2 & 0 & 0 & 0 & 0 & 0 \\
Peep & 3 & 2 & 0 & 0 & 5 & 6 & 1 & 5 & 0 & 0 & 0 & 3 & 0 & 1 & 3 & 5 & 4 & 6 & 2 & 1 & 1 & 0 & 0 & 0 & 5 & 4 \\
VT & 0 & 0 & 0 & 0 & 0 & 0 & 0 & 0 & 0 & 0 & 0 & 0 & 0 & 0 & 0 & 0 & 0 & 1 & 0 & 0 & 0 & 1 & 0 & 0 & 1 & 1 \\
Peak AP & 1 & 0 & 0 & 0 & 0 & 0 & 0 & 0 & 0 & 1 & 0 & 1 & 1 & 0 & 0 & 0 & 7 & 4 & 4 & 1 & 7 & 0 & 3 & 0 & 0 & 0 \\
Min. Vent. & 0 & 5 & 0 & 0 & 0 & 0 & 0 & 0 & 0 & 3 & 0 & 0 & 0 & 1 & 1 & 1 & 0 & 0 & 0 & 0 & 1 & 0 & 3 & 0 & 0 & 0 \\

\end{tabular}%
\end{adjustbox}
\end{table}

\end{landscape}

\begin{table}[h]
\centering
\caption{Expert Knowledge encoding in the causal graph}
\label{tab:domain_knowledge}
\resizebox{\textwidth}{!}{%
\begin{tabular}{lllll}
\hline
Node\textsubscript{from} & Node\textsubscript{to} & MJV\footnotemark & Expert Knowledge & Literature (Evidence)  \\\hline
Age &	Gender &	Yes &	Removed & Domain Expert\\
Age &	Trauma &	Yes &	Changed Orientation & Domain Expert; ~\citep{Ottochian2009} \\  
Age &	Smoker &	Yes &	Changed Orientation & Domain Expert \\
APSiii &	Death &	Yes &	Changed Orientation & Domain Expert; ~\citep{GallJR1984} \\
APSiii &	Peak AP &	No &	Added & Domain Expert \\
ARDS &	PaCO\textsubscript{2} &	Yes &	Changed Orientation & Domain Expert; ~\citep{Pham2017} \\
ARDS &	Medical &	No &	Added & Domain Expert \\
BMI &	Oxygenation &	No &	Added & Domain Expert; ~\citep{Pham2017} \\
BMI &	Peak AP &	No &	Added & Domain Expert; ~\citep{Pham2017} \\
BMI &	PaCO\textsubscript{2} & No &	Added & Domain Expert; ~\citep{Pham2017} \\
COPD &	FiO\textsubscript{2} &	Yes &	Changed Orientation & Domain Expert; ~\citep{Pham2017} \\
COPD &	PaCO\textsubscript{2} &	Yes &	Changed Orientation & Domain Expert; ~\citep{Pham2017} \\
COPD &	Medical &	No &	Added & Domain Expert; \\
COPD &	Peak AP &	No &	Added & Domain Expert; ~\citep{Pham2017} \\
FiO\textsubscript{2} &	Gender &	Yes &	Removed & Domain Expert \\
FiO\textsubscript{2} &	Oxygenation &	No &	Added & Domain Expert; ~\citep{Pham2017} \\
Gender &	Min. Vent. &	Yes &	Changed Orientation & Domain Expert; ~\citep{Pham2017} \\
Gender &	Hemoglobin &	Yes &	Changed Orientation & Domain Expert  \\
Hemoglobin &	Death &	Yes &	Changed Orientation & Domain Expert  \\
Hemoglobin &	APSiii &	Yes &	Changed Orientation & Domain Expert  \\
Hemoglobin &	Oxygenation &	No &	Added & Domain Expert \\
SpO\textsubscript{2} &	Medical &	No &	Added & Domain Expert \\
Lactate &	Medical &	No &	Added & Domain Expert;  \\
Min Vent. &	Oxygenation &	Yes &	Changed Orientation & Domain Expert; ~\citep{Pham2017} \\
Oxygenation &	Death &	No &	Added & Domain Expert \\
PaCO\textsubscript{2} &	Ischemic HD &	Yes &	Removed & Domain Expert \\
PaCO\textsubscript{2} &	Hemoglobin &	Yes \footnotemark &	Removed & Domain Expert \\
PaCO\textsubscript{2} &	Peak AP &	Yes \footnotemark[\value{footnote}] &	Removed & Domain Expert \\
PaO\textsubscript{2} &	Oxygenation &	Yes &	Changed Orientation & Domain Expert; ~\citep{Pham2017} \\
Peak AP &	Oxygenation &	No &	Added & Domain Expert; ~\citep{Pham2017} \\
PEEP &	SOFA &	Yes &	Removed & Domain Expert  \\
PEEP &	Medical &	Yes &	Removed & Domain Expert  \\
PEEP &	Peak AP &	Yes &	Removed & Domain Expert \\
PEEP &	Min. Vent. &	Yes &	Removed & Domain Expert;  \\
PEEP &	FiO\textsubscript{2} &	Yes\footnotemark[\value{footnote}] &	Changed Orientation & Domain Expert; ~\citep{Carpio2020} \\
Smoker &	Surgery &	Yes &	Removed & Domain Expert \\
Smoker &	PaCO\textsubscript{2} &	Yes &	Changed Orientation & Domain Expert; ~\citep{MUNRO200621} \\
Smoker &	Medical &	No &	Added & Domain Expert \\
SOFA &	Lactate &	Yes\footnotemark[\value{footnote}] &	Changed Orientation & Domain Expert; ~\citep{Jansen2009} \\
SpO\textsubscript{2} &	Oxygenation &	No &	Added & Domain Expert; ~\citep{Pham2017} \\
VT &	Oxygenation &	No &	Added & Domain Expert; ~\citep{Pham2017} \\
\end{tabular}%
}
\end{table}
\footnotetext[1]{Majority Voting}
\footnotetext[2]{Bidirectional edge}

The final graph after incorporating domain knowledge with expert reviews is shown in ~\autoref{causal_graph_dom}. The graph represents the causal relationship between the 26 variables under consideration after majority voting using SLAs and the encoding of the expert knowledge. 
\begin{figure}[t]
	\centering
	\includegraphics[width=1\textwidth]{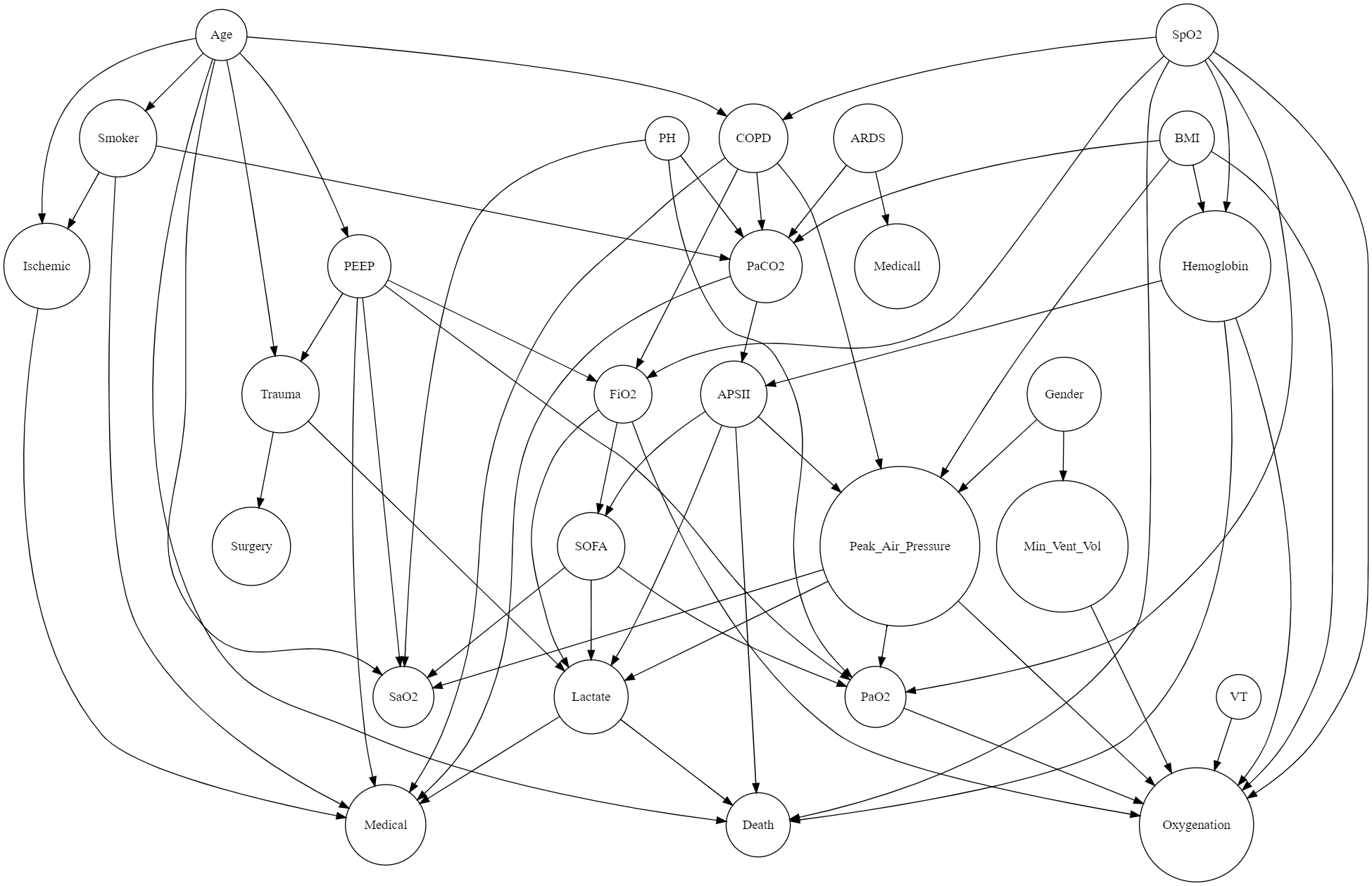}
	\caption{Causal Graph $G$ after incorporating domain knowledge through expert reviews. $oxygenation$ is the treatment variable, and $death$ is the outcome variable.}
	\label{causal_graph_dom}
\end{figure}

\subsection{Conservative versus liberal oxygenation strategies}
The percentage time spent on any mandatory mode of MV during the study time period in the conservative and liberal group is 51\%, hence patients spent the majority of the time within the intended target range for both of the groups. Overall, patients in the conservative group spent more time off- target than the patients in the liberal group. Mean $SpO_{2}$, $PaO_{2}$, and $FiO_{2}$ were well separated for all patients between these groups. Patients in the conservative group spent more time at a $SpO_{2}$ of $90-95\%$ than those in the liberal group (\autoref{fig:spo2_time}). 

We performed a series of experiments based on the causal graph and the selected cohort from the observational data. For each of the following oxygenation strategies, discussed in the subsequent subsections, we have obtained a formula with an admissible set by adjustment using the rules of \textit{do-calculus}. To illustrate the full picture, the derivation of the query in \autoref{sec:severity} presented in the appendix, ~\autoref{apx:query} and ~\autoref{apx:derivation}. We also present the summary of the experimental results for CB-OBS and SCM-VRCT in \autoref{tab:result_vrct}

\subsubsection{Effect on mortality for conservative and liberal oxygenation strategies}
\sloppy
In our first experiment, we estimated the effect of conservative and liberal oxygenation on mortality, $P_{oxygenation} (death)$. The expectations of mortality for conservative and liberal oxygenation strategies are shown in ~\autoref{fig:query_oxy_spo2_death} for both the CB-OBS and SCM-VRCT. For the CB-OBS, we found that the expectation of mortality is higher, 0.54, for the conservative oxygenation than the expectation of the mortality, 0.362 (95\% CI [0.347, 0.377]) for the liberal oxygenation, as shown in \autoref{fig:oxy_death_cb_os}. However, for the SCM-VRCT, the expectation for mortality for the conservative oxygenation strategy is also 0.54 (95\% CI [0.521, 0.558]) which is the same as the expectation we estimated in the CB-OBS. The expected mortality for the liberal approach in SCM-VRCT is 0.34 (95\% CI [0.33, 0.353]), as shown in \autoref{fig:oxy_death_scm_vrct}. The variance for the effect of oxygenation on mortality in SCM-VRCT is significantly lower, 0.01 compared to the results in CB-OBS, 0.49. We also present the effect of $SpO_{2}$ on mortality for SCM-VRCT in \autoref{fig:spo2_death_scm_vrct} 

\subsubsection{Age-specific effect on mortality for oxygenation strategies} 
\sloppy
In the second covariate-specific experiment, we estimated the effect of oxygenation on mortality conditioned on age, $P_{oxygenation} (death | age)$. The expectations of the mortality for conservative and liberal oxygenation for both the CB-OBS and SCM-VRCT are shown in ~\autoref{fig:p_oxy_death_age_apsiii_sofa}. For CB-OBS, we found that the age of the patient does not have any effect on the expected mortality. The expectation of mortality in  SCM-VRCT for the liberal oxygenation strategy conditioned on age is 0.298 (95\% CI [0.285, 0.313]), see \autoref{fig:oxy_death_age_scm_vrct}, which is lower compare to non-age specific CB-OBS, 0.362 (95\% CI [0.347, 0.377]). On the other hand, the expectations of the mortality for conservative oxygenation for SCM-VRCT and CB-OBS are 0.529 (95\% CI [0.505, 0.55]) and 0.54 (95\% CI [0.478, 0.60]), respectively.

\begin{figure}[t]
\centering
\subfigure[SCM-VRCT]{
\includegraphics[width=.3\textwidth]{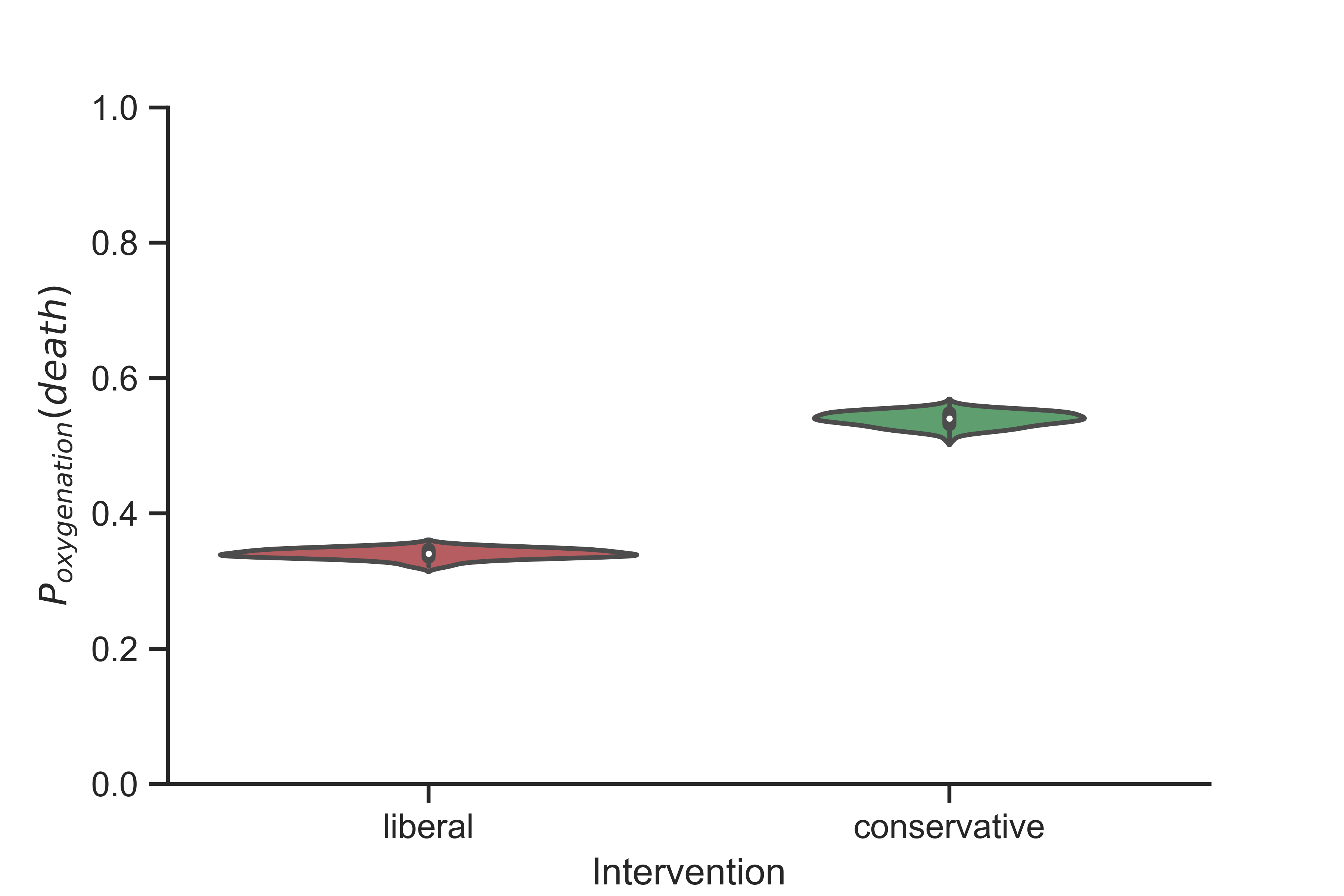}
\label{fig:oxy_death_scm_vrct}
}
\subfigure[SCM-VRCT]{
\includegraphics[width=.3\textwidth]{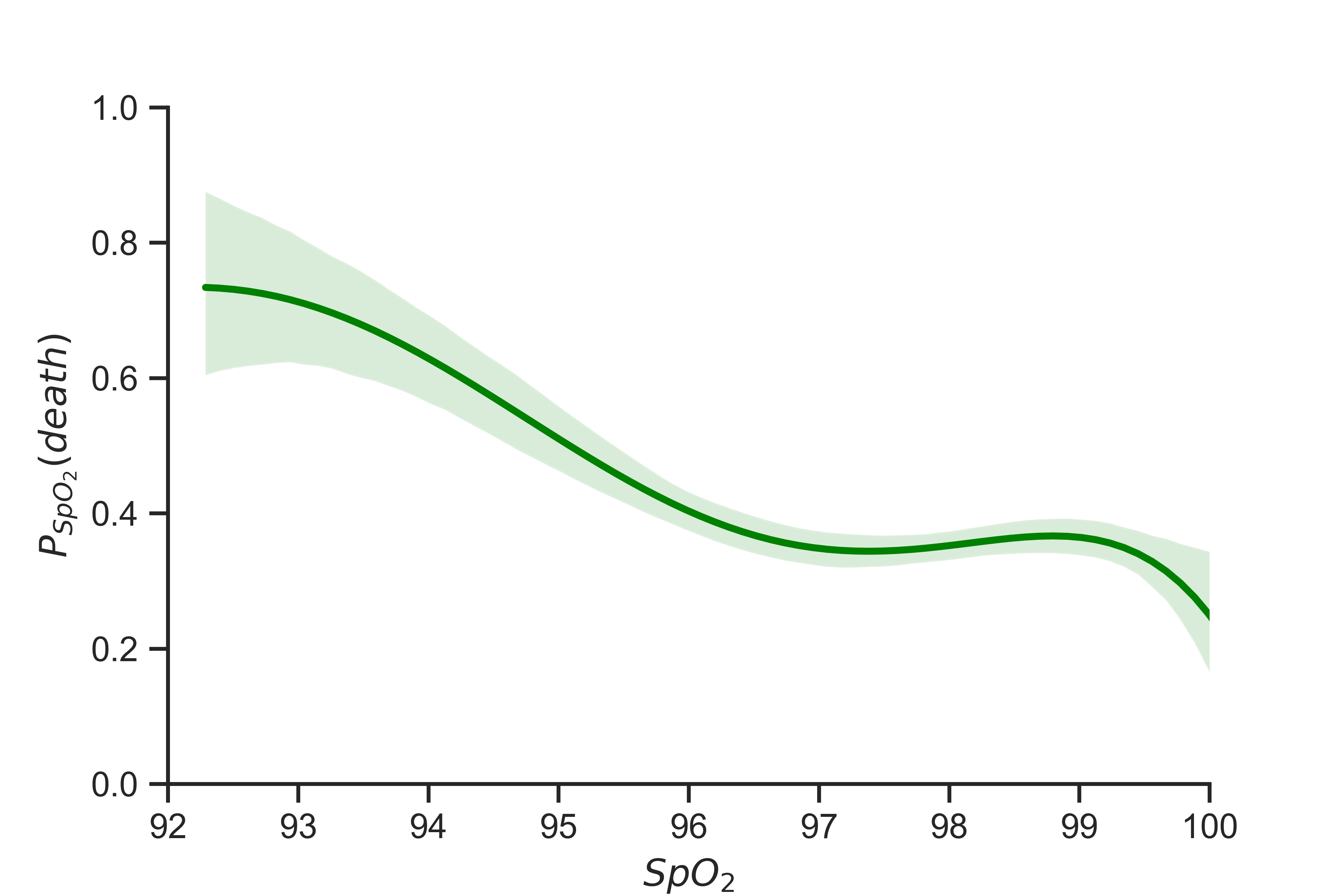}
\label{fig:spo2_death_scm_vrct}
}
\subfigure[CB-OBS]{
\includegraphics[width=.3\textwidth]{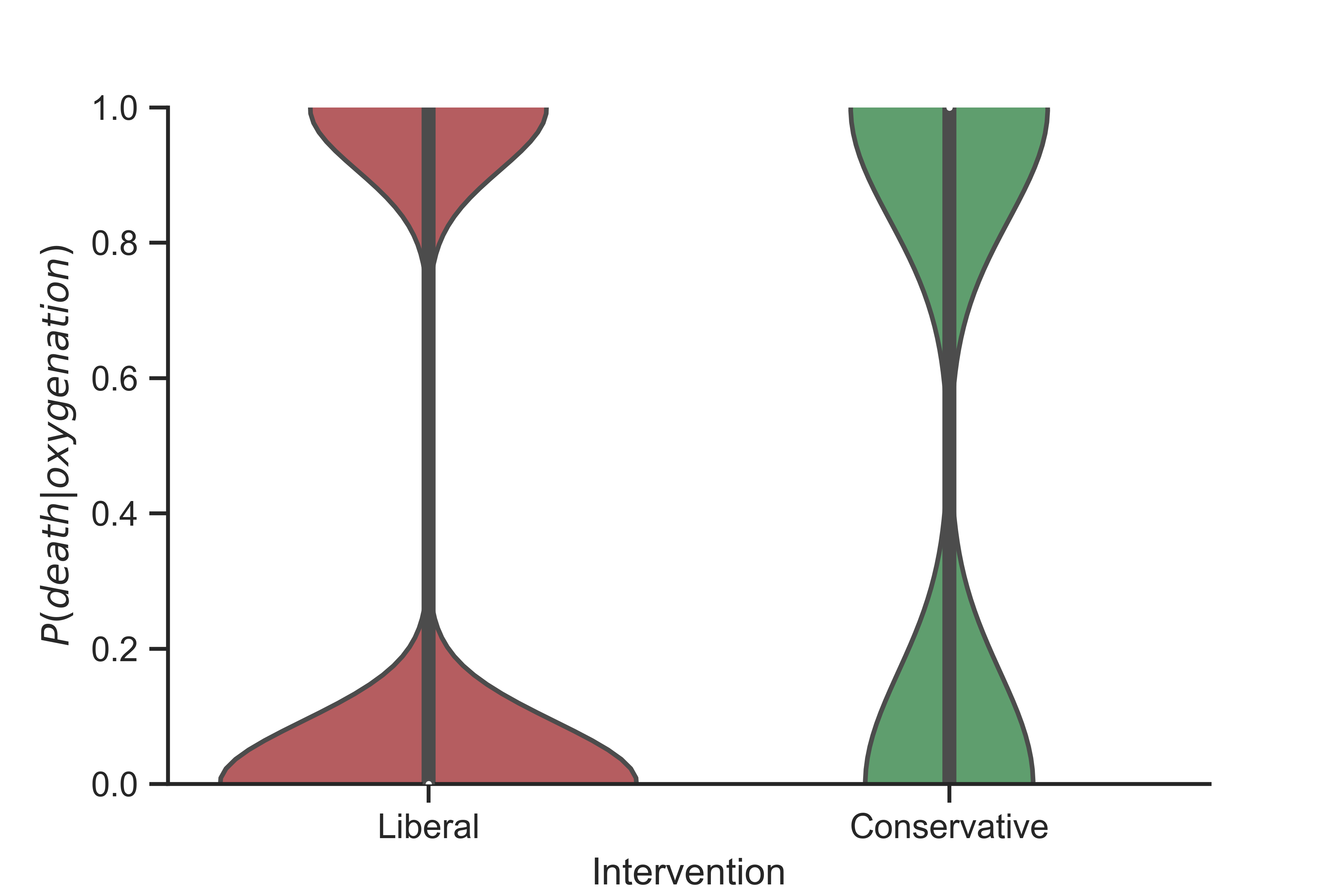}
\label{fig:oxy_death_cb_os}
}

\caption{Estimation of queries on the causal graph for both observational and experimental dataset. (a) $P_{oxygenation}(death)$, (b) $P_{SpO_{2}}(death)$ and (c) $P(death | oxygenation)$}
\label{fig:query_oxy_spo2_death}
\end{figure}

\begin{table}[b]
\centering
\caption{Summary Results of CB-OBS and SCM-VRCT}
\label{tab:result_vrct}
\resizebox{\textwidth}{!}{%
\begin{tabular}{lccccccc}
\multicolumn{1}{c}{\textbf{Virtual Experiments}} & \textbf{Study} & \multicolumn{3}{c}{\textbf{Liberal}} & \multicolumn{3}{c}{\textbf{Conservative}} \\ \midrule
 & \multicolumn{1}{l}{} & \multicolumn{1}{l}{Mean} & \multicolumn{1}{l}{Std} & \multicolumn{1}{l}{Upper 95\% CI} & \multicolumn{1}{l}{Mean} & \multicolumn{1}{l}{Std} & \multicolumn{1}{l}{Upper 95\% CI} \\ 
$P(death | Oxygenation)$ & CB-OBS & 0.362 & 0.481 & 0.377 & 0.54 & 0.499 & 0.60 \\
$P_{Oxygenation}(death)$ & SCM-VRCT & 0.34 & 0.01 & 0.355 & 0.54 & 0.01 & 0.56 \\
$P_{Oxygenation}(death   | age )$ & SCM-VRCT & 0.298 & 0.01 & 0.313 & 0.529 & 0.01 & 0.55 \\
$P_{Oxygenation}(death   | apsiii, sofa )$ & SCM-VRCT & 0.298 & 0.01 & 0.311 & 0.54 & 0.01 & 0.563
\end{tabular}%
}
\end{table}

\subsubsection{Disease severity-specific effect on mortality for oxygenation strategies}\label{sec:severity}
\sloppy
We also estimated the effect of oxygen therapy on mortality conditioned on APSIII and SOFA scores, $P_{oxygenation} (death | APSiii, SOFA)$. The expectation of mortality for conservative and liberal oxygenation for CB-OBS and SCM-VRCT are shown in ~\autoref{fig:p_oxy_death_age_apsiii_sofa}. In CB-OBS, as we discussed in the previous section, the expectation of mortality is higher, 0.54 (95\% CI [0.478, 0.60]), for the conservative oxygenation than the expectation of mortality, 0.362 (95\% CI [0.347, 0.377]) for the liberal oxygenation. The conservative strategy in SCM-VRCT results in 0.539 (95\% CI [0.518, 0.562]), see \autoref{fig:oxy_death_apsiii_sofa_scm_vrct}. However, in the SCM-VRCT, the expectation of the mortality in the liberal strategy is 0.298 (95\% CI [0.0.281, 0.311]) which is significantly lower than the expectation we estimated in the CB-OBS. Therefore, given the severity of illness based on the physiological scores, the liberal approach lowers the expectation of mortality. This finding is consistent with the results of several other studies reported in \autoref{tab:study_outcomes}.
\begin{figure}[t]
\centering
\subfigure[SCM-VRCT]{
\includegraphics[width=.3\textwidth]{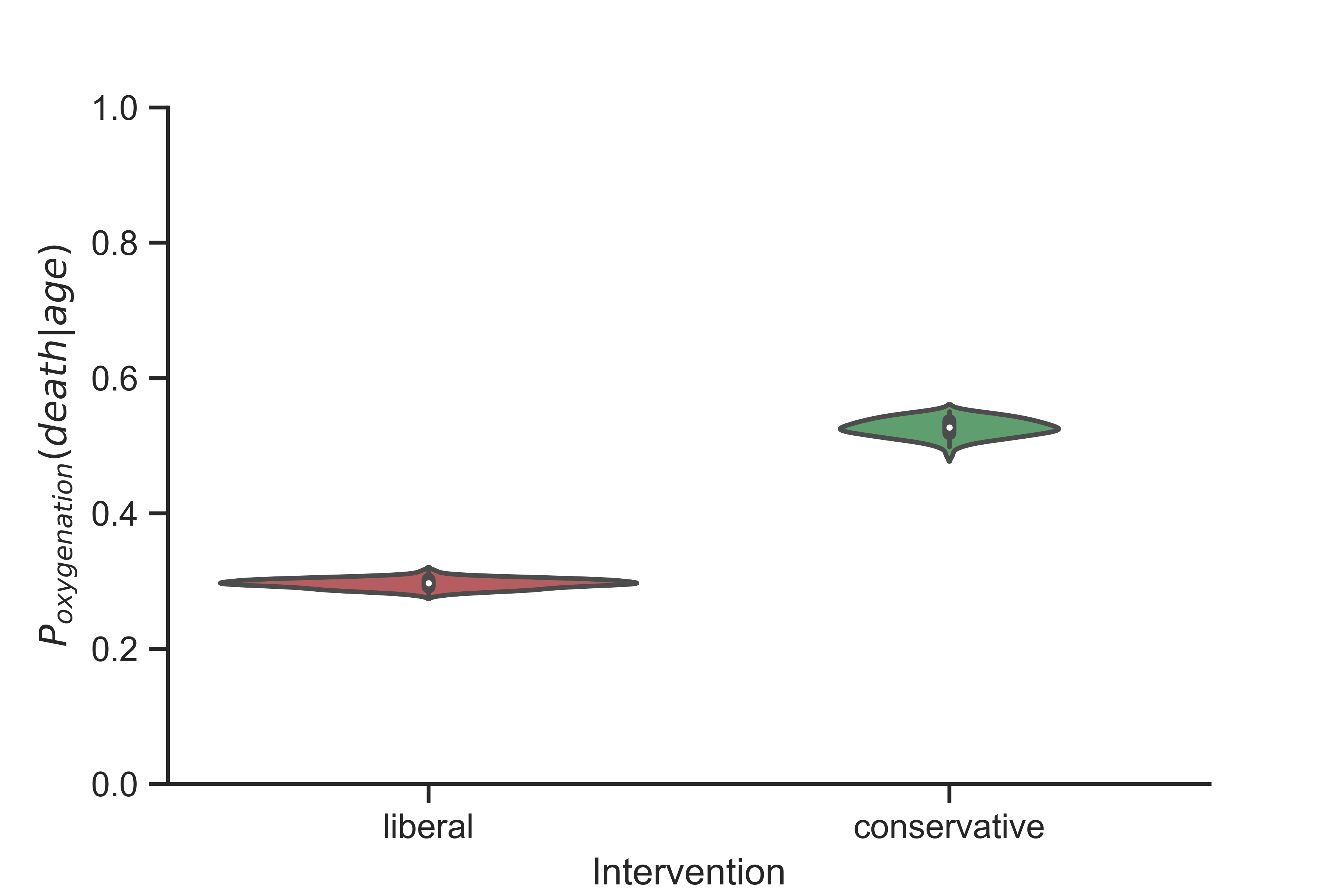}
\label{fig:oxy_death_age_scm_vrct}
}
\subfigure[SCM-VRCT]{
\includegraphics[width=.3\textwidth]{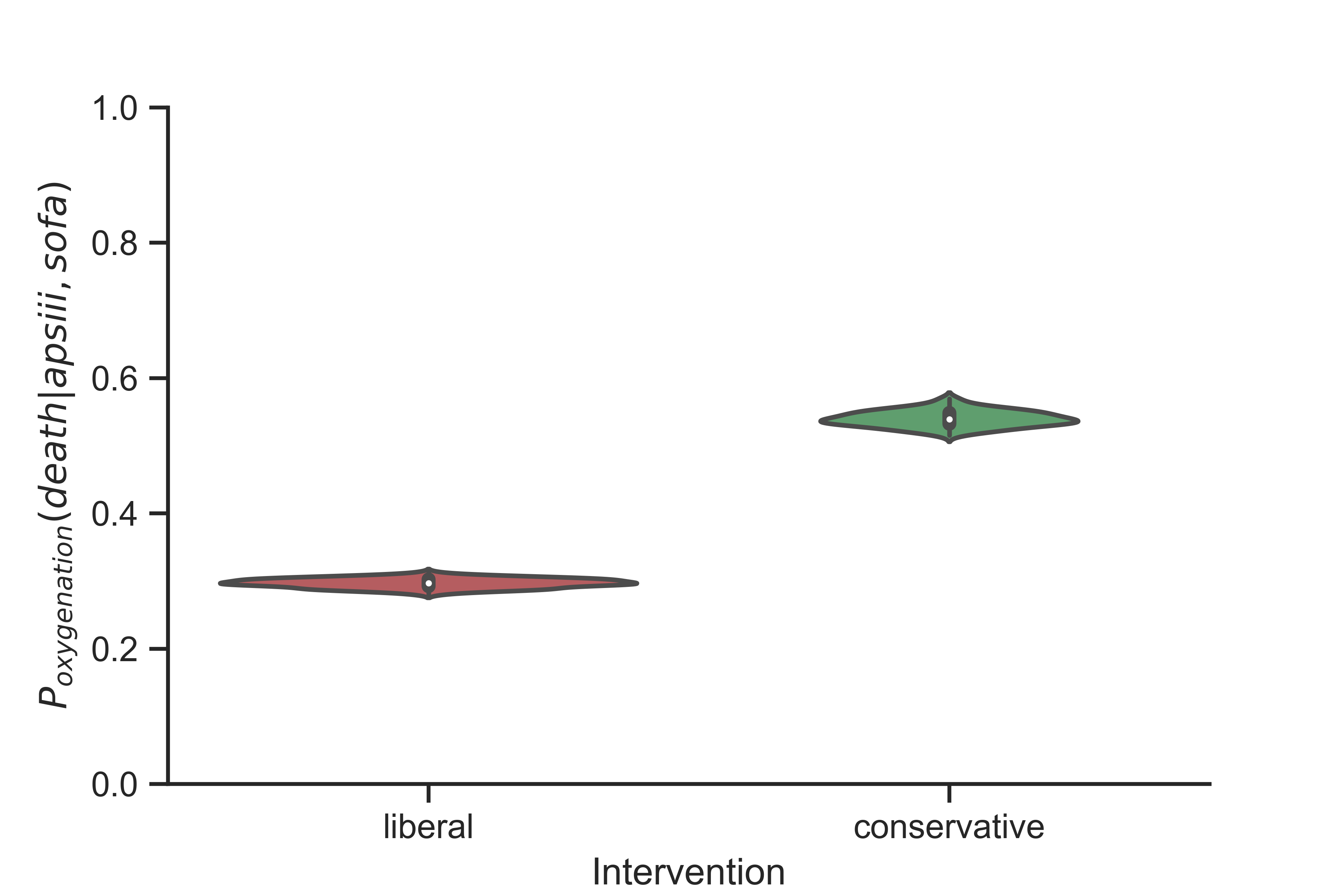}
}
\subfigure[CB-OBS]{
\includegraphics[width=.3\textwidth]{Figures/Observational_distribution.png}
\label{fig:oxy_death_apsiii_sofa_scm_vrct}
}

\caption{Estimation of queries on the causal graph for both observational and experimental dataset. (a) $P_{oxygenation}(death | age)$, (b) $P_{oxygenation}(death | apsiii, sofa)$ and (c) $P(death | oxygenation)$}
\label{fig:p_oxy_death_age_apsiii_sofa}
\end{figure}

\subsubsection{Disease severity-specific effect on mortality conditioned on specific SOFA score}
\label{sec:severity_specific_sofa_score}
\sloppy
Finally, we estimated the effect of oxygenation on mortality conditioned on two different ranges of SOFA score, $P_{oxygenation} (death | SOFA <= 10)$ and $P_{oxygenation} (death | SOFA > 10)$. Although the expectation of mortality decreases as we increase the $SpO_2$, as shown in ~\autoref{fig:oxy_death_apsiii_sofa_scm_vrct}, but the expectations of mortality for conservative and liberal oxygenation are different for two different ranges of SOFA scores. In the lower range of SOFA score $(SOFA <= 10)$, we found that the expectation of mortality is higher, 0.42 for the conservative oxygenation than the expectation of mortality, 0.35 for the liberal oxygenation. However, in the upper range of SOFA score $(SOFA > 10)$, we found that the expectation of mortality is higher, 0.58 for the conservative oxygenation than the expectation of mortality, 0.48 for the liberal oxygenation. One important observation here is that the expected outcome is higher for  patients with a SOFA score greater than 10 compared to patients with a SOFA score smaller than or equal to 10. Therefore, given the severity of illness based on the SOFA score, there is no significant difference in the mortality for different oxygenation targets. This finding also supports the feasibility of a larger study to investigate the effect of conservative oxygenation to treat patients requiring invasive MV in ICUs.

\begin{table}[t]
\centering
\caption{Comparative Analysis of Study Demographics}
\label{tab:study_demographics}
\resizebox{\textwidth}{!}{%
\begin{tabular}{@{}lp{3cm}p{3cm}p{3cm}p{3cm}p{3cm}p{3cm}p{3cm}@{}}
\toprule
Study & \cite{NEJMoa1903297} & \cite{Panwar2016}  & \cite{Girardis2016} & \cite{eastwood2016conservative}  & \cite{SUZUKI2013647} & \multicolumn{2}{c}{MIMIC} \\ \midrule
Type of study & RCT & RCT & RCT & RCT & RCT & CB-OBS & SCM-VRCT \\
\midrule
Included Patients &  Age \textgreater = 18 admitted to ICU on MV with MV time \textless 2 hours & Age   $\geq$ 18 admitted to ICU on MV with expected MV time $\geq$ 24 hours & Age $\geq$ 18 admitted to   ICU with expected ICU LOS $\geq$ 72 hours & Age $\geq$ 18 admitted to ICU on MV post CA & Age   $\geq$ 18 admitted to ICU on MV with expected MV time $\geq$ 48 hours& Age   $\geq$ 18 admitted to ICU on MV with expected MV time $\geq$ 24 hours & Age   $\geq$ 18 admitted to ICU on MV with expected MV time $\geq$ 24 hours \\
\\ \midrule
\multicolumn{8}{l}{SpO2 Minimum   Target}
\\ \midrule
Conservative & 90\% & 88\% & 94\% & 88\% & 90\% & 88\% & 88\% \\
Conventional   (Liberal) & 97\% & 96\% & 97\% & 97\% & 97\% & 96\% & 96\% \\
\\ \midrule
\multicolumn{8}{l}{No of patients}
\\ \midrule
Conservative & 484 & 52 & 216 & 50 & 54 & 250 & 2028 \\
Conventional & 481 & 51 & 218 & 50 & 51 & 3812 & 2034 \\
\\ \midrule
\multicolumn{8}{l}{Age, SD(IQR)} \\
\\ \midrule
Conservative & 58.1(16.2) & 62.4 (14.9) & 63 (51-74) & 65(59-77) & 56 (16) & 60.82 (15.6) & 60.78 (0.55) \\
Conventional & 57.5 (16.1) & 62.4 (17.4) & 65 (52-76) & 67(50-71) & 59 (17) & 62.94 (16.22) & 62.95(0.34) \\
\\ \midrule
\multicolumn{8}{l}{Female Gender (\%)} \\
\\ \midrule
Conservative & 178 (36.8\%) & 20 (38.5\%) & 95 (44.0\%) & 21 (42.0\%) & 22 (40.7\%) & 100 (40.0\%) & 801(10\%) \\
Conventional & 179 (37.2\%) & 18 (35.3\%) & 93 (42.7\%) & 16 (32.0\%) & 13 (25.5\%) & 1615 (42.4\%) & 847 (42\%) \\
\\ \midrule
\multicolumn{8}{l}{Mechanical Ventilation   N, (\%)} \\
\\ \midrule
Conservative & 484(100\%) & 52   (100\%) & 143   (66.2\%) & 50   (100\%) & 54 (100\%)  & 250 (100\%) & 2000(100\%) \\
Conventional & 481(100\%) & 51   (100\%) & 148   (67.9\%) & 50   (100\%) & 51 (100\%) & 3812 (100\%) & 2000(100\%) \\
\\ \midrule
\multicolumn{8}{l}{APACHE III score   (IQR)} \\
\\ \midrule
Conservative & NR (APACHE II) & 79.5   (61-92.5) & NR & 121   (105-142) & 62 (49-92)  & NR (APS III) & NR (APS III) \\
Conventional & NR (APACHE II) & 70   (50-84) & NR \footnotemark & 125   (107-141) & 68   (42-94) & NR (APS III) & NR (APS III) \\ \bottomrule
\end{tabular}%
}
\end{table}
\footnotetext{Not reported}

\begin{table}[t]
\centering
\caption{Comparative Analysis of Study Outcome Measures}
\label{tab:study_outcomes}
\resizebox{\textwidth}{!}{%
\begin{tabular}{@{}llllllcc@{}}
\toprule
Study & \cite{NEJMoa1903297} & \cite{Panwar2016}  & \cite{Girardis2016} & \cite{eastwood2016conservative}  & \cite{SUZUKI2013647} & \multicolumn{2}{c}{MIMIC} \\
 &  &  &  &  &  & CB-OBS & SCM-VRCT \\ \midrule
\multicolumn{8}{l}{ICU Mortality, N   (\%)} \\ \midrule
Conservative & NR & 13   (25\%) & 25   (11.6\%) & 15   (30\%) & NR & NR &  NR\\
Conventional & NR & 12   (24\%) & 44   (20.2\%) & 16   (32\%) & NR & NR &  NR\\ \midrule
\multicolumn{8}{l}{28-day Mortality,   N (\%)} \\ \midrule
Conservative & NR & NR & NR & 27 (54\%) & 9   (16.7\%) & 122 (48.8\%) &  NR \\
Conventional & NR & NR & NR & 26   (52\%) & 16   (31.4\%) & 949 (24.89\%) & NR \\ \midrule
\multicolumn{8}{l}{90-day   Mortality, N (\%)} \\ \midrule
Conservative & 166   (34.7\%) & 21   (40\%) & NR & NR & NR & 135   (54\%) &  1078(53.9\%)\\
Conventional & 156   (32.5\%) & 19   (37\%) & NR & NR & NR & 1380   (36.2\%) & 682(34.1\%) \\ \midrule
\multicolumn{8}{l}{180-day   Mortality, N (\%)} \\ \midrule
Conservative & 170   (35.7\%) & NR & NR & NR & NR & NR & NR \\
Conventional & 164   (34.5\%) & NR & NR & NR & NR & NR &  NR\\ \midrule
\multicolumn{8}{l}{Hospital   Mortality, N (\%)} \\ \midrule
Conservative & NR & NR & 52   (24.2\%) & 28   (56\%) & NR & NR & NR \\ 
Conventional & NR & NR & 74   (33.9\%) & 27   (54\%) & NR & NR &  NR\\ \midrule
\multicolumn{8}{l}{ICU   LOS, Days (IQR)} \\ \midrule
Conservative & NR & 9   (5-13) & 6 (4-10) & 4   (2-7) & NR & NR & NR \\
Conventional & NR & 7   (4-12) & 6   (4-11) & 5   (4-9) & NR & NR & NR \\ \midrule
\multicolumn{8}{l}{Hospital LOS,   Days (IQR)} \\ \midrule
Conservative & NR & 20   (10-25) & 21   (13-38) & 9   (3-17) & NR & NR & NR \\
Conventional & NR & 16   (7-30) & 21   (12-34) & 9   (4-24) & NR & NR &  NR\\ \midrule
\multicolumn{8}{l}{New   infections, N (\%) - Search database for Bacteremia (Girardis looked at this),   Nosocomial} \\ \midrule
Conservative & NR & NR & 39   (18.1) & NR & 31   (57.4) & NR & NR \\
Conventional & NR & NR & 50   (22.9) & NR & 28   (54.9) & NR & NR \\ \midrule 
\multicolumn{8}{l}{New   non-respiratory organ failure, N (\%) – 3 organs, if possible: Creatine   baseline vs. after 2 weeks or 30 days; Troponin baseline vs. after 1 week;   Lactate baseline vs. 1 week} \\ \midrule
Conservative & NR & NR & 38   (17.6) & NR & 16   (29.6) & NR & NR \\
Conventional & NR & NR & 58   (26.6) & NR & 22   (43.1) & NR & NR\\ \bottomrule

\end{tabular}%
}
\end{table}
\section{Discussion}
    The study was conducted and validated on the MIMIC III ICU database, which contains data routinely collected from adult patients in the US. We included adult patients fulfilling the requirement for the study based on the OT-RCT study. After the exclusion of ineligible cases, we included 4,062 patients for the study. The time-series data of oxygenation parameters and ventilator settings were then used to compute the mean mortality over the entire period. Patient demographics and clinical characteristics are shown in ~\autoref{tab:summary-statistics-table}. We extracted a set of 26 variables, 24 variables following the OT-RCT, including demographics, oxygenation parameters and ventilator settings, and 2 variables, one for oxygenation (treatment) and another for mortality (outcome).

We assessed the feasibility of a conservative oxygenation strategy (target $SpO_{2}$ 88-95\%) compared to a liberal oxygenation strategy (target $SpO_{2} \geq$ 96\%) during MV for adult ICU patients. The study protocol followed the protocol in the OT-RCT. To achieve treatment feasibility, this study implemented a clear separation in the mean $SpO_{2}, SaO_{2}, PaO_{2}, $ and $FiO_{2}$ values between the groups. 
The point estimate for 90-day mortality was lower with the liberal oxygenation strategy. This is not consistent with the findings of the OT-RCT. However, in the SCM-VRCT, the expectation of the mortality in the liberal strategy is significantly lower than the expectation we estimated in the CB-OBS. We conclude that given the severity of illness based on the physiological scores, the liberal approach lowers the expectation of mortality. This finding is consistent with the results of several other studies reported in \autoref{tab:study_outcomes}.. The result of the estimation are presented in ~\autoref{fig:query_oxy_spo2_death} (a). Another important observation is that the expected outcome is higher for  patients with a SOFA score greater than 10 compared to patients with a SOFA score smaller than or equal to 10. That is, given the severity of illness based on the SOFA score, there is no significant difference in the mortality for different oxygenation targets. This finding also supports the feasibility of a larger study to investigate the effect of conservative oxygenation to treat patients requiring invasive MV in ICUs.

Several methodological limitations that we have in the current model are currently being addressed by us for future work with the theoretical foundations. These include time-varying causal model with SCM, survival analysis with SCM, and incorporating unobserved confounders in the model, among others.

It is important to note that although we demonstrated this method on oxygenation therapy for MV patients in the ICU, the technique can be applied generally for a broad set of clinical questions.  We are currently applying this method for a range of questions including estimating the impact of antipsychotic drugs on delirium in the ICU and estimating the impact of timing of antibiotics for patients with chronic pulmonary obstructive disease. 

\section{Acknowledgement}
    
This study was partially supported by the Regenstrief Center for Healthcare Engineering at Purdue University. We would like to express our sincere gratitude to Professor Elias Bareinboim for his insights on the methodological framework. Part of the analysis (Appendix \autoref{apx:derivation}) was achieved with the \href{causalfusion}{causalfusion.net} software developed by Dr. Bareinboim. 
\newpage
\bibliography{main.bib}

\newpage
\appendix
    \begin{appendices}
\makeatletter
\newenvironment{shiftedflalign}{
	\start@align\tw@\st@rredfalse\m@ne
	\qquad\qquad
}{
	\endalign
}
\newenvironment{shiftedflalign*}{
	\start@align\tw@\st@rredtrue\m@ne
	\qquad\qquad
}{
	\endalign
}
\makeatother

\noindent
\bf{Appendices: Derivation of}
{\bf Identifiability of the causal effect of $oxygenation$ on $death$ given $\left\{ apsiii, sofa \right\}$}
\renewcommand{\thesubsection}{\Alph{subsection}}


\section{Formulation}
\label{apx:query}

\ \\
To keep the derivation steps brief and concise, we define the following sets of variables in ~\autoref{tab:variableSet} that are combinations of the 24 variables we have in our model. The large number of variables, 24 variables, we have in the model makes the subscripts in the equations long and clumsy. We use these sets in the derivation section.

\begin{table}[hp]
\centering
\caption{The Variable Sets Defined for the Derivation}
\label{tab:variableSet}
\resizebox{0.8\textwidth}{!}{%
\begin{tabular}{ll}
\textbf{Set} & \textbf{Variables} \\
$U=$ & $\left\{\substack{age,apsiii,ards,bmi,copd,death,fio2,gender,hemoglobin,ischemicHd,lactate,\\medical,minVentVol,oxygenation,paco2,pao2,peakAirPressure,\\peep,ph,sao2,smoker,sofa,spo2,surgery,trauma,vt}\right\}$\\
$A=$ & $\left\{\substack{oxygenation,apsiii,sofa,smoker,copd,spo2,fio2,hemoglobin,peep,peakAirPressure}\right\}$\\
$B=$ & $\left\{\substack{smoker,copd,spo2,fio2,hemoglobin,peep,peakAirPressure}\right\}$\\
$C=$ & $\left\{\substack{age,gender,bmi,trauma,smoker,copd,ards,spo2,fio2,paco2,ph,hemoglobin,peep,\\peakAirPressure,lactate}\right\}$\\
$D=$ & $\left\{\substack{age,gender,bmi,trauma,apsiii,sofa,smoker,copd,ards,death,spo2,fio2,\\paco2,ph,hemoglobin,peep,peakAirPressure,lactate}\right\}$\\
$E=$ & $\left\{\substack{bmi,ards,age,smoker,ph,spo2,copd,paco2,gender,hemoglobin,apsiii,\\peakAirPressure,peep,trauma,fio2,lactate}\right\}$\\
$F=$ & $\left\{\substack{bmi,ards,age,smoker,ph,spo2,copd,paco2,gender,hemoglobin,apsiii,peep,\\peakAirPressure,trauma,fio2,lactate,minVentVol,sofa,pao2,vt,oxygenation}\right\}$\\
$G=$ & $\left\{\substack{lactate,fio2,trauma,peep,peakAirPressure,apsiii,hemoglobin,gender,paco2,\\copd,spo2,ph,smoker,age,ards,bmi}\right\}$\\
$H=$ & $\left\{\substack{oxygenation,vt,pao2,sofa,minVentVol,lactate,fio2,trauma,peep,age,ards,\\peakAirPressure,apsiii,hemoglobin,gender,paco2,copd,spo2,ph,smoker,bmi}\right\}$\\
$I=$ & $\left\{\substack{fio2,trauma,peep,peakAirPressure,apsiii,hemoglobin,gender,paco2,\\copd,spo2,ph,smoker,age,ards,bmi}\right\}$\\
$J=$ & $\left\{\substack{gender,trauma,surgery,smoker,ischemicHd,hemoglobin,ph,minVentVol,ards,bmi,\\medical,paco2,apsiii,peakAirPressure,lactate,sofa,sao2,pao2,vt,oxygenation,death}\right\}$\\
$K=$ & $\left\{\substack{age,copd,peep,fio2,trauma,surgery,smoker,ischemicHd,ph,minVentVol,ards,bmi,\\paco2,apsiii,peakAirPressure,lactate,medical,sofa,sao2,pao2,vt,oxygenation,death}\right\}$\\
$L=$ & $\left\{\substack{ph,spo2,gender,hemoglobin,copd,fio2,surgery,smoker,ischemicHd,\\minVentVol,ards,bmi,paco2,apsiii,peakAirPressure,lactate,\\medical,sofa,sao2,pao2,vt,oxygenation,death}\right\}$\\
$M=$ & $\left\{\substack{peep,trauma,gender,hemoglobin,fio2,surgery,ischemicHd,minVentVol,\\apsiii,peakAirPressure,lactate,medical,sofa,sao2,\\pao2,vt,oxygenation,death}\right\}$\\
$N=$ & $\left\{\substack{gender,peep,trauma,surgery,smoker,ischemicHd,fio2,hemoglobin,ph,\\minVentVol,ards,bmi,paco2,apsiii,peakAirPressure,lactate,medical,\\sofa,sao2,pao2,vt,oxygenation,death}\right\}$\\
$O=$ & $\left\{\substack{bmi,ards,age,smoker,ph,spo2,copd,paco2,gender,hemoglobin,apsiii,\\peakAirPressure,peep,trauma,fio2}\right\}$\\
$P=$ & $\left\{\substack{peep,trauma,fio2,surgery,ischemicHd,minVentVol,peakAirPressure,\\lactate,medical,sofa,sao2,pao2,vt,oxygenation,death}\right\}$\\
$Q=$ & $\left\{\substack{bmi,ards,age,smoker,ph,spo2,copd,paco2,gender,hemoglobin,apsiii}\right\}$\\
$R=$ & $\left\{\substack{peep,trauma,fio2,surgery,ischemicHd,minVentVol,lactate,medical,\\sofa,sao2,pao2,vt,oxygenation,death}\right\}$\\
$S=$ & $\left\{\substack{apsiii,hemoglobin,gender,paco2,copd,spo2,ph,smoker,age,ards,bmi}\right\}$
\end{tabular}%
}
\end{table}

The qualitative knowledge of causal relationships in the domain is represented by a causal model shown in 
The treatment variable is $oxygenation$ and the outcome variable is $death$.
We show that the causal effect $do(oxygenation = oxygenation)$ on $death$ given $\left\{ apsiii, sofa \right\}$, written as \\$P_{oxygenation}\left(death \middle| apsiii,sofa\right)$, is identifiable from a distribution over the observed variables $P\left(U\right)$.

\section{Derivation}
\label{apx:derivation}
\begin{theorem}
	The causal effect of $oxygenation$ on $death$ given $\left\{ apsiii, sofa \right\}$ is identifiable from $P_{}\left(U\right)$ and is given by the formula
	\[ P_{oxygenation}\left(death \middle| apsiii,sofa\right) = \sum_{B}{P\left(death \middle| A\right)P\left(B\right)} \]
\end{theorem}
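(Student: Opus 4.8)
The plan is to reduce the interventional query \(P_{oxygenation}(death \mid apsiii, sofa)\) to a \(do\)-free expression by repeatedly applying the three rules of \textit{do}-calculus stated above, which is exactly the mechanical identification procedure that the causalfusion software automates. The first fact I would record is structural: in the final DAG \(G\) the only child of \(oxygenation\) is \(death\), so the severity scores \(apsiii\) and \(sofa\), together with every variable in \(B\), are non-descendants of the treatment. I would then split the query by the law of total probability over \(B\), writing \(P_{oxygenation}(death\mid apsiii,sofa)=\sum_{B} P_{oxygenation}(death\mid apsiii,sofa,B)\,P_{oxygenation}(B\mid apsiii,sofa)\), and treat the two factors independently.

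For the first factor the goal is to exchange the action \(do(oxygenation)\) for the plain observation \(oxygenation\) via Rule 2, which is licensed precisely when \((death \independent oxygenation \mid apsiii, sofa, B)\) holds in \(G_{\underline{oxygenation}}\); graphically this asks whether \(A\setminus\{oxygenation\}=\{apsiii,sofa\}\cup B\) blocks every back-door path leaving \(oxygenation\) through its parents (\(fio2, spo2, peep, peakAirPressure, hemoglobin,\dots\)). Establishing this back-door blocking gives \(P_{oxygenation}(death\mid apsiii,sofa,B)=P(death\mid A)\). For the second factor I would delete the residual action with Rule 3: since \(B\cup\{apsiii,sofa\}\) are non-descendants of \(oxygenation\), cutting the arrows into \(oxygenation\) in \(G_{\overline{oxygenation}}\) leaves it attached only to \(death\), so \((oxygenation \independent B\mid apsiii,sofa)\) there and \(P_{oxygenation}(B\mid apsiii,sofa)=P(B\mid apsiii,sofa)\). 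Recombining yields the conditional back-door formula \(\sum_{B}P(death\mid A)\,P(B\mid apsiii,sofa)\).

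The hard part is the final reduction, collapsing \(P(B\mid apsiii,sofa)\) to the marginal \(P(B)\) that appears in the statement. This is \emph{not} a single marginal-independence step: because \(B\) contains variables adjacent to the severity scores (e.g.\ the edge \(hemoglobin\to apsiii\)), one has \(B\not\independent\{apsiii,sofa\}\) in general. The equality must instead be argued by partitioning \(B\) into the components correlated with \(\{apsiii,sofa\}\) and those that are not, and showing that the former are \(d\)-separated from \(death\) given \(\{oxygenation,apsiii,sofa\}\) and the remaining \(B\)-variables, so that \(\sum_{B}P(death\mid A)\bigl[P(B\mid apsiii,sofa)-P(B)\bigr]=0\). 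Carrying out this bookkeeping means tracking \(d\)-separations across several mutilated versions of a 26-node DAG, which is error-prone by hand and is why the derivation is best verified with software; once it is done, the resulting expression is manifestly \(do\)-free and hence estimable from the observational joint \(P(U)\), completing the identification.
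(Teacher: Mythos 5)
Your overall strategy is genuinely different from the paper's, and the difference matters. The paper never forms the two-factor decomposition $\sum_{B} P_{oxygenation}(death \mid apsiii,sofa,B)\,P_{oxygenation}(B \mid apsiii,sofa)$. Instead it rewrites the conditional query as a ratio of interventional quantities (its Eq.~(\ref{eq2})), uses Rule 3 to enlarge the intervention set, and then invokes a C-component factorization of the (Markovian) model: the interventional joint is written as a product of per-variable factors $P_{U-\{v\}}(v)$, each of which is reduced to an observational conditional $P(v \mid pa_v)$ in eighteen separate Tasks via Rules 2 and 3 (e.g.\ $P_{U-\{copd\}}(copd)=P(copd \mid age,spo2)$), and the final formula $\sum_{B} P(death \mid A)P(B)$ falls out by substituting all eighteen factors back and marginalizing. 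Your first two moves are sound: since $death$ is the only child of $oxygenation$ in the final DAG, Rule 2 legitimately exchanges the action for an observation in the first factor, and Rule 3 deletes the action in the second, yielding the conditional back-door expression $\sum_{B} P(death \mid A)\,P(B \mid apsiii,sofa)$.

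The genuine gap is exactly the step you flag and then decline to carry out: the passage from $P(B \mid apsiii,sofa)$ to the marginal $P(B)$. This is the only place the theorem's formula differs from the generic conditional back-door adjustment, so it is the entire content of the claim, and your proposal offers only a sketch (partition $B$, track d-separations across mutilated graphs, verify by software). You correctly observe that no single independence licenses the collapse --- the final graph contains $hemoglobin \to apsiii$ and $fio2 \to sofa$, so $B \not\perp \{apsiii,sofa\}$ --- but you never establish the claimed cancellation $\sum_{B} P(death \mid A)\bigl[P(B \mid apsiii,sofa)-P(B)\bigr]=0$, nor even argue that it holds rather than merely that it is needed; deferring the verification to the identification software is not a proof. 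As written, your argument only establishes the weaker identity with weights $P(B \mid apsiii,sofa)$. To close the gap you would either have to execute that cancellation bookkeeping explicitly, or abandon the two-factor decomposition and follow the paper's route, in which the marginal weights $P(B)$ emerge mechanically from the C-component factorization and the per-variable reductions of Tasks 1--18, rather than from a post-hoc collapse of $P(B \mid apsiii,sofa)$.
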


\begin{proof}
\begin{subequations}
\begin{align}
	& P_{oxygenation}\left(death \middle| apsiii,sofa\right) \label{eq1} \\
	&= \frac{P_{oxygenation}\left(death,apsiii,sofa\right)}{P_{oxygenation}\left(apsiii,sofa\right)} \label{eq2} \\
	&=  P_{oxygenation,pao2,vt,minVentVol}\left(death,apsiii,sofa\right) \label{eq3} \\
	&= \sum_{C}{P_{surgery,ischemicHd,minVentVol,medical,sao2,pao2,vt,oxygenation}\left(D\right)} \label{eq4} \\
\end{align}
\end{subequations}
	Eq. (\ref{eq2}) follows from the definition of conditional probability and Eq. (\ref{eq3})  from the third rule of do-calculus with the independence\\  $\left(pao2,vt,minVentVol \perp death,apsiii,sofa | oxygenation\right)_{G_{\overline{oxygenation,pao2,vt,minVentVol}}}$ (refer to Fig. \ref{figure2}).
	Eq. (\ref{eq4}) follows from summing over $\left\{ C \right\}$ and Eq. (\ref{eq5})  from C-component factorization. \\\\
	\textbf{Task 1: Compute $P_{U - \{age\}}\left(age\right)$} 
	\begin{shiftedflalign}
		& P_{U - \{age\}}\left(age\right) & \label{eq6} \\
		&= P_{}\left(age\right) \label{eq7}
	\end{shiftedflalign}
	Eq. (\ref{eq7}) follows from the third rule of do-calculus with the independence\\ $\left(U - \{age\} \perp age\right)_{G_{\overline{U - \{age\}}}}$ (refer to Fig. \ref{figure3}). \\\\
	\textbf{Task 2: Compute $P_{U - \{peep\}}\left(peep\right)$} 
	\begin{shiftedflalign}
		& P_{U - \{peep\}}\left(peep\right) & \label{eq8} \\
		&= {{P_{age}\left(peep\right)}} \label{eq9} \\
		&= {{P_{}\left(peep \middle| age\right)}} \label{eq10}
	\end{shiftedflalign}
	Eq. (\ref{eq9}) follows from the third rule of do-calculus with the independence \\$\left(U - \{age, peep\} \perp peep | age\right)_{G_{\overline{U - \{peep\}}}}$ (refer to Fig. \ref{figure4}).
	Eq. (\ref{eq10}) follows from the second rule of do-calculus with the independence $\left(age \perp peep\right)_{G_{\overline{}\underline{age}}}$ (refer to Fig. \ref{figure5}). \\\\
	\textbf{Task 3: Compute $P_{U - \{gender\}}\left(gender\right)$} 
	\begin{shiftedflalign}
		& P_{U - \{gender\}}\left(gender\right) & \label{eq11} \\
		&= P_{}\left(gender\right) \label{eq12}
	\end{shiftedflalign}
	Eq. (\ref{eq12}) follows from the third rule of do-calculus with the independence \\$\left(U - \{gender\} \perp gender\right)_{G_{\overline{U - \{gender\}}}}$ (refer to Fig. \ref{figure3}). \\\\
	\textbf{Task 4: Compute $P_{U - \{SpO_{2}\}}\left(spo2\right)$} 
	\begin{shiftedflalign}
		& P_{U - \{SpO_{2}\}}\left(spo2\right) & \label{eq13} \\
		&= P_{}\left(spo2\right) \label{eq14}
	\end{shiftedflalign}
	Eq. (\ref{eq14}) follows from the third rule of do-calculus with the independence\\ $\left(U - \{SpO_{2}\} \perp spo2\right)_{G_{\overline{U - \{SpO_{2}\}}}}$ (refer to Fig. \ref{figure3}). \\\\
	\textbf{Task 5: Compute $P_{U - \{copd\}}\left(copd\right)$} 
	\begin{shiftedflalign}
		& P_{U - \{copd\}}\left(copd\right) & \label{eq15} \\
		&= {{P_{spo2,age}\left(copd\right)}} \label{eq16} \\
		&= {{P_{}\left(copd \middle| age,spo2\right)}} \label{eq17}
	\end{shiftedflalign}
	Eq. (\ref{eq16}) follows from the third rule of do-calculus with the independence \\$\left(N \perp copd | spo2,age\right)_{G_{\overline{U - \{copd\}}}}$ (refer to Fig. \ref{figure6}).
	Eq. (\ref{eq17}) follows from the second rule of do-calculus with the independence \\ $\left(age,spo2 \perp copd\right)_{G_{\overline{}\underline{age,spo2}}}$ (refer to Fig. \ref{figure7}). \\\\
	\textbf{Task 6: Compute $P_{U - \{fio2\}}\left(fio2\right)$} 
	\begin{shiftedflalign}
		& P_{U - \{fio2\}}\left(fio2\right) & \label{eq18} \\
		&= {{P_{spo2,age,copd,peep}\left(fio2\right)}} \label{eq19} \\
		&= {{P_{}\left(fio2 \middle| peep,copd,age,spo2\right)}} \label{eq20}
	\end{shiftedflalign}
	Eq. (\ref{eq19}) follows from the third rule of do-calculus with the independence \\$\left(J \perp fio2 | spo2,age,copd,peep\right)_{G_{\overline{U - \{fio2\}}}}$ (refer to Fig. \ref{figure8}).
	Eq. (\ref{eq20}) follows from the second rule of do-calculus with the independence $\left(peep,copd,age,spo2 \perp fio2\right)_{G_{\overline{}\underline{peep,copd,age,spo2}}}$ (refer to Fig. \ref{figure9}). \\\\
	\textbf{Task 7: Compute $P_{U - \{hemoglobin\}}\left(hemoglobin\right)$} 
	\begin{shiftedflalign}
		& P_{U - \{hemoglobin\}}\left(hemoglobin\right) & \label{eq21} \\
		&= {{P_{spo2,gender}\left(hemoglobin\right)}} \label{eq22} \\
		&= {{P_{}\left(hemoglobin \middle| gender,spo2\right)}} \label{eq23}
	\end{shiftedflalign}
	Eq. (\ref{eq22}) follows from the third rule of do-calculus with the independence \\$\left(K \perp hemoglobin | spo2,gender\right)_{G_{\overline{U - \{hemoglobin\}}}}$ (refer to Fig. \ref{figure10}).
	Eq. (\ref{eq23}) follows from the second rule of do-calculus with the independence $\left(gender,spo2 \perp hemoglobin\right)_{G_{\overline{}\underline{gender,spo2}}}$ (refer to Fig. \ref{figure11}). \\\\
	\textbf{Task 8: Compute $P_{U - \{ph\}}\left(ph\right)$} 
	\begin{shiftedflalign}
		& P_{U - \{ph\}}\left(ph\right) & \label{eq24} \\
		&= P_{}\left(ph\right) \label{eq25}
	\end{shiftedflalign}
	Eq. (\ref{eq25}) follows from the third rule of do-calculus with the independence \\$\left(U - \{ph\} \perp ph\right)_{G_{\overline{U - \{ph\}}}}$ (refer to Fig. \ref{figure3}). \\\\
	\textbf{Task 9: Compute $P_{U - \{trauma\}}\left(trauma\right)$} 
	\begin{shiftedflalign}
		& P_{U - \{trauma\}}\left(trauma\right) & \label{eq26} \\
		&= {{P_{age,peep}\left(trauma\right)}} \label{eq27} \\
		&= {{P_{}\left(trauma \middle| peep,age\right)}} \label{eq28}
	\end{shiftedflalign}
	Eq. (\ref{eq27}) follows from the third rule of do-calculus with the independence \\$\left(L \perp trauma | age,peep\right)_{G_{\overline{U - \{trauma\}}}}$ (refer to Fig. \ref{figure12}).
	Eq. (\ref{eq28}) follows from the second rule of do-calculus with the independence $\left(peep,age \perp trauma\right)_{G_{\overline{}\underline{peep,age}}}$ (refer to Fig. \ref{figure13}). \\\\
	\textbf{Task 10: Compute $P_{U - \{smoker\}}\left(smoker\right)$} 
	\begin{shiftedflalign}
		& P_{U - \{smoker\}}\left(smoker\right) & \label{eq29} \\
		&= {{P_{age}\left(smoker\right)}} \label{eq30} \\
		&= {{P_{}\left(smoker \middle| age\right)}} \label{eq31}
	\end{shiftedflalign}
	Eq. (\ref{eq30}) follows from the third rule of do-calculus with the independence \\$\left(U - \{age, smoker\} \perp smoker | age\right)_{G_{\overline{U - \{smoker\}}}}$ (refer to Fig. \ref{figure14}).
	Eq. (\ref{eq31}) follows from the second rule of do-calculus with the independence $\left(age \perp smoker\right)_{G_{\overline{}\underline{age}}}$ (refer to Fig. \ref{figure5}). \\\\
	\textbf{Task 11: Compute $P_{U - \{ards\}}\left(ards\right)$} 
	\begin{shiftedflalign}
		& P_{U - \{ards\}}\left(ards\right) & \label{eq32} \\
		&= P_{}\left(ards\right) \label{eq33}
	\end{shiftedflalign}
	Eq. (\ref{eq33}) follows from the third rule of do-calculus with the independence\\ $\left(U - \{ards\} \perp ards\right)_{G_{\overline{U - \{ards\}}}}$ (refer to Fig. \ref{figure3}). \\\\
	\textbf{Task 12: Compute $P_{U - \{bmi\}}\left(bmi\right)$} 
	\begin{shiftedflalign}
		& P_{U - \{bmi\}}\left(bmi\right) & \label{eq34} \\
		&= P_{}\left(bmi\right) \label{eq35}
	\end{shiftedflalign}
	Eq. (\ref{eq35}) follows from the third rule of do-calculus with the independence\\ $\left(U - \{bmi\} \perp bmi\right)_{G_{\overline{U - \{bmi\}}}}$ (refer to Fig. \ref{figure3}). \\\\
	\textbf{Task 13: Compute $P_{U - \{paco2\}}\left(paco2\right)$} 
	\begin{shiftedflalign}
		& P_{U - \{paco2\}}\left(paco2\right) & \label{eq36} \\
		&= {{P_{bmi,ards,age,smoker,ph,spo2,copd}\left(paco2\right)}} \label{eq37} \\
		&= {{P_{}\left(paco2 \middle| copd,spo2,ph,smoker,age,ards,bmi\right)}} \label{eq38}
	\end{shiftedflalign}
	Eq. (\ref{eq37}) follows from the third rule of do-calculus with the independence \\$\left(M \perp paco2 | bmi,ards,age,smoker,ph,spo2,copd\right)_{G_{\overline{U - \{paco2\}}}}$ (refer to Fig. \ref{figure15}).
	Eq. (\ref{eq38}) follows from the second rule of do-calculus with the independence \\ $\left(copd,spo2,ph,smoker,age,ards,bmi \perp paco2\right)_{G_{\overline{}\underline{copd,spo2,ph,smoker,age,ards,bmi}}}$ (refer to Fig. \ref{figure16}). \\\\
	\textbf{Task 14: Compute $P_{U - \{apsiii\}}\left(apsiii\right)$} 
	\begin{shiftedflalign}
		& P_{U - \{apsiii\}}\left(apsiii\right) & \label{eq39} \\
		&= {{P_{Q - \{apsiii\}}\left(apsiii\right)}} \label{eq40} \\
		&= {{P_{}\left(apsiii \middle| S - \{apsiii\}\right)}} \label{eq41}
	\end{shiftedflalign}
	Eq. (\ref{eq40}) follows from the third rule of do-calculus with the independence \\$\left(P \perp apsiii | Q - \{apsiii\}\right)_{G_{\overline{U - \{apsiii\}}}}$ (refer to Fig. \ref{figure17}).
	Eq. (\ref{eq41}) follows from the second rule of do-calculus with the independence $\left(S - \{apsiii\} \perp apsiii\right)_{G_{\overline{}\underline{S - \{apsiii\}}}}$ (refer to Fig. \ref{figure18}). \\\\
	\textbf{Task 15: Compute $P_{U - \{peakAirPressure\}}\left(peakAirPressure\right)$} 
	\begin{shiftedflalign}
		& P_{U - \{peakAirPressure\}}\left(peakAirPressure\right) & \label{eq42} \\
		&= {{P_{Q}\left(peakAirPressure\right)}} \label{eq43} \\
		&= {{P_{}\left(peakAirPressure \middle| S\right)}} \label{eq44}
	\end{shiftedflalign}
	Eq. (\ref{eq43}) follows from the third rule of do-calculus with the independence \\ $\left(R \perp peakAirPressure | Q\right)_{G_{\overline{U - \{peakAirPressure\}}}}$ (refer to Fig. \ref{figure19}).
	Eq. (\ref{eq44}) follows from the second rule of do-calculus with the independence $\left(S \perp peakAirPressure\right)_{G_{\overline{}\underline{S}}}$ (refer to Fig. \ref{figure20}). \\\\
	\textbf{Task 16: Compute $P_{U - \{lactate\}}\left(lactate\right)$} 
	\begin{shiftedflalign}
		& P_{U - \{lactate\}}\left(lactate\right) & \label{eq45} \\
		&= {{P_{O}\left(lactate\right)}} \label{eq46} \\
		&= {{P_{}\left(lactate \middle| I\right)}} \label{eq47}
	\end{shiftedflalign}
	Eq. (\ref{eq46}) follows from the third rule of do-calculus with the independence \\ $\left(\substack{surgery,ischemicHd,minVentVol,medical,sofa,\\sao2,pao2,vt,oxygenation,death \perp lactate | O}\right)_{G_{\overline{U - \{lactate\}}}}$ (refer to Fig. \ref{figure21}).
	Eq. (\ref{eq47}) follows from the second rule of do-calculus with the independence $\left(I \perp lactate\right)_{G_{\overline{}\underline{I}}}$ (refer to Fig. \ref{figure22}). \\\\
	\textbf{Task 17: Compute $P_{U - \{death\}}\left(death\right)$} 
	\begin{shiftedflalign}
		& P_{U - \{death\}}\left(death\right) & \label{eq48} \\
		&= {{P_{F}\left(death\right)}} \label{eq49} \\
		&= {{P_{}\left(death \middle| H\right)}} \label{eq50}
	\end{shiftedflalign}
	Eq. (\ref{eq49}) follows from the third rule of do-calculus with the independence\\ $\left(surgery,ischemicHd,medical,sao2 \perp death | F\right)_{G_{\overline{U - \{death\}}}}$ (refer to Fig. \ref{figure23}).
	Eq. (\ref{eq50}) follows from the second rule of do-calculus with the independence $\left(H \perp death\right)_{G_{\overline{}\underline{H}}}$ (refer to Fig. \ref{figure24}). \\\\
	\textbf{Task 18: Compute $P_{U - \{sofa\}}\left(sofa\right)$} 
	\begin{shiftedflalign}
		& P_{U - \{sofa\}}\left(sofa\right) & \label{eq51} \\
		&= {{P_{E}\left(sofa\right)}} \label{eq52} \\
		&= {{P_{}\left(sofa \middle| G\right)}} \label{eq53}
	\end{shiftedflalign}
	Eq. (\ref{eq52}) follows from the third rule of do-calculus with the independence \\ $\left(\substack{minVentVol,pao2,vt,oxygenation,death,surgery,\\ischemicHd,medical,sao2 \perp sofa | E}\right)_{G_{\overline{U - \{sofa\}}}}$ (refer to Fig. \ref{figure25}).
	Eq. (\ref{eq53}) follows from the second rule of do-calculus with the independence $\left(G \perp sofa\right)_{G_{\overline{}\underline{G}}}$ (refer to Fig. \ref{figure26}). \\\\
	Substituting Eq. (\ref{eq7}), Eq. (\ref{eq10}), Eq. (\ref{eq12}), Eq. (\ref{eq14}), Eq. (\ref{eq17}), Eq. (\ref{eq20}), Eq. (\ref{eq23}), Eq. (\ref{eq25}), Eq. (\ref{eq28}), Eq. (\ref{eq31}), Eq. (\ref{eq33}), Eq. (\ref{eq35}), Eq. (\ref{eq38}), Eq. (\ref{eq41}), Eq. (\ref{eq44}), Eq. (\ref{eq47}), Eq. (\ref{eq50}), and Eq. (\ref{eq53}) back into Eq. (\ref{eq5}), we get
	\begin{flalign}
		P_{oxygenation}\left(death \middle| apsiii,sofa\right) = \sum_{B}{P\left(death \middle| A\right)P\left(B\right)} \label{eq54}
	\end{flalign}
\end{proof}

\section{Figures}

The subgraphs used in the derivation of the causal effect of $oxygenation$ on $death$ given $\left\{ apsiii, sofa \right\}$ are as follows:

\begin{figure}[h]
	\centering
	\includegraphics[width=0.6\textwidth]{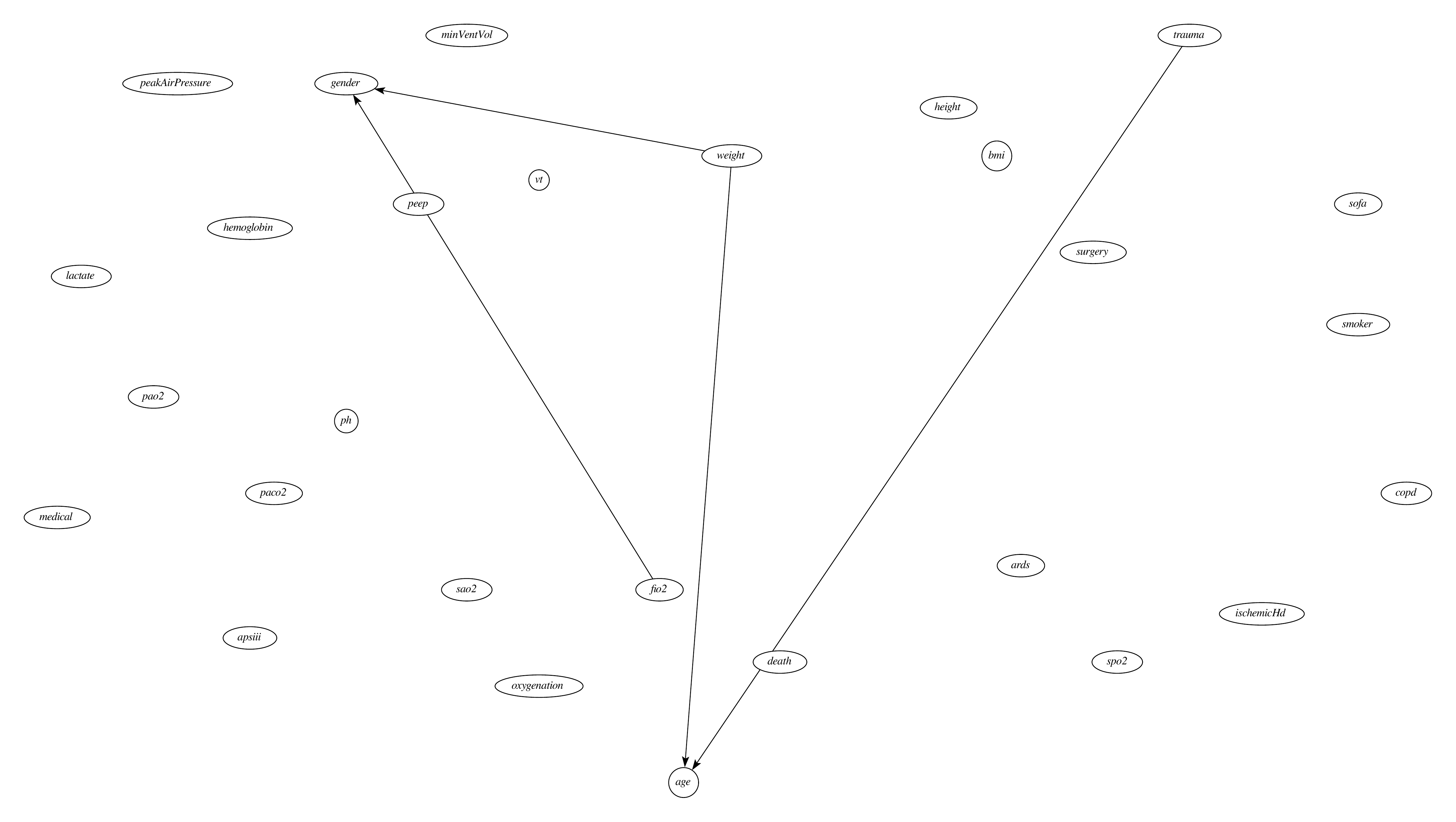}
	\caption{Causal Graph $G_{\overline{oxygenation,pao2,vt,minVentVol}}$.}
	\label{figure2}
\end{figure}

\begin{figure}[h]
	\centering
	\includegraphics[width=0.6\textwidth]{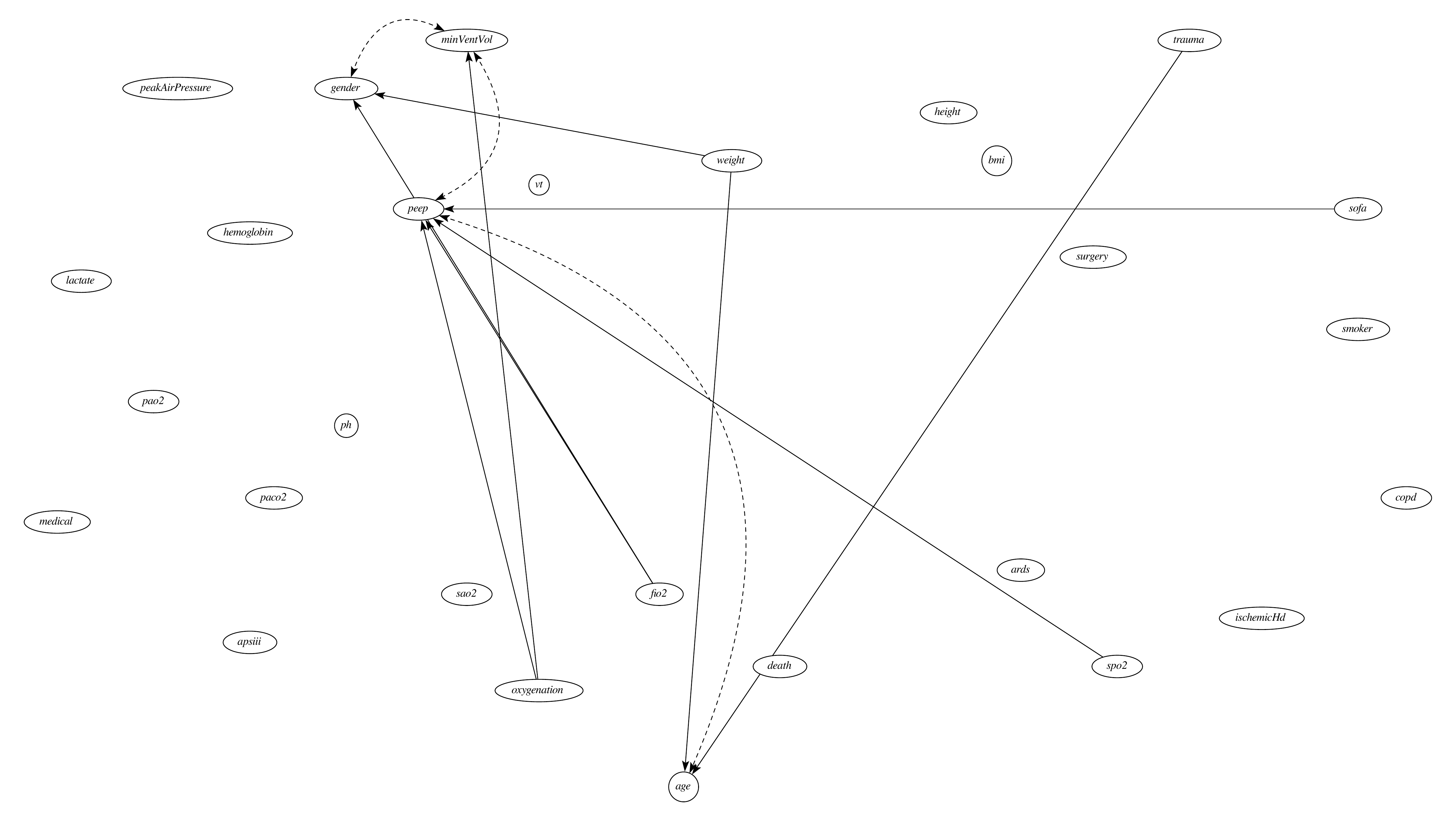}
	\caption{Causal Graph $G_{\overline{U - \{age\}}}, G_{\overline{U - \{gender\}}}, G_{\overline{U - {SpO_{2}}}}, G_{\overline{U - \{ph\}}}, G_{\overline{U - \{ards\}}}, G_{\overline{U - \{bmi\}}}$.}
	\label{figure3}
\end{figure}

\begin{figure}[h]
	\centering
	\includegraphics[width=0.6\textwidth]{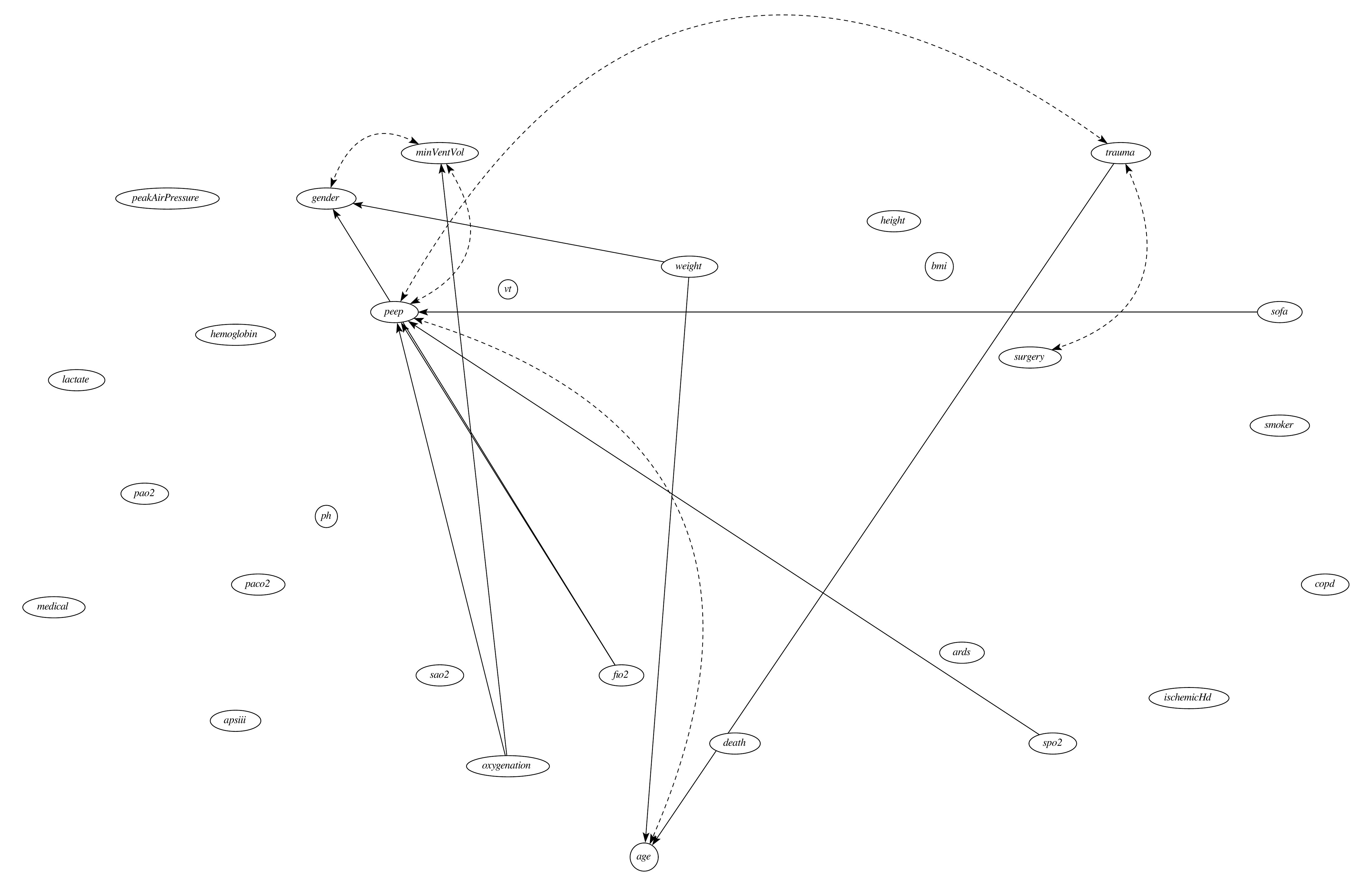}
	\caption{Causal Graph $G_{\overline{U - \{peep\}}}$.}
	\label{figure4}
\end{figure}

\begin{figure}[h]
	\centering
	\includegraphics[width=0.6\textwidth]{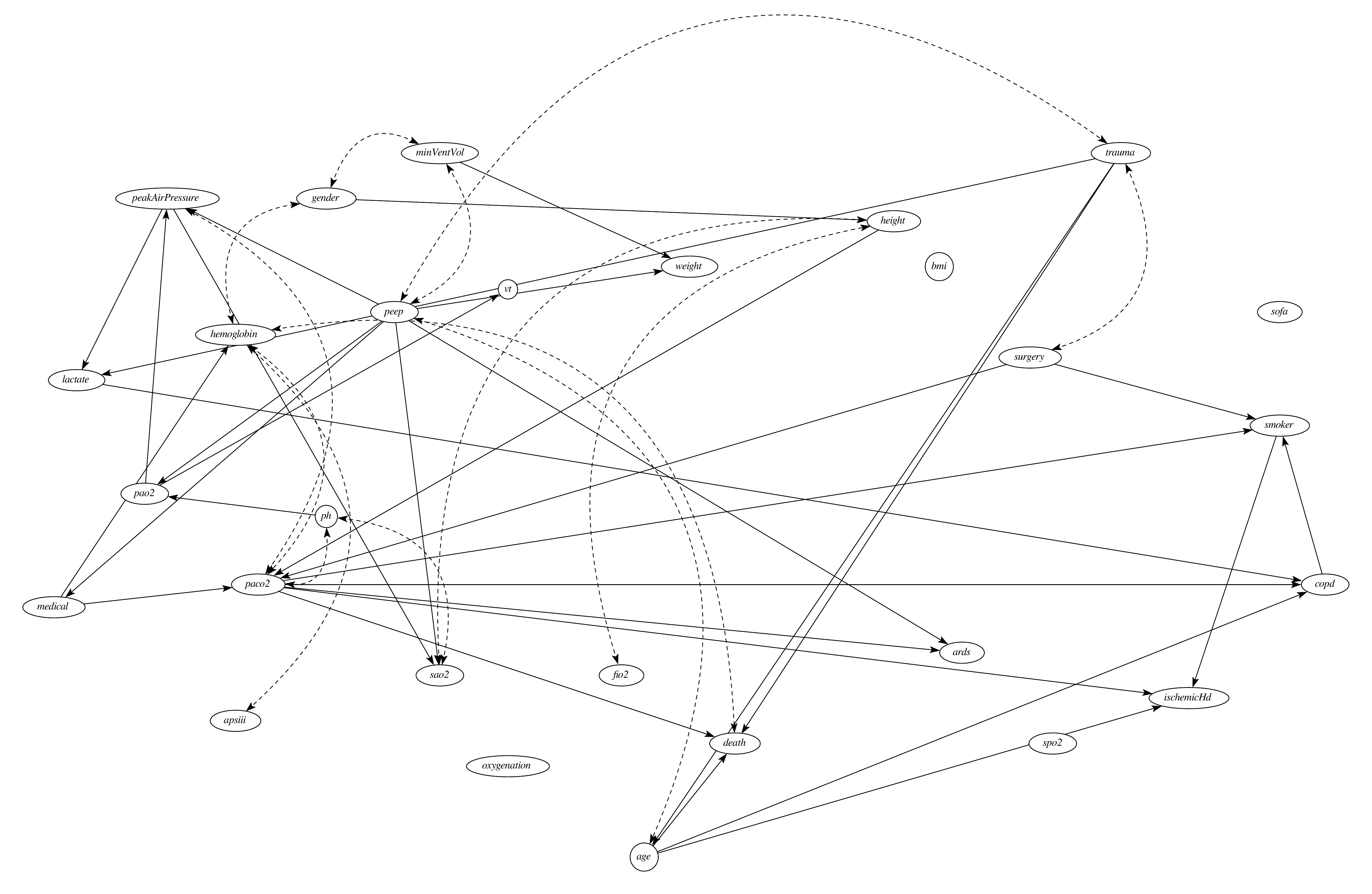}
	\caption{Causal Graph $G_{\overline{}\underline{age}}$.}
	\label{figure5}
\end{figure}

\begin{figure}[h]
	\centering
	\includegraphics[width=0.6\textwidth]{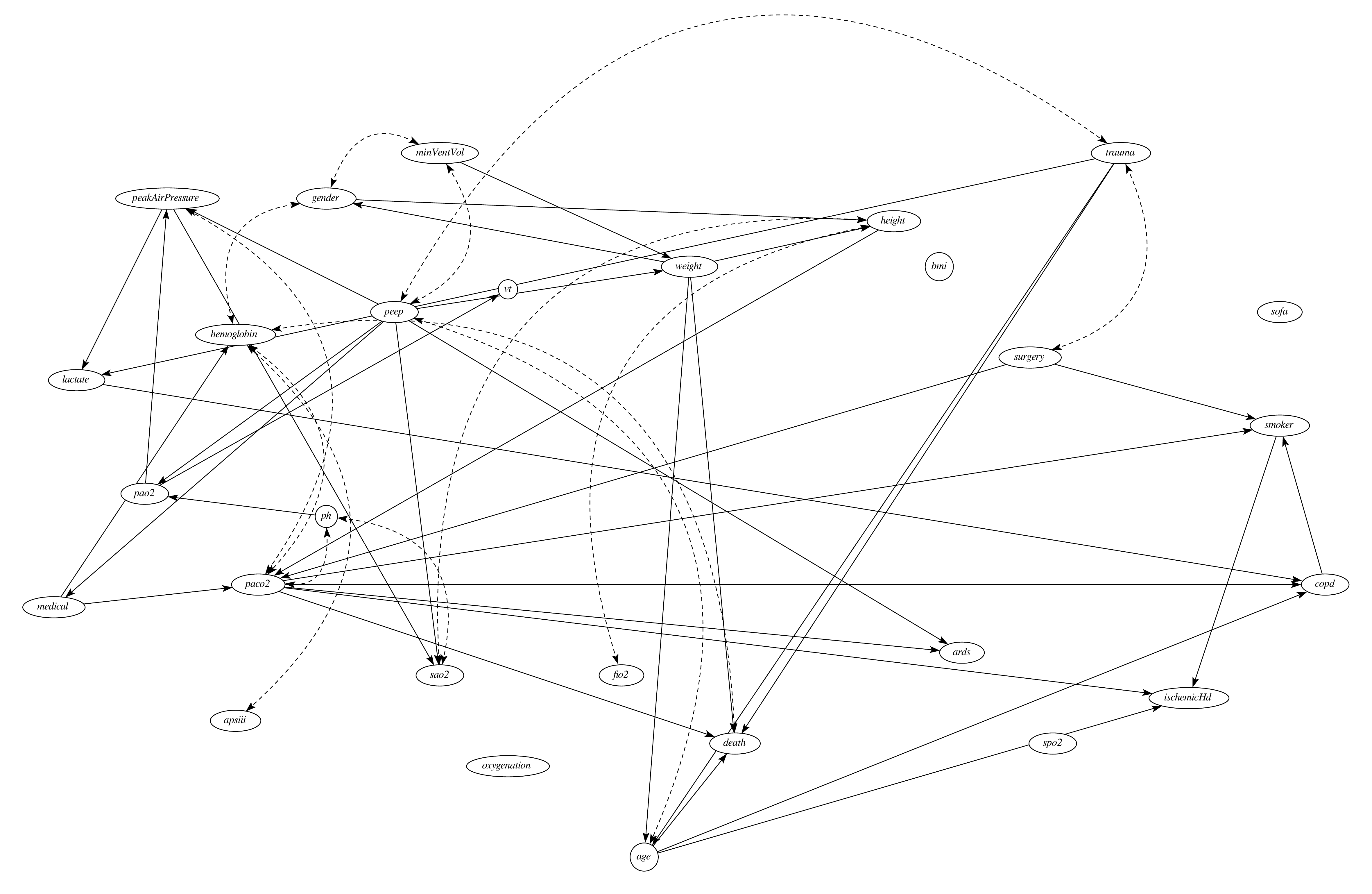}
	\caption{Causal Graph $G_{\overline{U - \{copd\}}}$.}
	\label{figure6}
\end{figure}

\begin{figure}[h]
	\centering
	\includegraphics[width=0.6\textwidth]{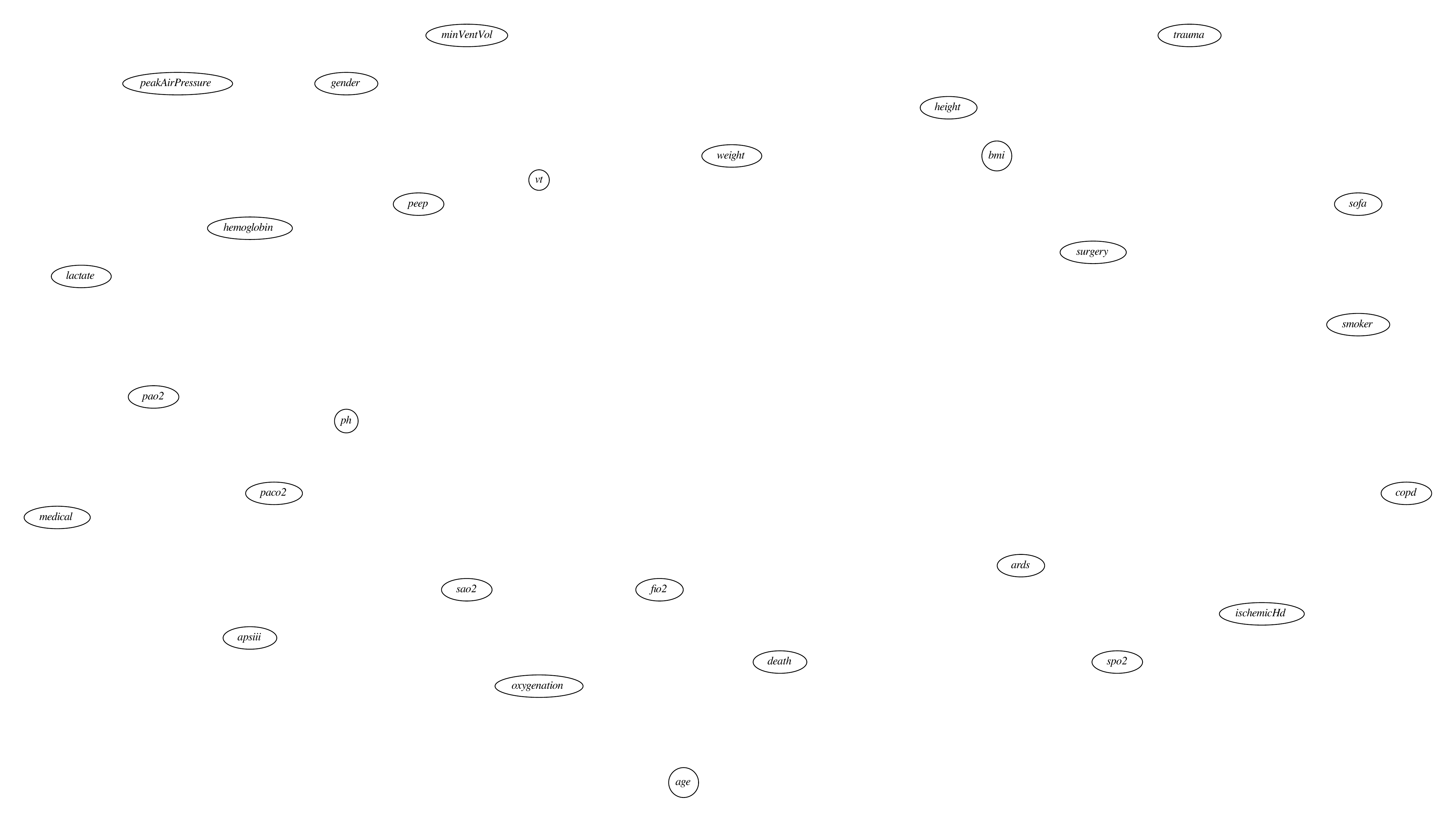}
	\caption{Causal Graph $G_{\overline{}\underline{age,spo2}}$.}
	\label{figure7}
\end{figure}

\begin{figure}[h]
	\centering
	\includegraphics[width=0.6\textwidth]{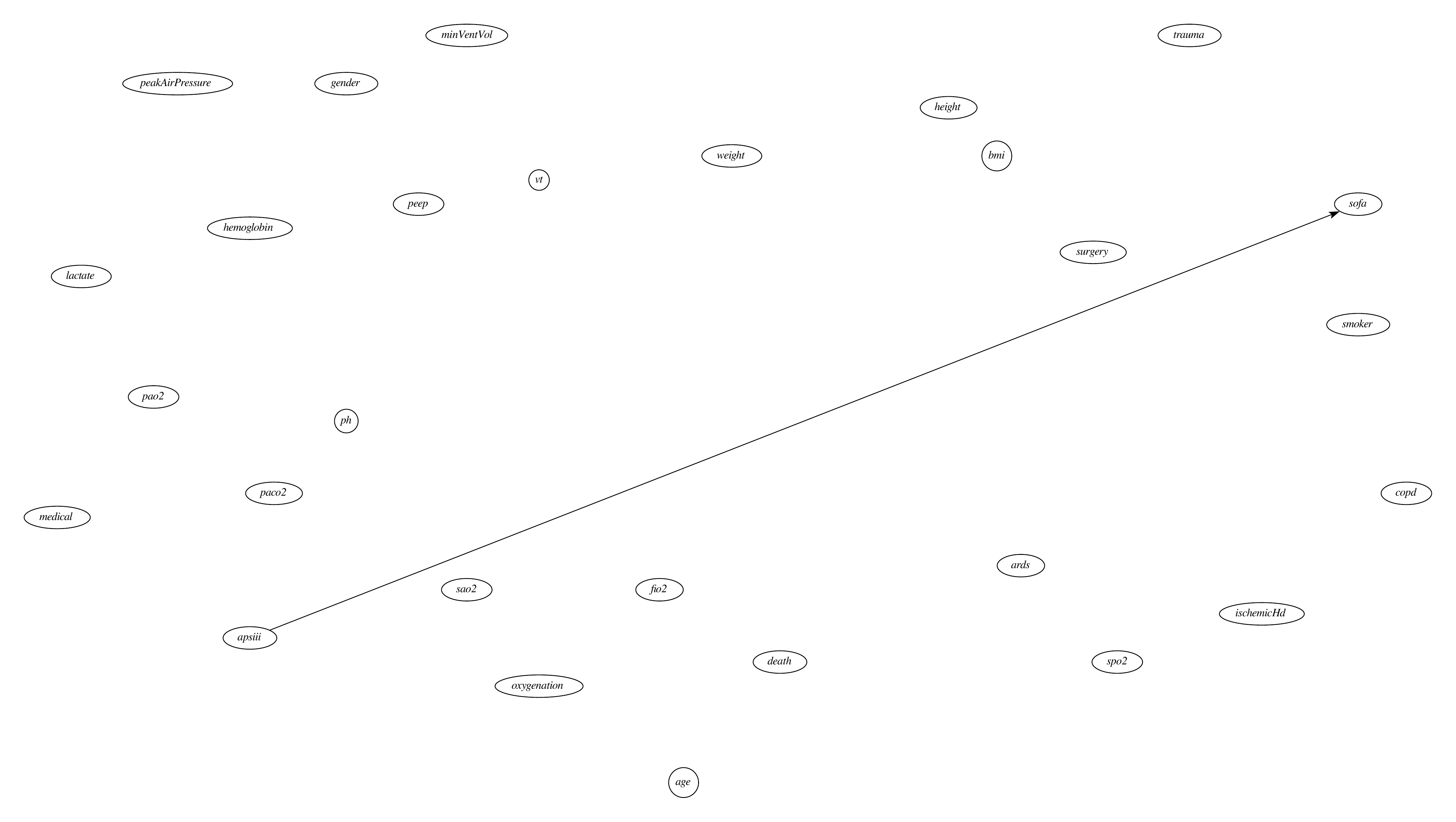}
	\caption{Causal Graph $G_{\overline{U - \{fio2\}}}$.}
	\label{figure8}
\end{figure}

\begin{figure}[h]
	\centering
	\includegraphics[width=0.6\textwidth]{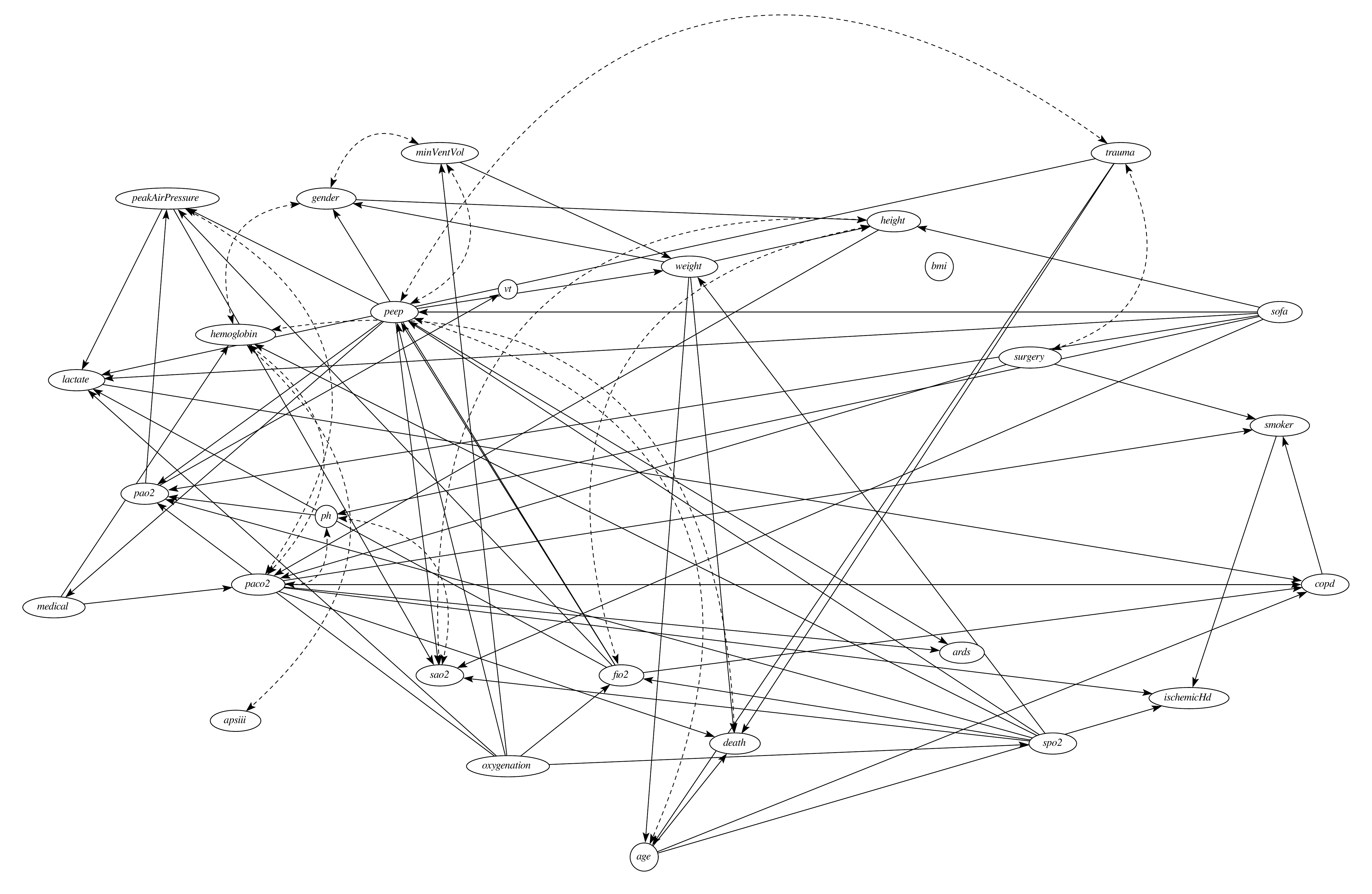}
	\caption{Causal Graph $G_{\overline{}\underline{peep,copd,age,spo2}}$.}
	\label{figure9}
\end{figure}

\begin{figure}[h]
	\centering
	\includegraphics[width=0.6\textwidth]{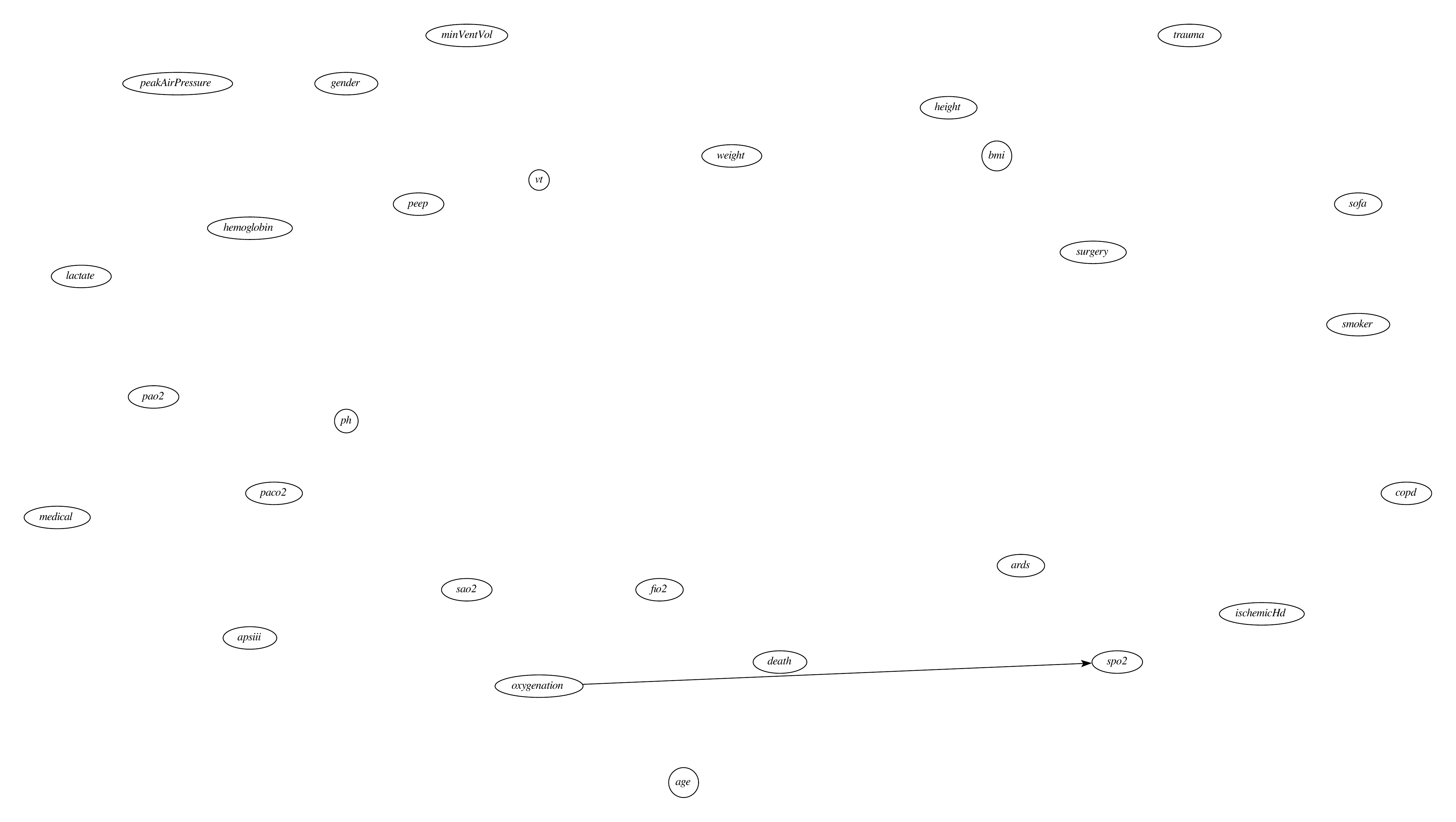}
	\caption{Causal Graph $G_{\overline{U - \{hemoglobin\}}}$.}
	\label{figure10}
\end{figure}

\begin{figure}[h]
	\centering
	\includegraphics[width=0.6\textwidth]{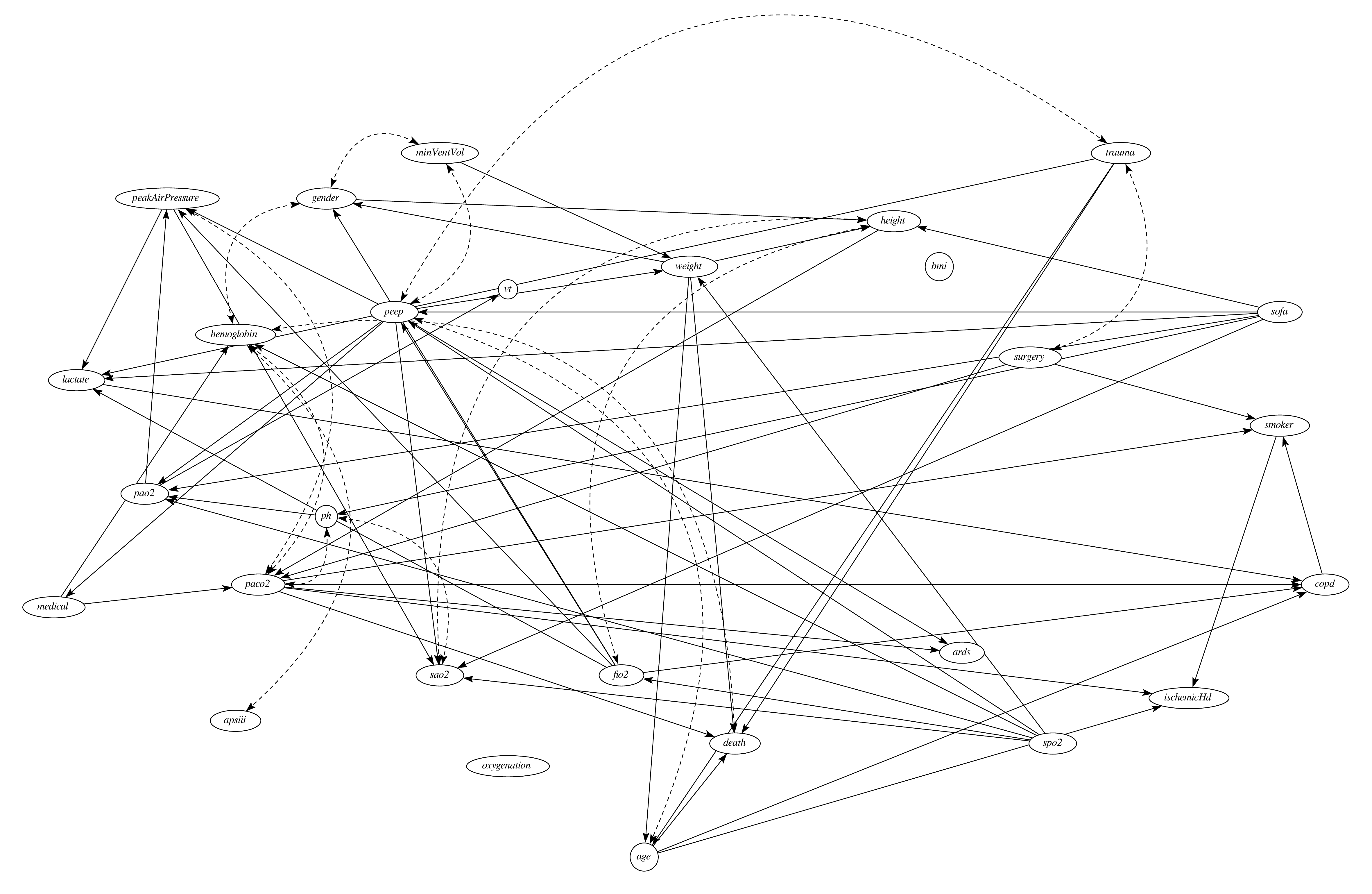}
	\caption{Causal Graph $G_{\overline{}\underline{gender,spo2}}$.}
	\label{figure11}
\end{figure}

\begin{figure}[h]
	\centering
	\includegraphics[width=0.6\textwidth]{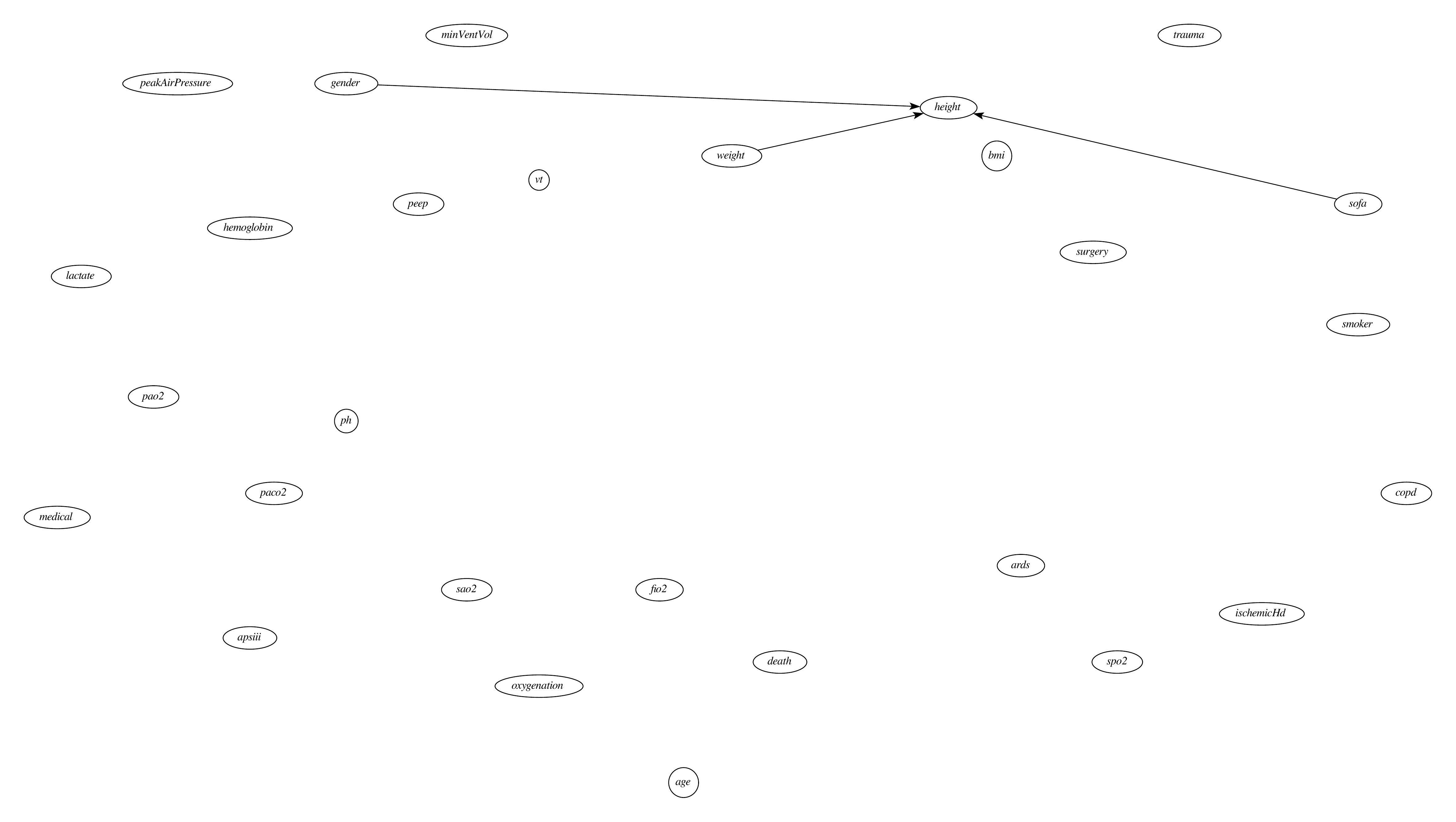}
	\caption{Causal Graph $G_{\overline{U - \{trauma\}}}$.}
	\label{figure12}
\end{figure}

\begin{figure}[h]
	\centering
	\includegraphics[width=0.6\textwidth]{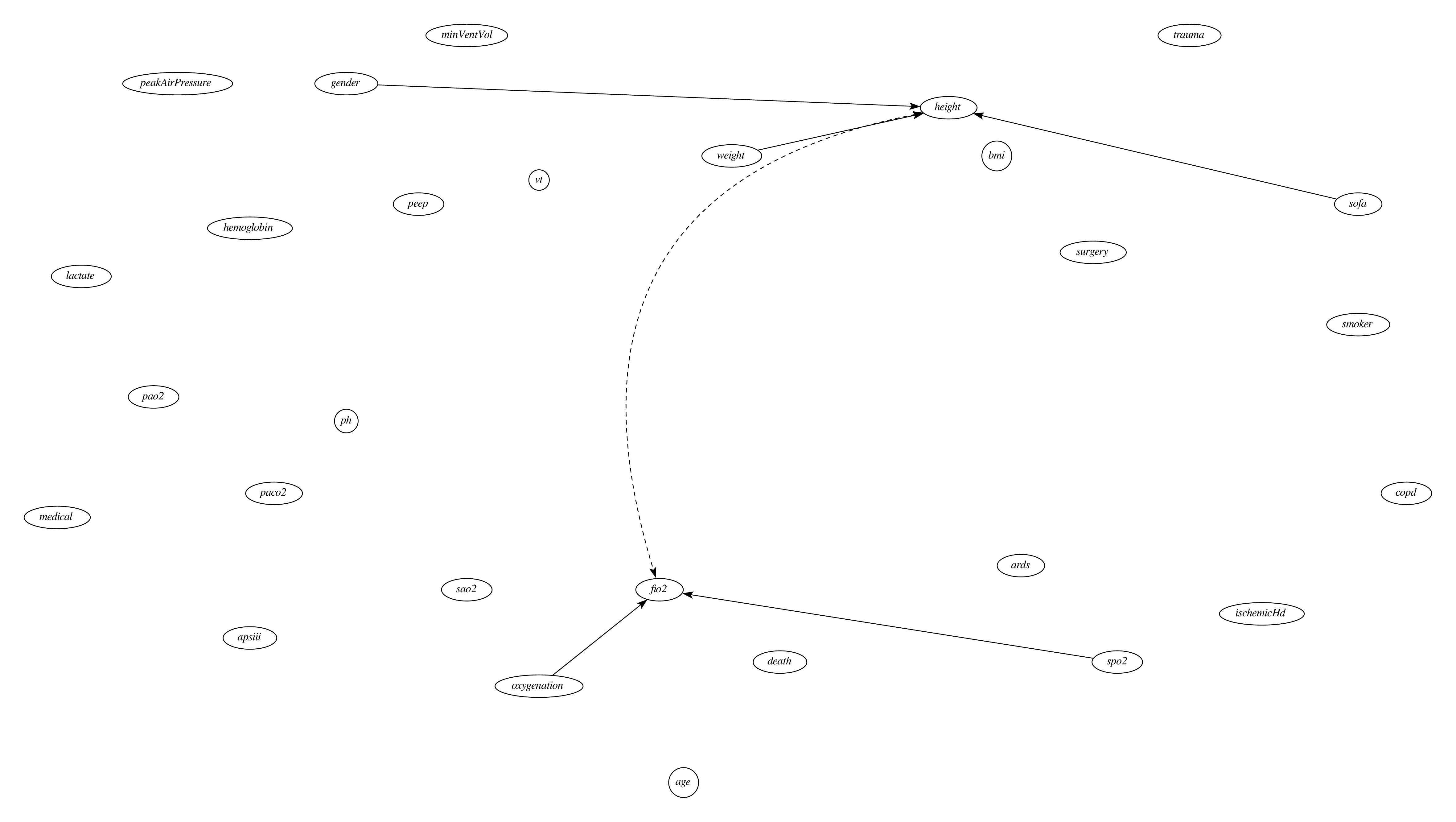}
	\caption{Causal Graph $G_{\overline{}\underline{peep,age}}$.}
	\label{figure13}
\end{figure}

\begin{figure}[h]
	\centering
	\includegraphics[width=0.6\textwidth]{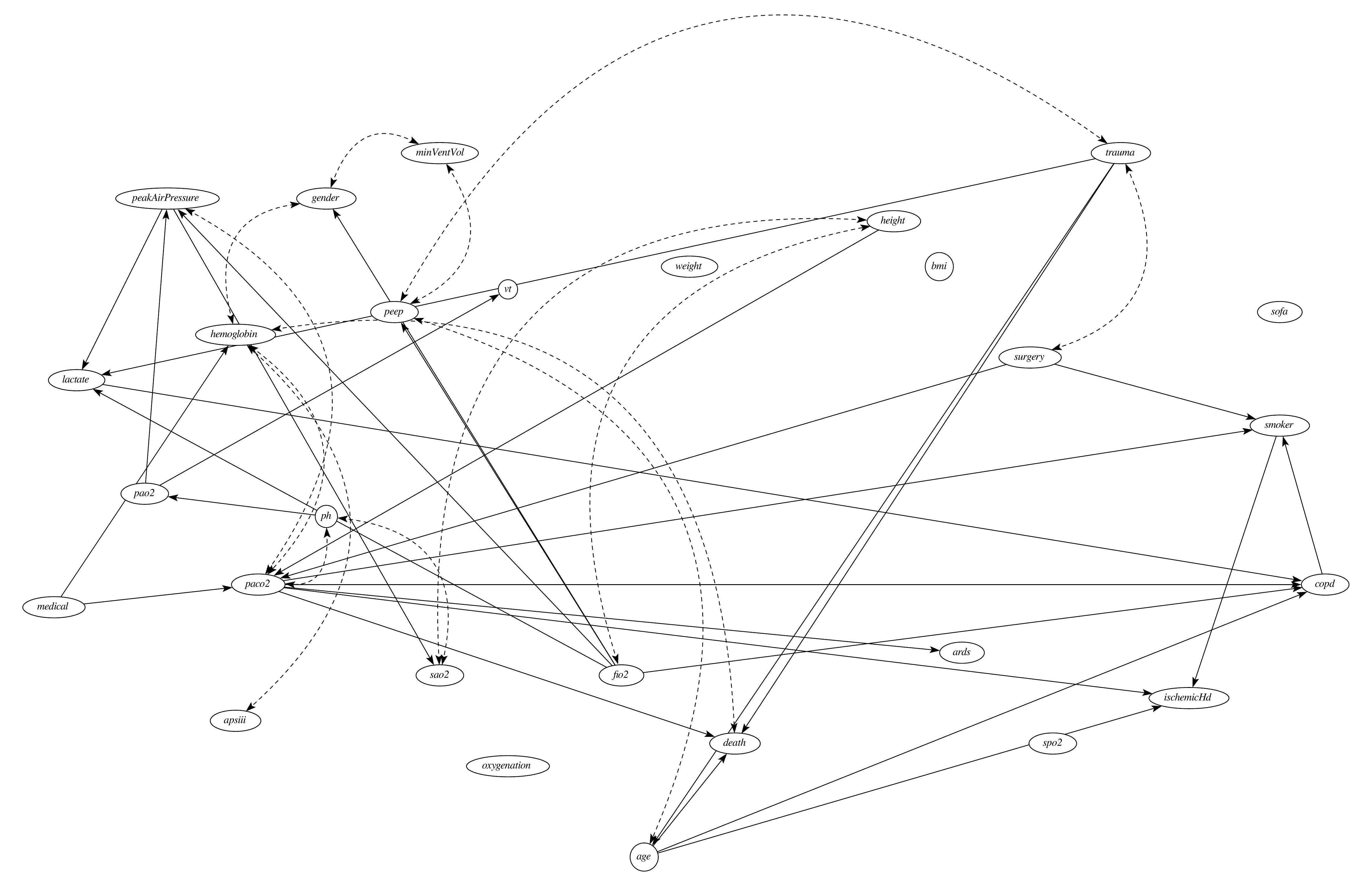}
	\caption{Causal Graph $G_{\overline{U - \{smoker\}}}$.}
	\label{figure14}
\end{figure}

\begin{figure}[h]
	\centering
	\includegraphics[width=0.6\textwidth]{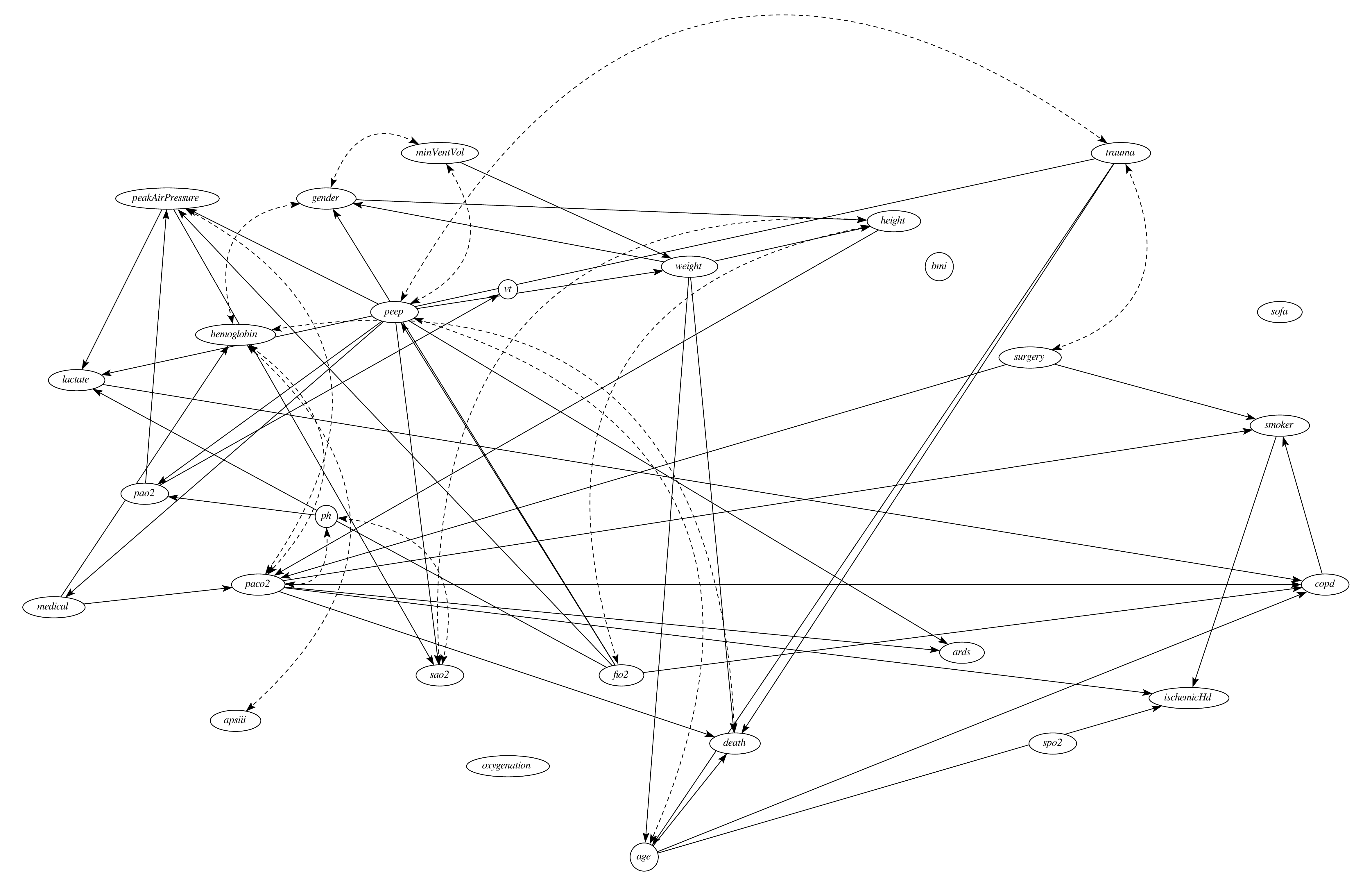}
	\caption{Causal Graph $G_{\overline{U - \{paco2\}}}$.}
	\label{figure15}
\end{figure}

\begin{figure}[h]
	\centering
	\includegraphics[width=0.6\textwidth]{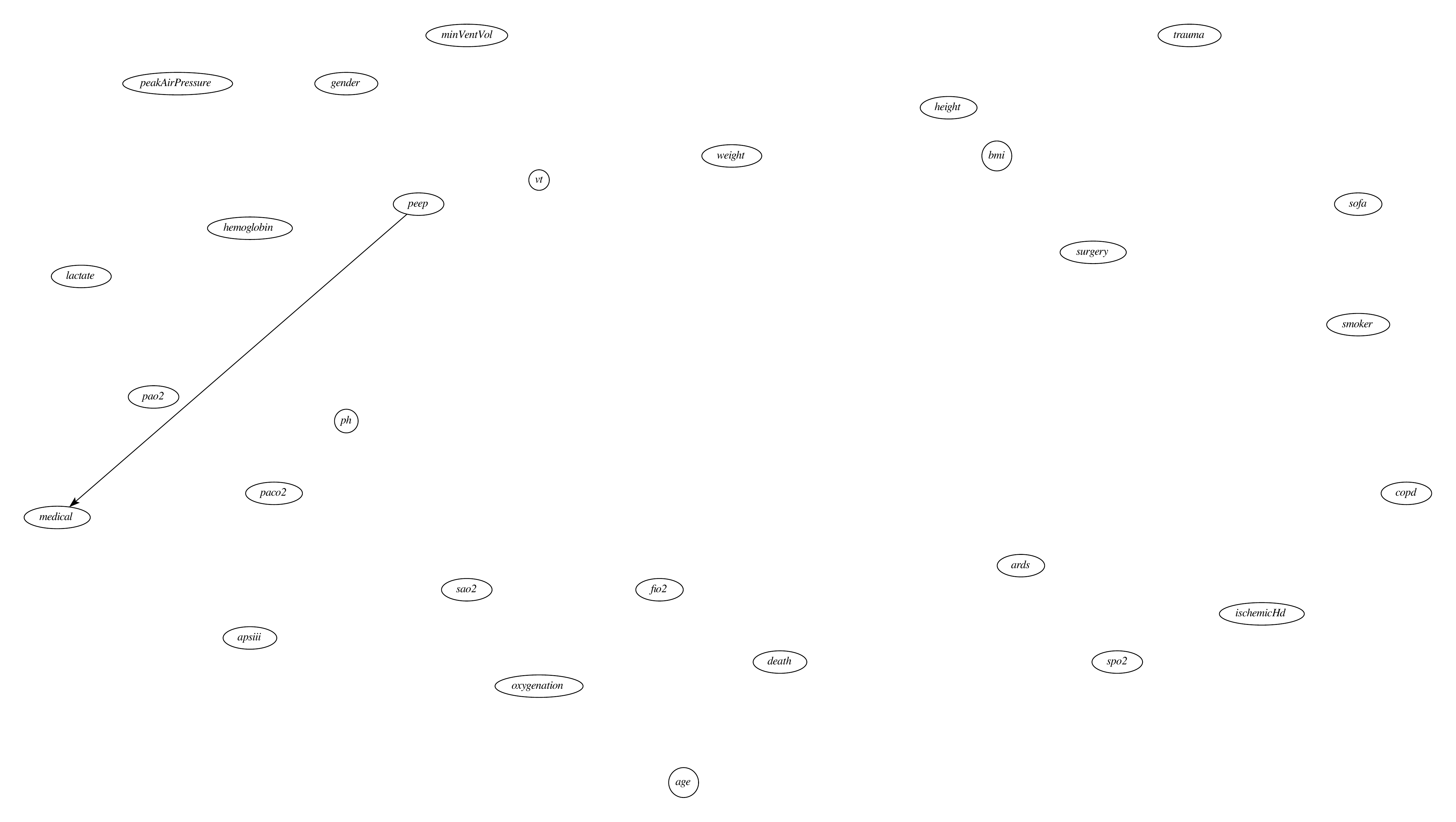}
	\caption{Causal Graph $G_{\overline{}\underline{copd,spo2,ph,smoker,age,ards,bmi}}$.}
	\label{figure16}
\end{figure}

\begin{figure}[h]
	\centering
	\includegraphics[width=0.6\textwidth]{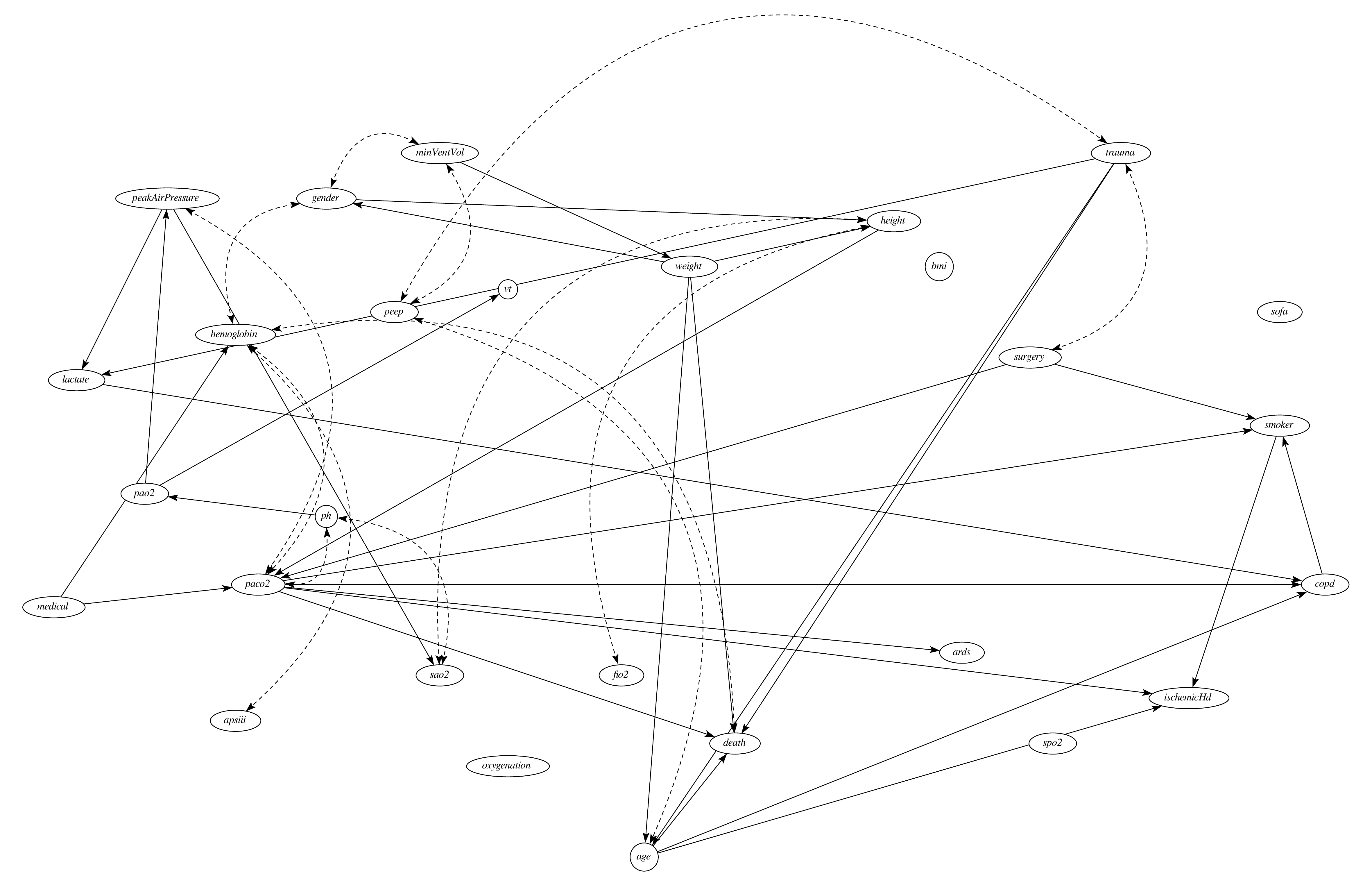}
	\caption{Causal Graph $G_{\overline{U - \{apsiii\}}}$.}
	\label{figure17}
\end{figure}

\begin{figure}[h]
	\centering
	\includegraphics[width=0.6\textwidth]{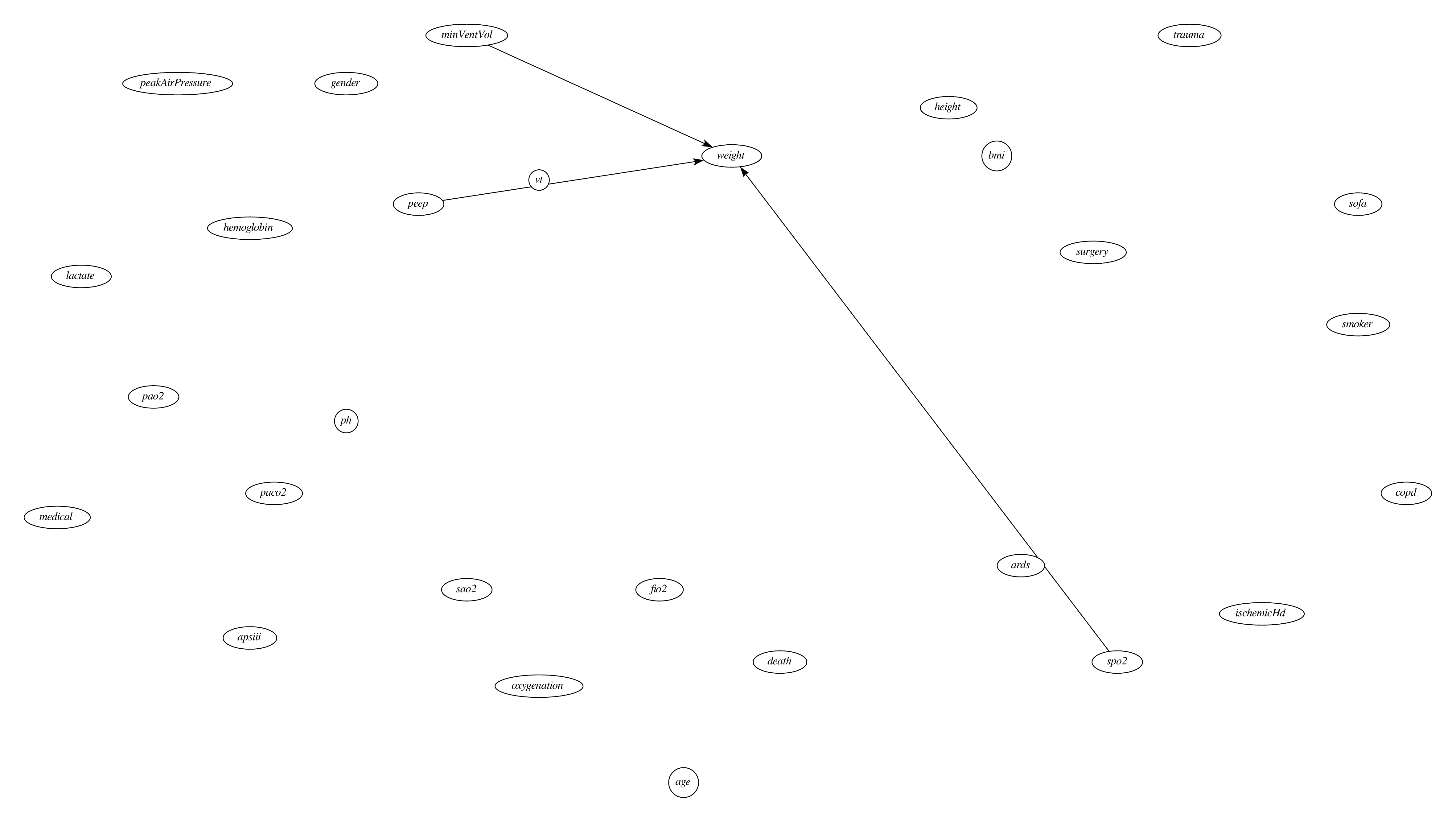}
	\caption{Causal Graph $G_{\overline{}\underline{S - \{apsiii\}}}$.}
	\label{figure18}
\end{figure}

\begin{figure}[h]
	\centering
	\includegraphics[width=0.6\textwidth]{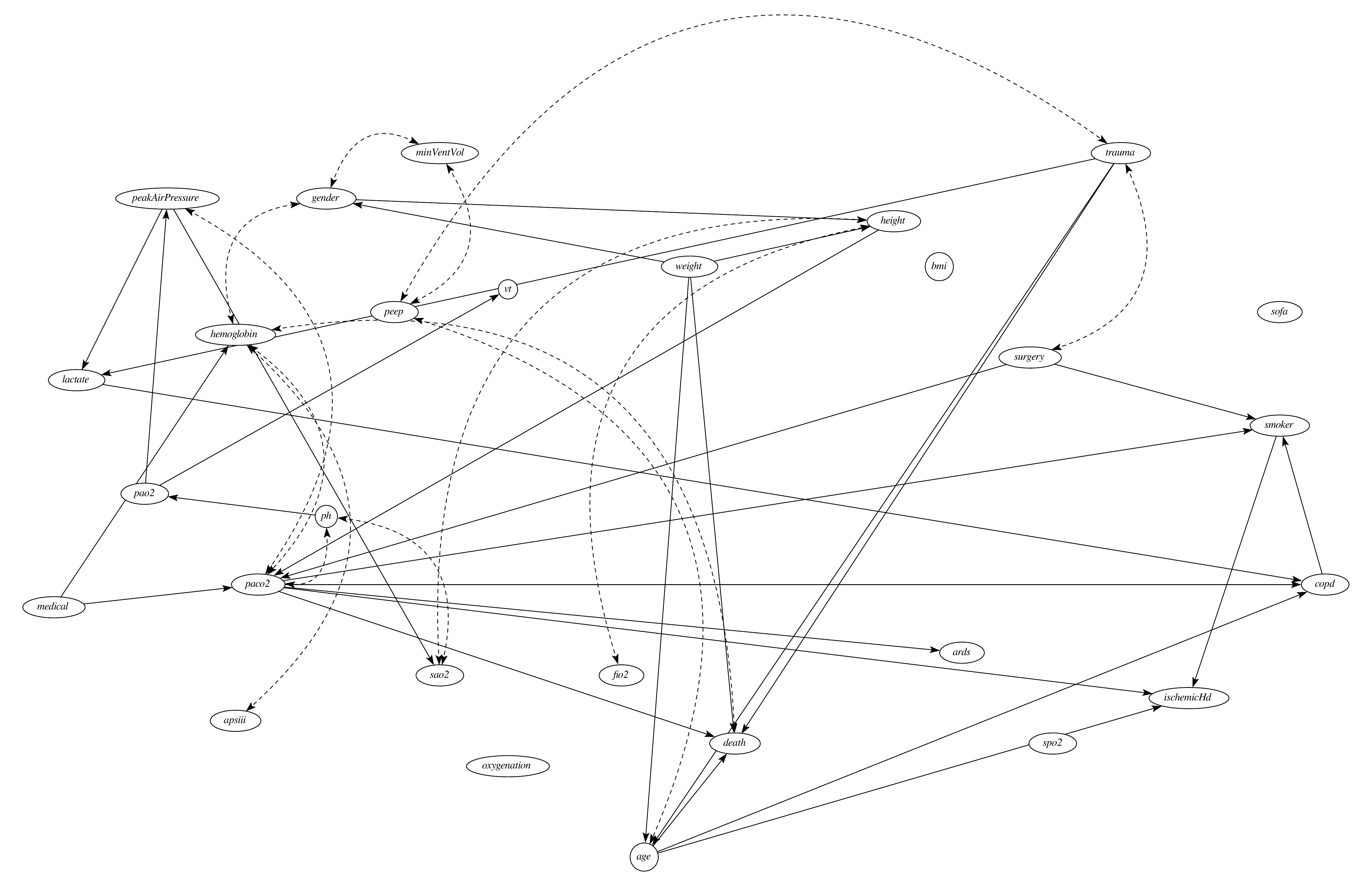}
	\caption{Causal Graph $G_{\overline{U - \{peakAirPressure\}}}$.}
	\label{figure19}
\end{figure}

\begin{figure}[h]
	\centering
	\includegraphics[width=0.6\textwidth]{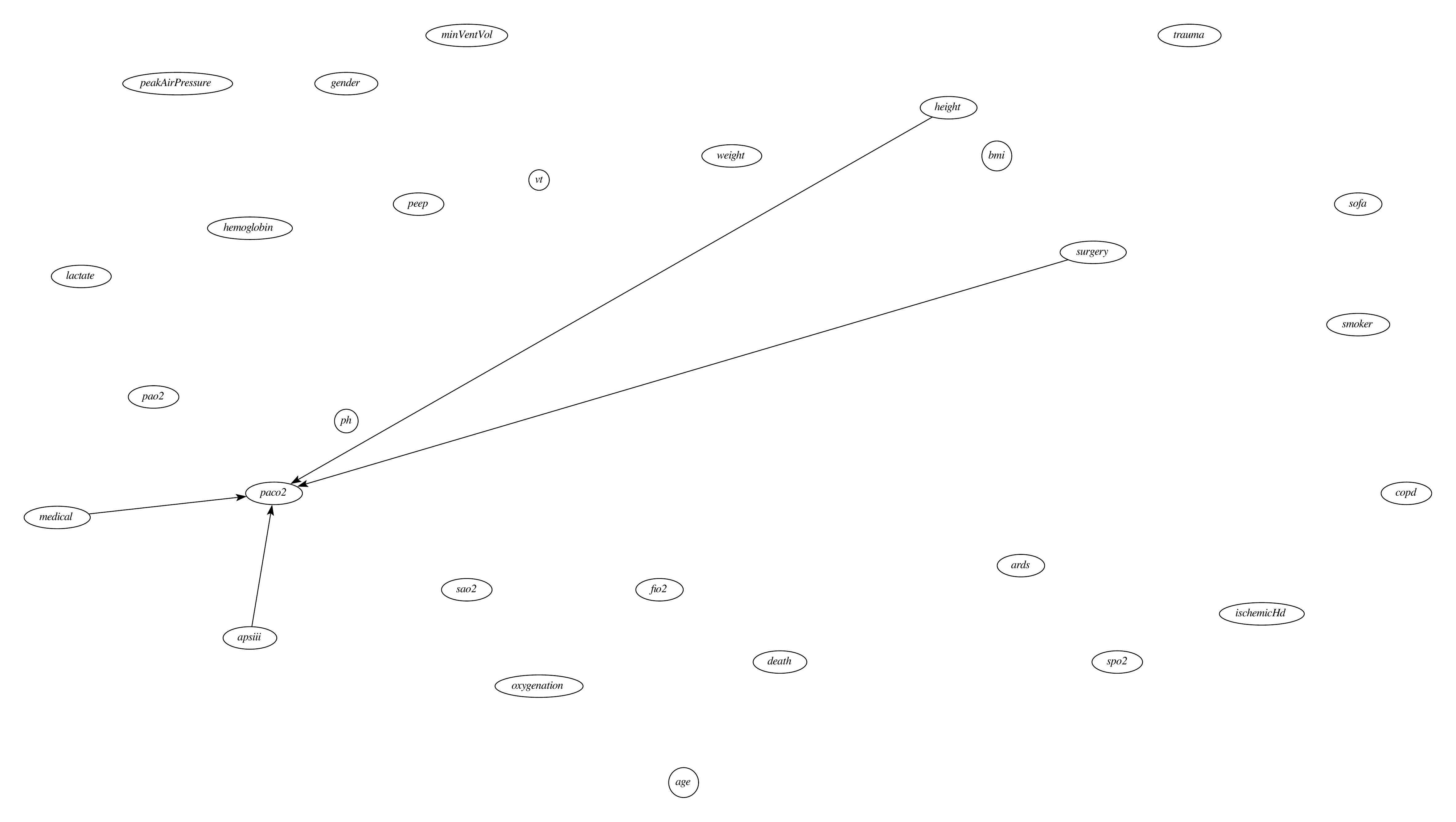}
	\caption{Causal Graph $G_{\overline{}\underline{S}}$.}
	\label{figure20}
\end{figure}

\begin{figure}[h]
	\centering
	\includegraphics[width=0.6\textwidth]{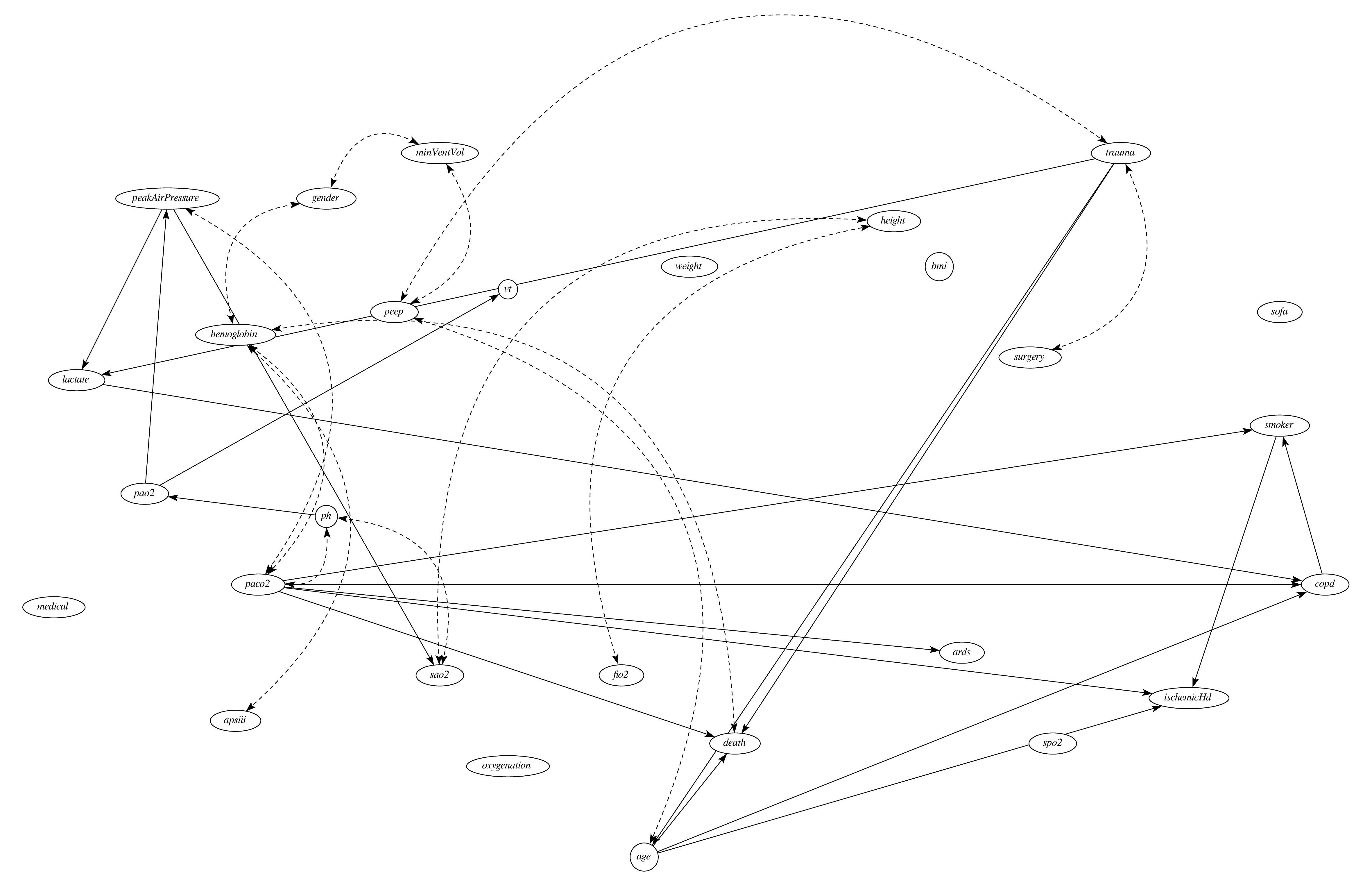}
	\caption{Causal Graph $G_{\overline{U - \{lactate\}}}$.}
	\label{figure21}
\end{figure}

\begin{figure}[h]
	\centering
	\includegraphics[width=0.6\textwidth]{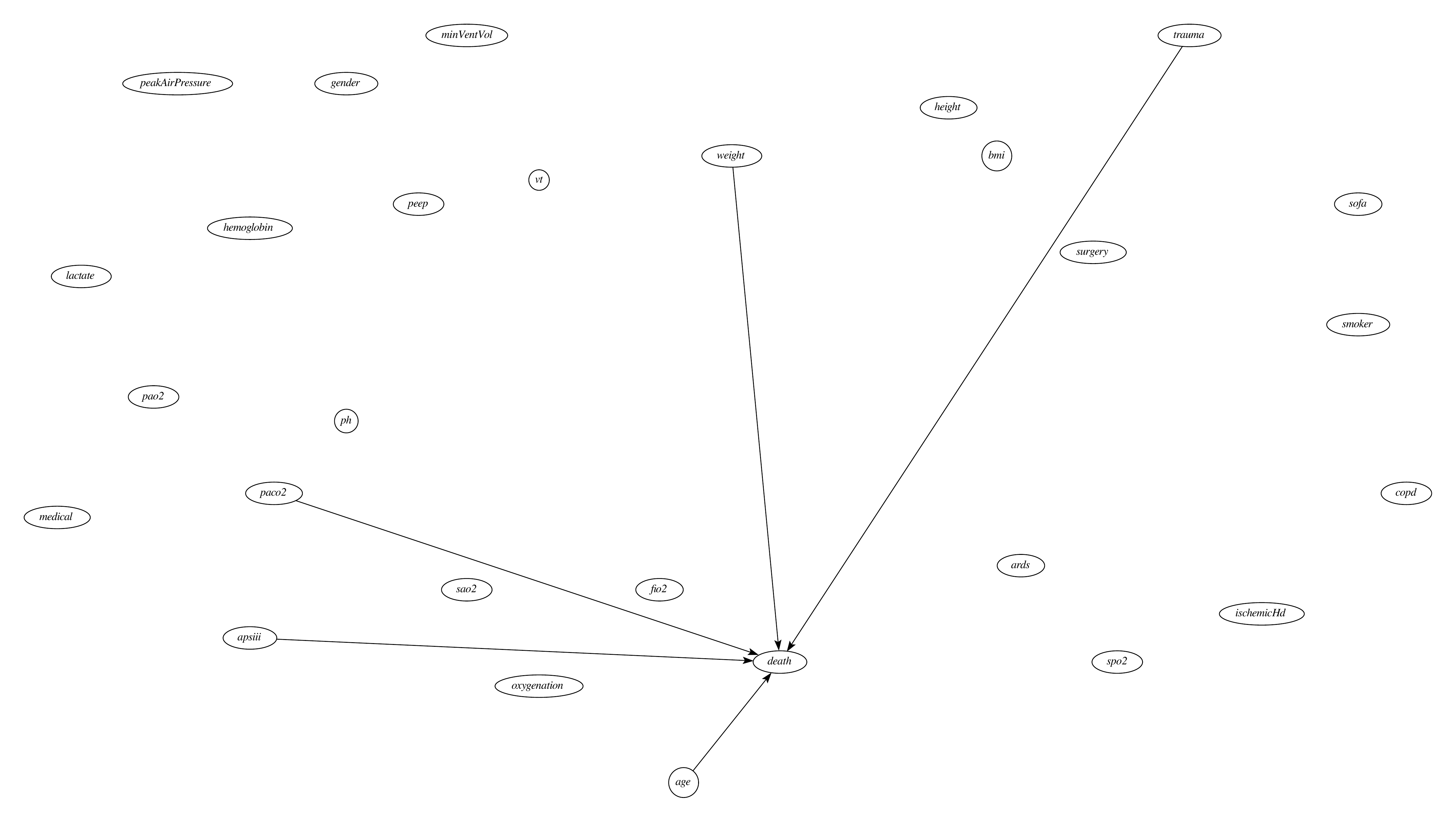}
	\caption{Causal Graph $G_{\overline{}\underline{I}}$.}
	\label{figure22}
\end{figure}

\begin{figure}[h]
	\centering
	\includegraphics[width=0.6\textwidth]{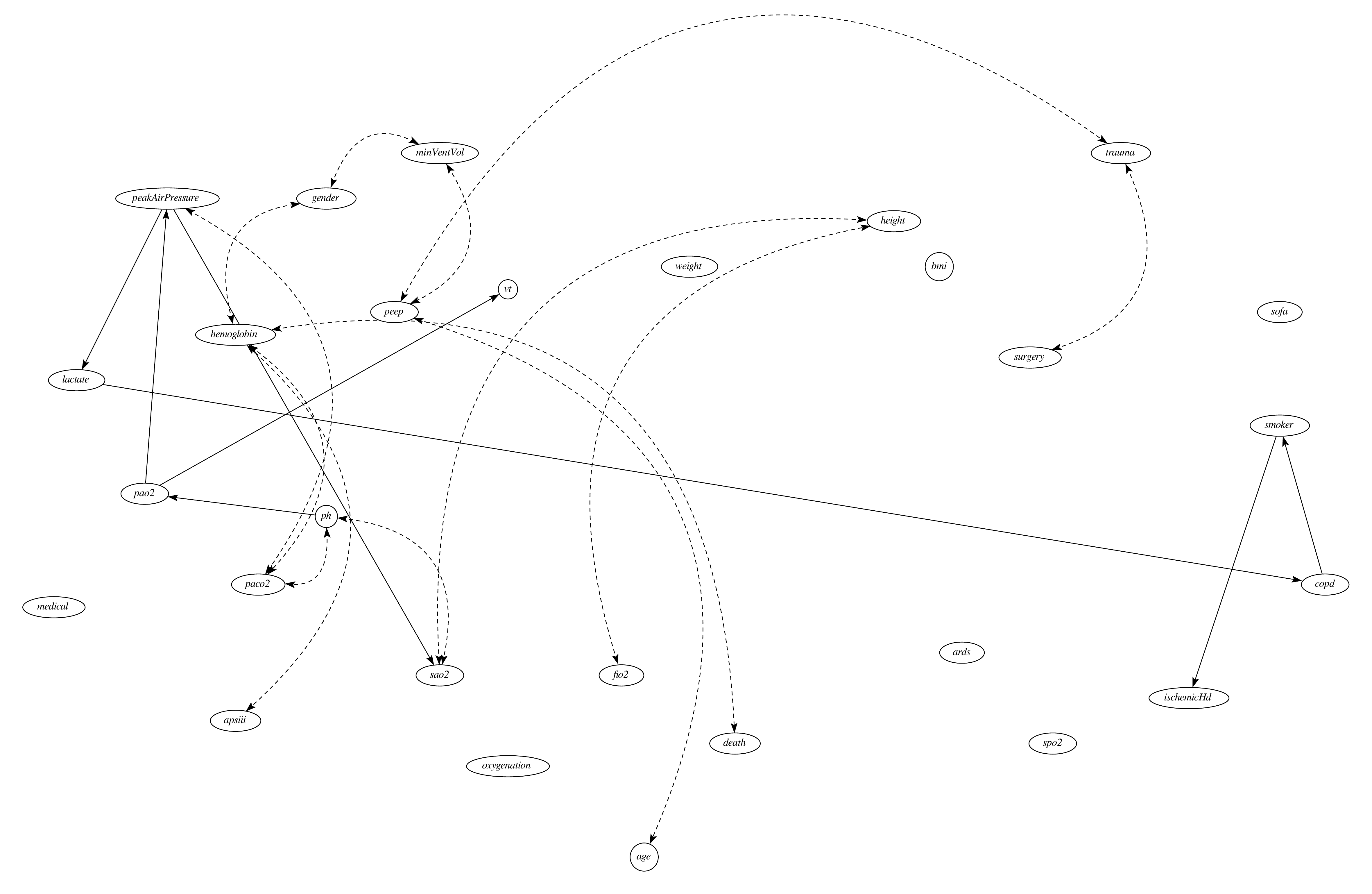}
	\caption{Causal Graph $G_{\overline{U - \{death\}}}$.}
	\label{figure23}
\end{figure}

\begin{figure}[h]
	\centering
	\includegraphics[width=0.6\textwidth]{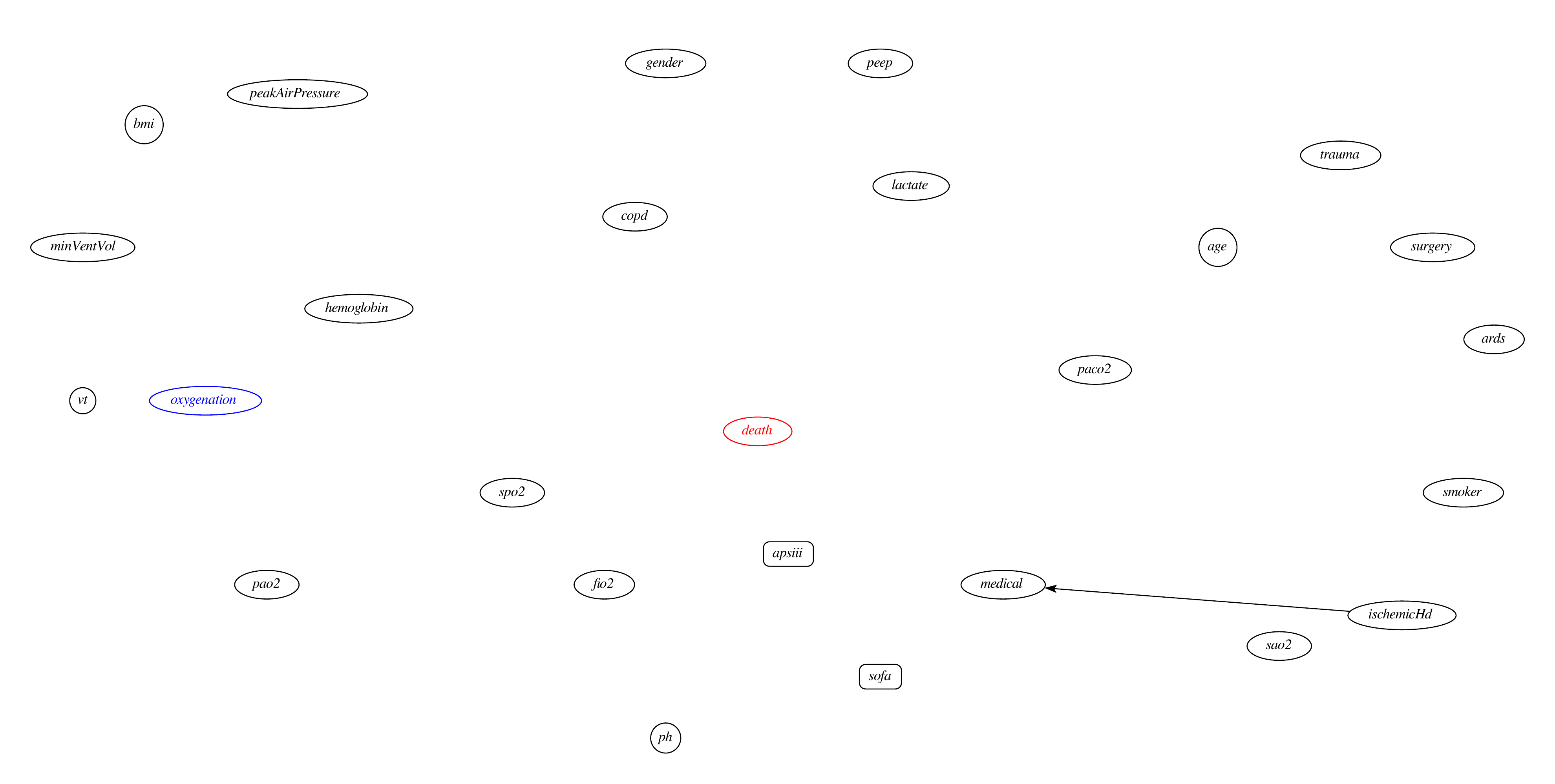}
	\caption{Causal Graph $G_{\overline{}\underline{H}}$.}
	\label{figure24}
\end{figure}

\begin{figure}[h]
	\centering
	\includegraphics[width=0.6\textwidth]{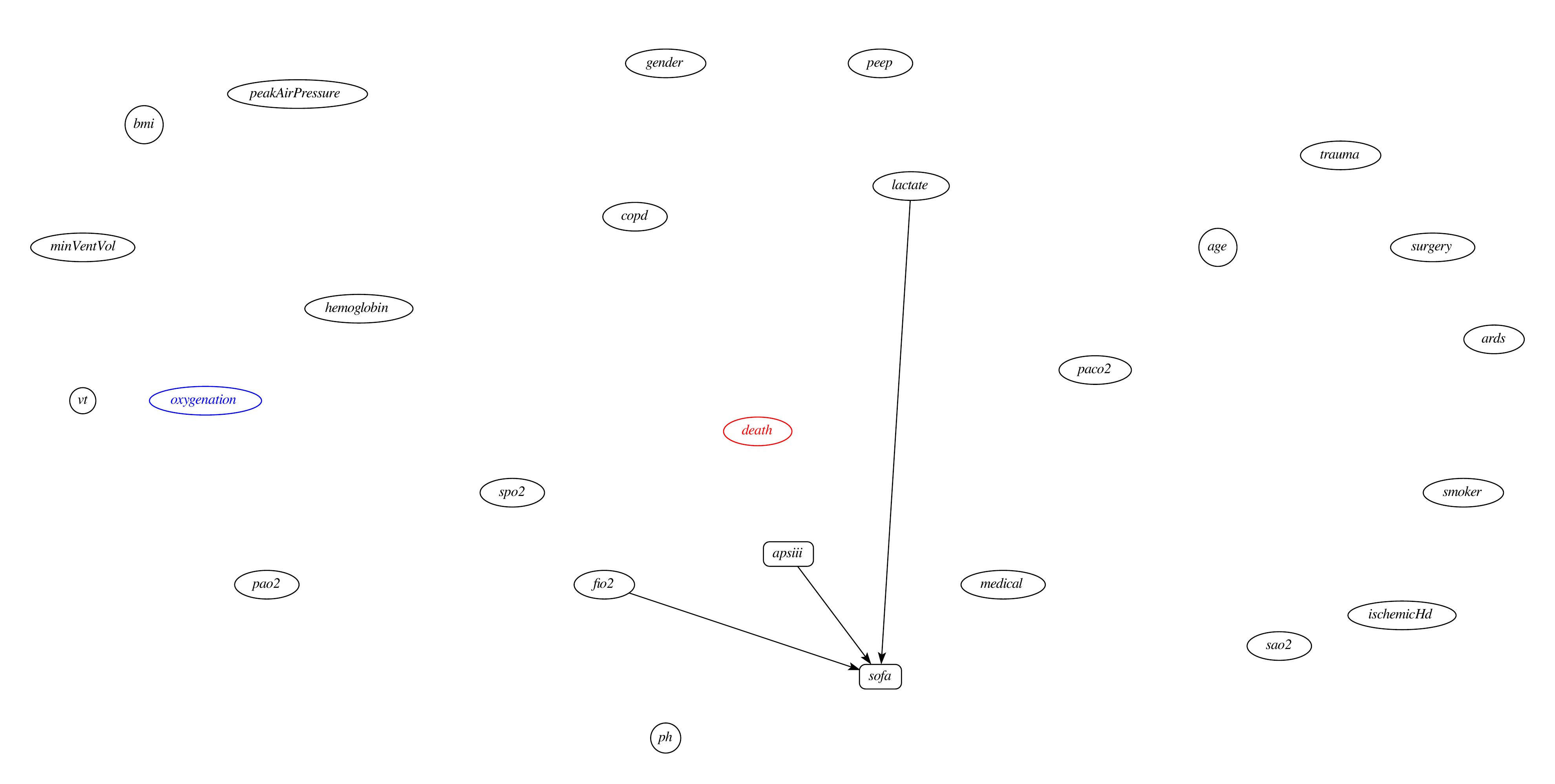}
	\caption{Causal Graph $G_{\overline{U - \{sofa\}}}$.}
	\label{figure25}
\end{figure}

\begin{figure}[h]
	\centering
	\includegraphics[width=0.6\textwidth]{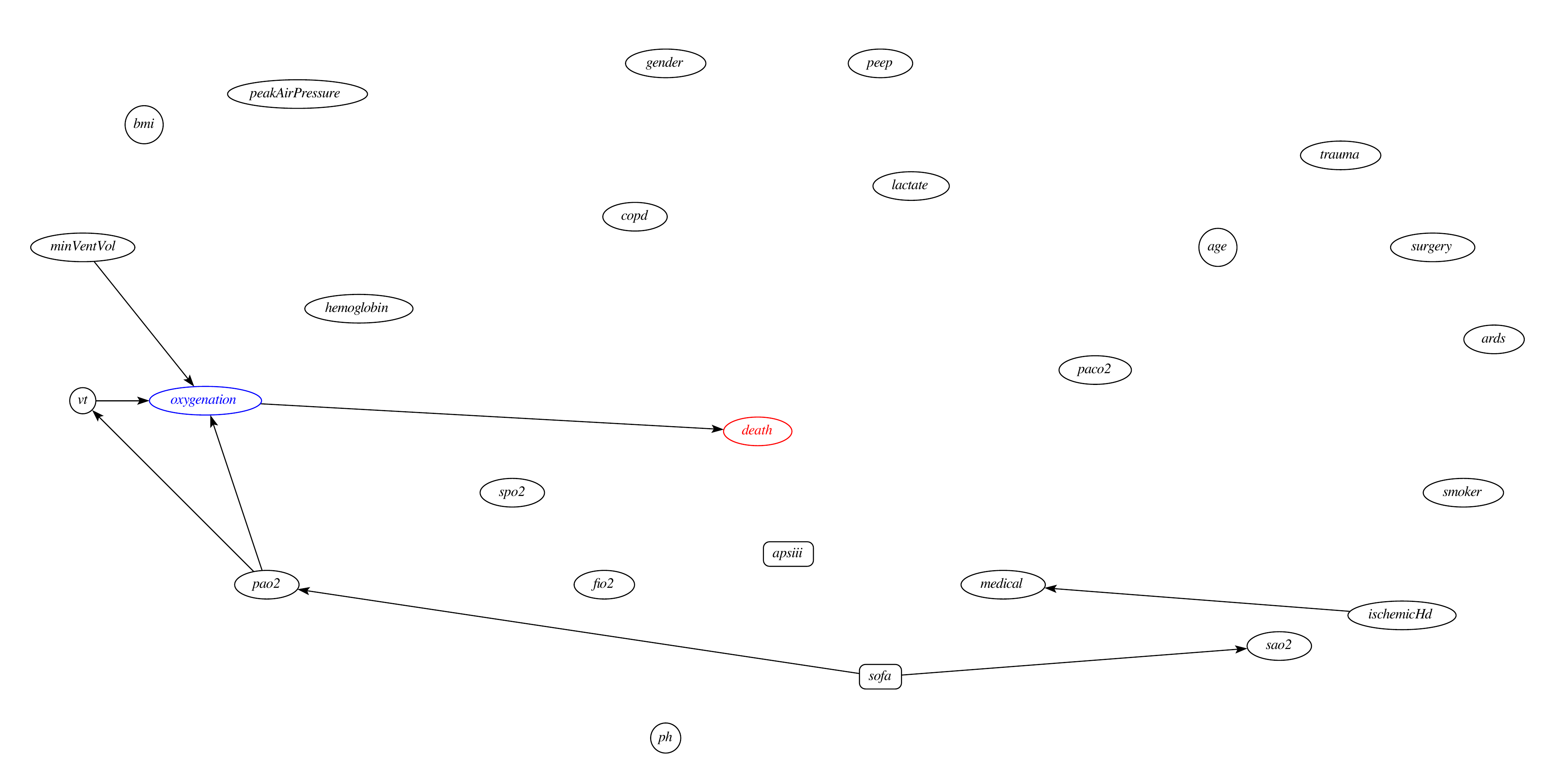}
	\caption{Causal Graph $G_{\overline{}\underline{G}}$.}
	\label{figure26}
\end{figure}
\end{appendices}

\end{document}